\documentclass[11pt]{article}
\usepackage[margin=1in]{geometry}

\usepackage{graphicx}
\usepackage{booktabs}
\usepackage{amsfonts}
\usepackage{amsmath}
\usepackage{amsfonts}
\usepackage{amsthm}
\usepackage{eqparbox}

\usepackage{wrapfig}
\usepackage{rotating}
\usepackage{makecell}
\usepackage{thmtools,thm-restate}
\usepackage{hyperref}
\usepackage{xurl}
\usepackage{caption}
\newcommand{\Rset}{{\mathbb R}}
\newcommand{\indicator}[1]{ {\mathbb I}[ #1 ]  }
\newcommand{\expected}[1]{ {\mathbb E}[ #1 ]  }
\newcommand{\sibling}[1]{ \emph{sibling}(#1)}

\newcommand{\argmax}{\operatornamewithlimits{\mbox{\it arg\,max}}}

\renewcommand{\max}{\operatornamewithlimits{\mbox{\it max}}}
\renewcommand{\min}{\operatornamewithlimits{\mbox{\it min}}}

\newcommand{\ccwif}{ {\tt WIF}\  }
\newcommand{\ccthen}{ {\tt THEN}\  }
\newcommand{\ccelse}{ {\tt ELSE}\  }
\newcommand{\cccombine}{ {\tt COMBINE}\  }
\newcommand{\ccwith}{ {\tt WITH}\  }
\newcommand{\ccforall}{ {\tt FORALL}\  }

\newcommand{\ccsoftmax}{ {\tt SOFTMAX}\  }
\newcommand{\ccsum}{ {\tt SUM}\  }
\newcommand{\cclreg}{ {\tt LOG-REG }\  }

\newcommand{\boldv}{{\boldsymbol v}}
\newcommand{\boldi}{{\boldsymbol i}}
\newcommand{\boldr}{{\boldsymbol r}}

\newtheorem{proposition}{Proposition}

\newtheorem{definition}{Definition}

\newcommand{\arity}[1]{\emph{arity}(#1)}

\newcommand{\leftms}{ \{\hspace{-2pt}|  }
\newcommand{\rightms}{ |\hspace{-2pt}\}  }

\usepackage[normalem]{ulem}
\usepackage{mwe}
\usepackage{mathtools}
\usepackage{amsmath}
\usepackage{multirow}
\usepackage{xcolor, colortbl}
\definecolor{gray}{rgb}{0.85,0.85,0.85}
\usepackage{algorithm}
\usepackage{algpseudocode}
\usepackage[beginLComment=/*~,endLComment=~*/]{algpseudocodex}

\usepackage{listings}
\usepackage[T1]{fontenc}
\newcommand{\new}[1]{{#1} }

\newcommand{\Datoms}{D}
\newcommand{\MAPatoms}{M}
\newcommand{\Oatoms}{O}

\begin{document}

\title{\textbf{A Neuro-Symbolic Approach for Probabilistic Reasoning on Graph Data}}

\author{
  Raffaele Pojer$^1$\quad
  Andrea Passerini$^2$\quad
  Kim G. Larsen$^1$\quad
  Manfred Jaeger$^1$ \\
  \small $^1$Aalborg University, Aalborg, Denmark \quad
  \small $^2$University of Trento, Trento, Italy \\
  \texttt{\small \{rafpoj, kgl, jaeger\}@cs.aau.dk}, \texttt{\small andrea.passerini@unitn.it}
}

\date{}
\maketitle
\begin{abstract}
{\bf Background:}
Graph neural networks (GNNs) excel at predictive tasks on graph-structured data. However, their inability to incorporate (symbolic) domain knowledge and their specialized discriminative functionality limit their applicability in more general learning and reasoning applications. Relational Bayesian Networks (RBNs), on the other hand, enable the construction of fully generative probabilistic  models over graph-like structures that can incorporate high-level symbolic knowledge and support a wide range of probabilistic inference tasks.

{\bf Objectives:}
Our aim is to develop a neuro-symbolic framework for learning and reasoning with graph data that combines the learning capabilities and predictive power of GNNs with the flexible modeling and reasoning capabilities of RBNs, opening new types of application areas and reasoning tasks in the field of graph learning. 

{\bf Methods:}
We develop an approach in which GNNs are integrated seamlessly as components into an RBN model, and present two specific implementations of the general approach: one in which GNNs are compiled into the native RBN language, and one in which the GNN is maintained as an external component. In both versions, the integration, on the one hand, faithfully maintains semantics and computational properties of the GNN, and, on the other hand, fully aligns with the existing RBN modeling paradigm. We present a method for maximum a-posteriori (MAP) probabilistic inference on our neuro-symbolic GNN-RBN models.   

{\bf Results:}
We demonstrate the range of possible applications of MAP inference in our neuro-symbolic framework by showing how it lets us solve two very different problems: in the first application we show how the integrated framework allows us to turn a GNN for node classification into a collective classification model that explicitly takes homo- or heterophilic label distribution patterns into account, thereby greatly increasing the accuracy of the base GNN model. In the second application, we introduce a new type of multi-objective network optimization task in an environmental planning scenario, and show how MAP inference in our framework provides decision support in such settings. For both applications, we introduce new publicly available benchmark data.

{\bf Conclusions:}
We introduce a new powerful and coherent neuro-symbolic framework for handling graph data, and demonstrate its applicability on two very different tasks.
\end{abstract}

\maketitle

\section{Introduction}

Learning with graph and network data is an area in machine learning that has seen explosive growth in recent years. This growth is fueled by an increasing amount of graph-structured data (generated, e.g., by social, sensor, or traffic networks) on the one hand, and the success of graph neural networks (GNNs)~\cite{ZHOU202057} for graph-related machine learning problems on the other hand.
In most cases, GNNs are used to solve specific node classification, link prediction, or graph classification tasks, and are trained in an end-to-end manner with a loss function customized for this task. 
Alternatively, GNNs can also be trained in an unsupervised manner to obtain representations that capture the graph structure. They can then be used for a variety of downstream applications \cite{kipf2016variational,HamYinLes17}, or as graph generators \cite{simonovsky2018graphvae,liu2019graph}.
In all cases, the construction of the GNN is almost entirely data-driven. Prior knowledge or known constraints on the solution can only be injected into the learning process by a careful design of the model architecture and the loss function. As in other deep learning approaches, the resulting model is then represented entirely by high-dimensional weight matrices, leading to the characteristic black box nature
of neural network models. Furthermore, predictive inference in the resulting models consists of numerical computations that do not naturally capture high-level symbolic reasoning.

\emph{Neuro-symbolic integration} tries to overcome these general limitations of neural network models by combining neural architectures with symbolic reasoning
components \cite{yu2023survey}. In most cases, this integration is seen as one of combining low-level ``perceptual'' tasks best handled by a neural component, with one for performing high-level symbolic reasoning operations. Moreover, in many cases, neuro-symbolic frameworks are designed to solve specific, standard machine learning tasks. Thus, \cite{yu2023survey}, for example, formalizes neuro-symbolic integration in terms of an underlying prediction problem.

In the field of \emph{statistical relational learning (SRL)}~\cite{getoor2007introduction}, a multitude of frameworks have been developed that combine
probabilistic and logic-based models and inference. Already bridging the gap between symbolic and numeric representations, but
lacking the computational efficiency of neural approaches, SRL
frameworks can provide useful components for neuro-symbolic systems.  A natural connection between logic-symbolic knowledge
and graph-structured data, as clearly embodied by knowledge graphs, suggests a promising role for GNNs as the neural component in
neuro-symbolic integration \cite{ijcai2020p679}. However, only a relatively small number of works seem to have considered
neuro-symbolic systems built from SRL and GNN components \cite{qu2019gmnn,zhangefficient}.
SRL frameworks, unlike neural network models, support generative probabilistic models and general probabilistic inference:
rather than being tailored to one specific prediction task at a time, a single model can be applied to a wide range of
reasoning tasks that can be framed in terms of conditional probability queries.

\emph{Relational Bayesian networks (RBNs)} are an SRL framework with the full expressive power of relational first-order logic, and the
general probabilistic inference capabilities of Bayesian networks \cite{rbn1997}. A tight connection between GNNs and RBNs was
first pointed out in \cite{jaegerAIB22}, where it was shown that a large class of GNNs can be directly represented as RBNs.
In this paper, we present a framework for a natural integration between RBNs and GNNs. It is designed to enable a wide range
of learning and reasoning tasks for graph and network domains, offering a
seamless integration of expert-defined symbolic knowledge and data-driven neural ``black box'' components, and support for general probabilistic inference.

In particular, in this paper, we
\begin{itemize}
\new{
\item develop a tight integration between GNNs and RBNs, based on two key insights:
  \begin{itemize}
  \item Semantic compatibility: functions computed by GNNs are semantically equivalent to conditional probability
    factors that are the building blocks of RBN models.
  \item Syntactic representability: GNN functions can be directly encoded in the native RBN representation language.  
  \end{itemize}
\item provide two concrete implementations of this integration within the \emph{Primula} RBN software:
  by compilation into native RBN code and by interfacing to an external GNN tool, 
  }
\item develop a method for \emph{maximum a-posteriori (MAP)} inference on our GNN-RBN models,
\item demonstrate the usefulness and versatility of the resulting framework by applications to two very different graph learning and
  inference tasks:
  \begin{itemize}
  \item the standard machine learning node classification task under homophilic and heterophilic label distributions,
  \item a novel type of multi-objective optimization tasks representing planning problems in network domains.
\end{itemize}

\end{itemize}

The basic principles of our GNN-RBN integration were first developed in \cite{jaegerAIB22,pojer2024generalized}. The integrated
software is described in \cite{primulatool}. This paper presents a consolidated, complete description of the whole framework, articulates our
first key
insight more formally as Proposition \ref{prop:reprequiv}, gives a full account of MAP inference in our framework, and introduces the novel
 multi-objective graph optimization task. 

Both data and code for the applications are publicly available at \url{https://github.com/raffaelepojer/NeSy-for-graph-data}.
Primula, the RBN framework used in this paper, is available at \url{https://github.com/manfred-jaeger-aalborg/primula3}.

\section{Related Works}
Neuro-symbolic (NeSy) integration~\cite{hitz2022,garcez2022neural} is a
growing field aimed at combining the predictive power of (deep) neural
networks with the reasoning capabilities of symbolic AI. A common
design principle consists in combining neural
predictors with logic-based reasoning components~\cite{ijcai2020p688}, leveraging
fuzzy~\cite{badreddine2022ltn,marra2021neural,pryor-nepsl} or
probabilistic~\cite{manhaeve2021neural,ahmed2022semantic,vanKrieken2023anesi}
logic to obtain end-to-end differentiable architectures.
Several approaches for combining deep learning with logic-based SRL frameworks have
been proposed~\cite{ManEtAl18,vsourek2021beyond,pryor-nepsl}. The \emph{Deep ProbLog} framework
of \cite{ManEtAl18} shares with our approach the principle of using neural network outputs
(not specifically GNNs) as inputs to a probabilistic SRL model.
The focus of \cite{sourek2018lifted,vsourek2021beyond}, on the other hand, is on using logic-based, symbolic representations as templates for the construction of neural networks. Like we do in our work, the authors of \cite{sourek2018lifted,vsourek2021beyond} establish an equivalence between a model represented in a probabilistic-logic framework and a neural network model. However, their equivalence is obtained by compiling a logic model into a neural representation, whereas our work takes the opposite direction. Moreover, their models, like standard neural networks, are designed for special-purpose predictive tasks.

Systems that combine neural networks with logic-symbolic knowledge representation include NeurASP~\cite{yang2021neurasp}, which integrates Answer Set Programming (ASP) with neural perception modules, allowing declarative logic rules to guide learning and inference.
All of these frameworks typically do not define a fully generative
probabilistic model, which limits their inference capabilities. As a
result, they have primarily been used for standard supervised learning
tasks. 
The recently introduced  SLASH~\cite{skryagin2023scalable} system introduces a unifying framework that combines neural, logical (also based on ASP), and tractable probabilistic modules. It can also support probabilistic queries beyond conditional class probabilities given input features. 
The key distinction between SLASH and earlier approaches like NeurASP and DeepProbLog is its ability to connect neural predicates not only to neural networks but also to probabilistic circuits.
\new{NeuPSL \cite{pryor-nepsl} shares with the already mentioned approaches the principle of integrating a distinguished set of \emph{neural predicates} into logic-symbolic rules or constraints. Unlike all other approaches, these rules and constraints are interpreted under many-valued \L ukasiewicz semantics.}
In terms of neural-symbolic integration, systems like \cite{skryagin2023scalable, yang2021neurasp, ManEtAl18,pryor-nepsl} follow a strict division of labor, where neural components are responsible for low-level perception tasks and the symbolic logic handles high-level reasoning. 
In contrast, our approach does not impose such a fixed separation and allows for more flexible and integrated interactions between neural and logical components.

Driven by the success of deep generative models for images and text, a
recent trend in the NeSy community targets the application of
neuro-symbolic ideas to generative tasks. Examples include
stroke-based drawing~\cite{Liang22} constrained image
generation~\cite{can2020,misino2022vael}, symbolic
regression~\cite{pmlr-v202-bendinelli23a} and dialogue structure
induction~\cite{pryor-etal-2023-using}. These powerful frameworks are
however typically tailored to instance generation and do not support
arbitrary inference tasks.

It is commonly assumed that due to the smoothing effect of the message-passing operations, GNNs for node classification perform better in the case of homophilic than heterophilic label distributions \cite{bodnar2022neural,luan2024heterophilic}. However, as pointed out in \cite{qu2019gmnn}, GNNs predict labels for different nodes independently, which generally limits their ability to capture label dependencies. To address this, \cite{qu2019gmnn} introduced GMNN, a method that combines standard GNNs with a customized additional GNN structure that captures label dependencies and is inspired by how SRL models based on Markov random fields handle such dependencies. In contrast to our approach, the work of \cite{qu2019gmnn} is highly specialized towards exploiting homophily for node classification and is not a general neuro-symbolic integration.  

All these existing approaches have in common that symbolic knowledge or logical constraints are specified prior to learning and result in a model that is trained in an end-to-end manner for a specific purpose, which is partly defined by the constraints. 
While our GNN-RBN integration also supports such task-specific end-to-end optimization solutions, our focus in this paper is somewhat different: we explore how embeddings of GNNs into a probabilistic-symbolic framework allow us to harness their predictive power to solve a wider range of reasoning tasks.

The connection between GNNs and neuro-symbolic integration has already
been highlighted in the recent past~\cite{ijcai2020p679}. 
More recently, \cite{zhangefficient} proposed ExpressGNN, a framework that integrates Markov Logic Networks with Graph Neural Networks.
This method is tailored to triplet completion over knowledge graph structures and does not support more general forms of structured domains or fully generative modeling.

\new{
\begin{table}[h]
\centering
\caption{\new{Comparison of neuro-symbolic frameworks. \checkmark:\ fully satisfied, $\circ$: partially satisfied, \texttimes:\ not satisfied.}}
\label{tab:nesy-comp}
\resizebox{\textwidth}{!}{%
\begin{tabular}{clccccc}
\toprule
 & \textbf{Property}
    & \parbox{20mm}{\textbf{DeepProbLog} \\
    \textbf{NeurASP}}
     & \textbf{NeuPSL}
    & \textbf{GMNN}
    & \textbf{SLASH}
    & \textbf{Ours} \\
\midrule
\multirow{3}{1.5cm}{
\parbox{1.1cm}{\emph{Model Semantics}}
} 
&
Fully generative probabilistic model
    & $\circ$  & \texttimes & \texttimes & $\circ$ & \checkmark \\
& First-order logic expressivity
    & $\circ$    & $\circ$ & \texttimes & $\circ$ & \checkmark \\
& Transitive closure expressivity
& \checkmark &\texttimes &\texttimes  & \checkmark & \texttimes\\
\midrule
\multirow{3}{1.5cm}{
\parbox{1.1cm}{\emph{NeSy \\ integration}}   
} 
& GNN support
    & \texttimes & \texttimes & \texttimes & \texttimes & \checkmark \\
& CNN support
    & \checkmark & \checkmark & \checkmark & \checkmark & $\circ$  \\
&
Arbitrary neural/symbolic interleaving
    & \texttimes & \texttimes & \texttimes & \texttimes & \checkmark \\
\midrule
\multirow{3}{1.5cm}{
\parbox{1.1cm}{\emph{Query support}}
} 
&  
Predictive inference
    & \checkmark & \checkmark & \checkmark & \checkmark & \checkmark \\
&
MAP inference
    & \texttimes & \texttimes & \texttimes & \texttimes & \checkmark \\
&
General conditional probabilities
    & \texttimes & \texttimes & \texttimes& $\circ$ & \checkmark \\
\midrule
\multirow{5}{1.5cm}{
\parbox{1.1cm}{\emph{Key computations}}   
}
& Gradient descent & \checkmark & \checkmark & \checkmark & \checkmark & \checkmark \\
& 
Logic solvers & \checkmark & \texttimes & \texttimes & \checkmark & \texttimes\\
& Weighted model counting 
& \checkmark & \texttimes & \texttimes & \checkmark & \texttimes\\
& 
MCMC sampling & \texttimes & \texttimes & \checkmark & \texttimes &  \checkmark\\
& 
Convex optimization & \texttimes & \checkmark & \texttimes &  \texttimes & \texttimes\\
\bottomrule
\end{tabular}}
\end{table}
}

\new{Table \ref{tab:nesy-comp} provides a comparison between our RBN-GNN framework and other NeSy frameworks that are most closely
  related to ours. We compare on the basis of four different groups of properties:  \emph{Model Semantics} properties 
  describe semantic and expressivity aspects of the models. \emph{Fully generative} refers to the capability to model full joint distributions, not only
  conditional label distributions, thus providing a semantic basis (but not necessarily algorithmic solutions) to sample graphs, and to
  compute arbitrary conditional probabilities.
  \emph{First-order logic} expressivity is the ability to model any properties expressible in first-order
  predicate logic. \emph{Transitive closure} expressivity is the ability to model the transitive closure of a relation. This is a
  characteristic strength of logic programming based approaches due to the fixpoint (or stable model) semantics of the symbolic components.
  Under \emph{NeSy integration} we list key elements of the interface between the neural and symbolic parts. Here we distinguish whether
  the integration is with graph neural or with standard (convolutional) neural networks. \emph{Arbitrary neural/symbolic interleaving} refers
  to the support for iterating input/output relationships between neural and symbolic components arbitrarily often, and in arbitrary order.
  Under \emph{Query support} we list what types of inference tasks are supported. \emph{Predictive inference} covers standard
  node and graph classification and link prediction tasks. \emph{MAP inference} is the computation of the most probable joint configuration
  of unobserved relations, as more fully defined in Section \ref{sec:likgraph} below. \emph{General conditional probabilities} means that
  arbitrary conditional probability queries are supported, without a predefined division into input (conditioning) and output variables.
  Finally, we also consider what kind of fundamental computational tasks are performed for learning and inference in the frameworks.
  \emph{Gradient descent} is a staple used by all approaches. 
  An important distinction arises in whether the symbolic component of a framework has a strictly logical semantics that is handled
  by dedicated \emph{logic solvers},  or whether the logical expressivity arises as a corner case of probabilistic or fuzzy semantics, and is
  also handled by numerical computations, which,  besides gradient descent, include \emph{sampling} or \emph{convex optimization}
  operations. 
  }

\section{GNN-RBN Integration}

\subsection{Preliminaries}
\label{sec:preliminaries}

We use the following concepts and notation taken from logic and here adapted to graphs:
a \emph{relation} $R$ is defined by a \emph{name}, an arity $\arity{R}\in 0,1,2,\ldots$ and a \emph{value range} that can
be Boolean, categorical, or numerical.  
A relation of arity 1 is also referred to as a (node) \emph{attribute}, a relation of arity 2 as an \emph{edge relation}, and a relation of arity 0 as a \emph{graph property}. A \emph{signature} ${\mathcal R}=\{R_1,\ldots,R_M\}$ is a set of relations.
For example, graphs describing chemical molecules can be defined over a signature containing an edge relation \emph{Bond} with
values $\{0,1\}$, an attribute $\emph{Element}\in \{ \mbox{H},\mbox{He},\ldots,\mbox{Og}\}$, and a graph property
$\emph{Toxic}\in \{\emph{true},\emph{false}\}$.
\new{We use $u,v,\ldots$ to denote variables ranging over nodes of a graph. Specific nodes are referenced by \emph{node identifiers}, which
  can either be meaningful names or generic integer indices denoted $i,j, \ldots$.
  Bold fonts $\boldv,\boldi,\ldots$ represent tuples of
variables or identifiers.}
An \emph{atom} is an expression composed of a relation name and node variables or node identifiers corresponding to the arity of the relation as arguments. An atom is \emph{ground} if all its arguments are node identifiers. For example $\emph{Bond}(v,u), \emph{Element}(v)$ are atoms,
$\emph{Bond}(\emph{atom\_12},\emph{atom\_7})$, $\emph{Element}(\emph{atom\_7})$ are ground atoms.

\new{A \emph{graph} over the signature  ${\mathcal R}$ is a pair $(V,\boldr)$ where $\boldr=(\boldr_1,\ldots,\boldr_M)$
  maps all ground atoms $R(\boldi)$
  constructible from $R\in {\mathcal R}$ and identifiers $\boldi$ for nodes in $V$ to a possible value of $R$. Thus, a graph in this
  sense can in fact be an attributed, multi-relational hyper-graph, if ${\mathcal R}$ contains node attributes, multiple edge relations,
  or relations of arity $>2$, respectively. We use ${\mathcal G}(V,{\mathcal R})$ to denote the
set of all graphs over ${\mathcal R}$ with vertex set $V$. }

We use uppercase letters (or short strings) $A, B, U, LH, \ldots$  to denote random variables, and corresponding
lowercase letters $a, b, u, lh, \ldots$ as generic symbols for specific values they can take. All our random variables will correspond to
ground atoms, and in the context of speaking about random variables, we often use ``atom'' as short for ``ground atom''.
An equation of the form $\boldsymbol{A}=\boldsymbol{a}$ states an element-wise equality
between the components of the tuples $\boldsymbol{A}$ and $\boldsymbol{a}$.

\subsection{Relational Bayesian Networks}

\begin{figure}
  \begin{minipage}{0.38\textwidth}
    \begin{center}
    \includegraphics[scale=0.2]{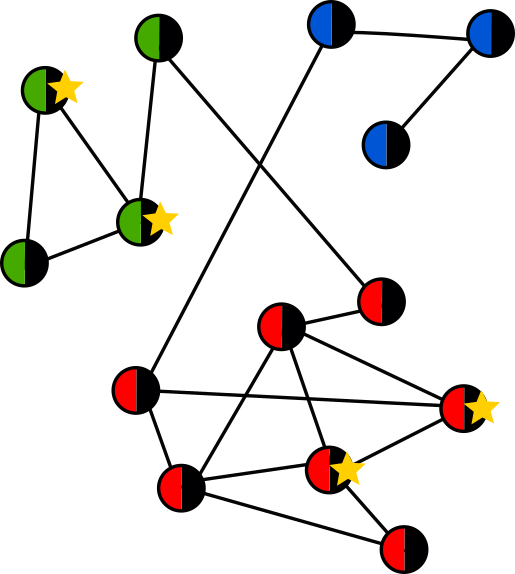}

    {\bf (A)}
    \vspace{3mm}
    
 \includegraphics[scale=0.4]{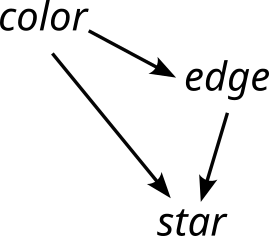}

    {\bf (B)}

    \end{center}
  \end{minipage}
  \hspace{\fill}
  \begin{minipage}{0.6\textwidth}
    
      $\emph{color}(v)$ := \ccsoftmax 5.1, 3.8, 3.4;
                    \vspace{1mm}
                    
      \begin{tabbing}
        $\emph{edge}(v,w)$ := \=  \ccwif $\emph{color}(v)=\emph{color}(w)$\\
	    	      \>	\ccthen 0.3\\
	    	      \>	\ccelse 0.04;
                    \end{tabbing}
                    \vspace{1mm}
                    
       \begin{tabbing}
                      $F_{\emph{triangle}}(v,w,u)$ := \= $ v\neq w \wedge v\neq u \wedge w\neq u $\\
                      \> $\emph{edge}(v,w) \wedge \emph{edge}(v,u) \wedge \emph{edge}(w,u)$;
                    \end{tabbing}
                    \vspace{1mm}

        \begin{tabbing}
          $\emph{star}(v)$ := \= \cccombine \= $ 0.3 \cdot \emph{color}(v) = \emph{red}$, \\
          \>\> $ -1.2 \cdot \emph{color}(v) = \emph{blue}$, \\
          \>\> $ 0.7 \cdot \emph{color}(v) = \emph{green}$, \\
          \>\> $ 0.5 \cdot$\=$\cccombine F_{\emph{triangle}}(v,w,u)$ \\
          \>\>\> \ccwith \ccsum \\
          \>\>\> \ccforall $w,u$ \\
          \> \ccwith \cclreg ;
        \end{tabbing}
                    \vspace{1mm}

        \begin{center}
          {\bf (C)}
        \end{center}
  \end{minipage}
  \caption{Elements of RBN representations. (A): a graph with categorical node attribute $\emph{color}\in\{\emph{red},\emph{green},\emph{blue}\}$ and Boolean node attribute $\emph{star}\in\{\emph{true},\emph{false}\}$. (B): directed acyclic attribute/edge dependency graph. (C): generative probabilistic RBN model. Language keywords in {\tt typewriter} font; user defined, domain specific names in \emph{italics}. 
  \label{fig:rbnillus}}
\end{figure}

\textit{Relational Bayesian Networks} (RBNs) \cite{rbn1997} are a generalization of classical Bayesian networks to relational domains. 
\new{A complete and fully formal introduction of the original framework is given in \cite{jaeger2001complex}. In the following we give a self-contained summary that provides all necessary details for what follows in this paper, introduces the support for categorical variables not provided in the original RBN formulation, but omits the details about the more powerful class of  "recursive" RBNs. 
}

\new{
An RBN is a generative probabilistic graph model that for a given signature ${\mathcal R}$ maps any finite set of vertices $V$ to a probability distribution $P$ over ${\mathcal G}(V,{\mathcal R})$.  $V$ being fixed, this amounts to a distribution over all possible
$\boldr=(\boldr_1,\ldots,\boldr_M)$, which we can also interpret as the values of $R_1,\ldots,R_M$, now seen as random variables. }
The model is defined by first arranging the relations $R_i$ in a directed acyclic graph defining the probabilistic dependencies, and then defining conditional probabilistic models for each relation given their parents in the dependency graph, \new{thus defining a joint distribution
according to the usual factorization
\begin{equation}
  \label{eq:factorization}
  P(R_1,\ldots,R_M)=\prod_{h=1}^M P(R_h|\emph{Pa}(R_h))
\end{equation}
with $\emph{Pa}(R_h)$ the parents of $R_h$ in the dependency graph.
\footnote{"Recursive" RBNs do not require an acyclic dependency graph at the relation level; it is sufficient that dependencies are acyclic at the ground atom level, which also enables auto-regressive relation models. We do not make use of such models in this paper, however.}
Generalizing (\ref{eq:factorization}), one can also allow for a designated special set ${\mathcal R}_{\emph{in}}$ of
\emph{input relations} that are assumed to be fixed (together with $V$), and that we can condition on for the probabilistic
models of ${\mathcal R}$, which then takes the form
\begin{equation}
  \label{eq:cond-factorization}
  P(R_1,\ldots,R_M|{\mathcal R}_{\emph{in}})=\prod_{h=1}^M P(R_h|\emph{Pa}(R_h),{\mathcal R}_{\emph{in}}).
\end{equation}
}
\new{
RBNs support model specifications of the form (\ref{eq:cond-factorization}) for the case where the $R_i\in{\mathcal R}$ are all
Boolean or categorical, whereas the relations in ${\mathcal R}_{\emph{in}}$ can also be numerical. Each conditional distribution is
further factorized as a product of conditional probabilities for ground atoms:
\begin{equation}
  \label{eq:atom-factorization}
  P(R_h|\emph{Pa}(R_h),{\mathcal R}_{\emph{in}})=\prod_{\boldi\in V^{\arity{R_h}}} P(R_h(\boldi)|\emph{Pa}(R_h),{\mathcal R}_{\emph{in}}).
\end{equation}
}

\new{RBNs define the conditional ground atom probabilities $P(R_i(\boldi)|\emph{Pa}(R_i),{\mathcal R}_{\emph{in}})$ in the formal language of \emph{probability formulas}.
  A probability formula is an expression $F(\boldv)$ in the formal language we describe in the following. It contains variables
  $\boldv=v_1,\ldots,v_k$ ranging over nodes.  For a specific tuple of nodes $\boldi=i_1,\ldots,i_k$ and given relational
  data $\boldr$ it evaluates to
  \begin{equation}
    \label{eq:probformeval}
    F[\boldr](\boldi)\in [0,1] \cup \bigcup_{l\geq 2}[0,1]^l \cup \Rset
  \end{equation}
  where the slightly redundant expression on the right highlights that the return value can be either a
  Bernoulli probability, a probability distribution for an $l$-state categorical variable, or an arbitrary
  real (without direct probabilistic semantics). 
A $k$-ary relation $R_i$ then is modeled by a probability formula $F$ with $k$ variables so that
  \begin{equation}
    \label{eq:probformforrel}
     P(R_i(\boldi)|(\emph{Pa}(R_i),{\mathcal R}_{\emph{in}})=\boldr) = F[\boldr](\boldi),
   \end{equation}
   and the return value of $F$ is a Bernoulli parameter if $R_i$ is Boolean, and a probability vector if $R_i$ is categorical.
 }

 \new{
   Figure~\ref{fig:rbnillus} shows an example that illustrates the construction and use of probability formulas.
   The underlying signature ${\mathcal R}$
here contains attributes \emph{color} and \emph{star}, and a single edge relation \emph{edge}. Figure~\ref{fig:rbnillus} (A) shows an
example graph with 14 nodes. The RBN model will define a probability value for this graph, within the sample spaces of all
14 node graphs for ${\mathcal R}$. The model is based on the factorization (\ref{eq:factorization}) defined by the graph in
Figure~\ref{fig:rbnillus} (B). Figure~\ref{fig:rbnillus} (C) shows probability formulas defining the conditional probabilities
$ P(R_h(\boldi)|\emph{Pa}(R_h))$. They are built using the following constructs:
}
\new{
\paragraph{Base formulas:} the language has three types of primitives: \emph{numerical constants},  \emph{atoms},
and \emph{equalities}. An atom
appearing in the formula defining $P(R_h|\emph{Pa}(R_h),{\mathcal R}_{\emph{in}})$ must belong to $\emph{Pa}(R_h)\cup{\mathcal R}_{\emph{in}}$,
so that its  value is given by the evaluation $\boldr$ of $\emph{Pa}(R_h)\cup{\mathcal R}_{\emph{in}}$. For Boolean atoms, the truth values
\emph{true},\emph{false} are evaluated as 1 and 0, respectively. 
A numeric atom directly evaluates to its real value.
An equality can be of the form $v=w$ for node variables $v,w$, or of the form $r(\boldv)=c$ or $r(\boldv)=r(\boldv')$,
where $r$ is a categorical relation and $c$ one of its possible values. Equalities also return 0 or 1, according to whether it evaluates
to false or true.
}
\new{
  \paragraph{Softmax function: } for any $l\geq 2$: if $f_1,\ldots,f_l$ are probability formulas, then {\tt SOFTMAX} $f_1,\ldots,f_l$ is
  a probability formula that defines a probability distribution for categorical relations with $l$ possible values. The formula defining
  the distribution for the categorical \emph{color} relation in Figure~\ref{fig:rbnillus} illustrates this construct with
  $f_1,f_2,f_3$ all being numerical constants. 
}
\new{
  \paragraph{Wif-then-else construct:} if $f_1,f_2,f_3$ are probability formulas, then ${\tt WIF}\ f_1\ {\tt THEN}\ f_2\ {\tt ELSE}\ f_3$ is
  a probability formula. It evaluates to $v_1v_2+(1-v_1)v_3$, where $v_i$ is the evaluation of $f_i$ ($i=1,2,3$). The Wif-then-else
  construct subsumes logical conjunction, disjunction and negation, which therefore we use freely as syntactic abbreviations in
  the specification of probability formulas. The example in Figure~\ref{fig:rbnillus} illustrates the use of Wif-then-else to
  define the probabilities of \emph{edge} relations between nodes $v,w$
  dependent on whether $v$ and $w$ have the same color, i.e., it defines the graph structure as a simple
  stochastic block model.
  }
  \new{
    \paragraph{Combination functions:} The most powerful RBN construct is the combination function. It subsumes full first-order quantification, and can be seen as a more general form of the message passing operation in GNNs. Syntactically, given arbitrary
    probability formulas $f_0,f_1,\ldots,f_m$, where $f_0$ is Boolean, i.e., always returning 0 or 1, a combination function
    formula is constructed as
    \begin{equation}
      \label{eq:combfunc}
      {\tt COMBINE}\ f_1,\ldots,f_m\ {\tt WITH}\ \langle\emph{operator}\ \rangle {\tt FORALL}\ \boldv\ {\tt WHERE}\ f_0(\boldv), 
    \end{equation}
  where $\langle\emph{operator}\ \rangle$ can be any combination (a.k.a. aggregation) function that takes a multiset of numbers,
  and returns a single number, or, specifically, a probability value. A combination function is evaluated, by evaluating the
  formulas $f_1,\ldots,f_m$ for all node tuples $\boldi$ for which $ f_0(\boldi)$ returns 1, and then applying
  $\langle\emph{operator}\ \rangle$ to the resulting multiset of values. In this paper we make use of two different
  combination operators: {\tt SUM} with the obvious meaning, and {\tt LOG-REG} (logistic regression) defined as
  \begin{displaymath}
    \mbox{\tt LOG-REG}(x_1,\ldots,x_n):=\sigma(\sum_{i=1}^n x_i)
  \end{displaymath}
  with $\sigma$ the sigmoid function (for simplicity we here write the multiset argument as a vector; by definition a combination function ignores the order
  of the argument, i.e., is permutation invariant). 
}
\new{
  The probability formula for the \emph{star} relation in Figure~\ref{fig:rbnillus} contains two nested combination functions. The inner
  function uses a Boolean sub-formula $F_{\emph{triangle}}$ that for nodes $i,j,k$  evaluates to 1 if these nodes form a triangle in the
  given $\emph{edge}$ relation. No condition $f_0$ here is imposed on the $w,u$ that are aggregated over, and the
  {\tt WHERE} clause is omitted. Evaluating this combination function for $v=i$ then returns the number of triangles that node
  $i$ is part of. The sub-formula $F_{\emph{triangle}}$ here is defined and displayed separately only to improve readability.
  This has no semantic implications and is equivalent to replacing the reference to $F_{\emph{triangle}}$ inside
  the $\emph{star}(v)$ formula
  by the right-hand side of the definition of $F_{\emph{triangle}}$. The outer combination function then applies the logistic regression
  function to four formulas without any further aggregation over other nodes (the {\tt FORALL} clause then can be omitted).
  Evaluating the whole formula e.g. for the node marked $i$ in Figure~\ref{fig:rbnillus}(A) returns $\sigma(0.3+0.5\cdot 3)=0.858$. 
}

The full joint model $P(R_1,\ldots,R_M)$ supports all kinds of conditional probability inferences, including predicting node attribute values given attributes of other nodes and a known \emph{edge} relation, or predicting the existence of an edge between two nodes, given observed node attributes and other observed edges.

\new{
As shown in  \cite{rbn1997}, RBNs can express any graph property expressible in first-order logic with equality. This makes RBNs
significantly more expressive than standard GNN architectures, which are limited to the 2-variable fragment of first-order logic
with counting \cite{barcelo2020logical}. The reasons for this added expressivity lie in the availability of  equality
conditions, and the ability to use probability formulas to define numerical features for arbitrary tuples of nodes. 
}

As indicated by the example in Figure~\ref{fig:rbnillus}, RBNs were conceived as being rich in (user-defined) structure, and sparsely parameterized by a relatively small number of interpretable, learnable probabilistic parameters. This is in sharp contrast to the neural representation paradigm, which is based on generic structures (i.e., neural architectures) parameterized with a large number of trainable weights, thus eliminating  the need for a separate and often difficult structure specification or structure learning phase. By integrating GNN model components into RBNs, we can exploit the latter's capabilities to
reduce structure learning (or feature discovery) to weight learning for those model components for which there is more  support from empirical data than from expert knowledge. 

\subsection{Graph Neural Networks}
\label{sec:gnn}
Graph Neural Networks (GNNs) are deep learning models that work on graph-structured data and have gained significant popularity in recent years due to their ability 
to leverage the expressive power of neural networks while learning in complex relational settings. 
At the heart of GNN architectures lies the \textit{message passing} paradigm, a general framework for updating
vector-valued node representations $\mathbf{h}(v)$ based on local neighborhood information.
These updates are performed over a fixed number of iterations, corresponding to GNN \emph{layers}.
\new{
  Node representations are initialized by \emph{input node feature vectors}
\begin{equation}
  \label{eq:h0init}
  \mathbf{h}^{0}(v) = (X_1(v),\ldots,X_l(v))
\end{equation}
where the $X_i$ are numeric or categorical node attributes
(following the conventions in the GNN literature, we here use $X$ to denote node features; this corresponds to a unary relation $R$ in the sense of Section \ref{sec:preliminaries}). }
The core message passing update of  a $d$-dimensional representation at layer $k$ to an
$m$-dimensional representation at layer $k+1$ is defined by
\begin{equation}
    \label{eq:gnn-update}
    \mathbf{h}^{k+1}(v) = \sigma\left( \sum_{u \in \mathcal{N}_v} W \mathbf{h}^{k}(u) \right),
\end{equation}
where $\mathbf{h}^{k}(u) \in \mathbb{R}^d$, W $\in \mathbb{R}^{m \times d}$ is a learnable weight matrix, 
$\mathcal{N}_v$ denotes the graph neighbors of $v$, $\sigma$ is a non-linear activation function such as the sigmoid, 
and $\mathbf{h}^{k+1}(v)\in \mathbb{R}^m$. To simplify our subsequent discussion in Section \ref{sec:GNN2RBN} we have
omitted from (\ref{eq:gnn-update}) an additional  functional dependence of $\mathbf{h}^{k+1}(v)$ on the previous
representation $\mathbf{h}^{k}(v)$ of the node $v$ itself,
which is present in almost all GNN architectures. This omission is for greater clarity only. The techniques we
introduce in the following sections fully accommodate also these slightly more complex update operations.

 In addition to the local neighborhood aggregations of  (\ref{eq:gnn-update}), also \emph{readout} operations can
be included that aggregate the representations of all nodes in the graph, and thereby compute a global representation of the graph from the node representations. 
A general class of GNNs that incorporates all these elements is the \emph{aggregate-combine-readout (ACR)} GNNs described in \cite{barcelo2020logical}.

\new{
The representation vectors $\mathbf{h}^{K}(v)$ obtained at the final layer $K$ are then used for node classification, graph classification, or link prediction tasks. In the case of node classification for a node \emph{Label} attribute, the final representation is turned
into a probability distribution over possible label values by a softmax function:
\begin{equation}
  \label{eq:gnncondprob}
  P(\emph{Label}(i)| X_1,\ldots,X_l,\emph{edge}):=\emph{softmax}(\mathbf{h}^{K}(X_1,\ldots,X_l,\emph{edge})(i))
\end{equation}
(here assuming that the dimension of $\mathbf{h}^{K}(v)$ equals the number of possible label values; alternatively, additional
standard neural network layers can transform $\mathbf{h}^{K}(v)$ into the required size). On the right-hand side of (\ref{eq:gnncondprob}) we make
explicit that  $\mathbf{h}^{K}(i)$ is a function of the input graph defined by $ X_1,\ldots,X_l,\emph{edge}$.
}
\new{
Equation (\ref{eq:gnncondprob}) elucidates that a GNN for node classification exactly computes  factors
$P(R_i(\boldi)|\emph{Pa}(R_i),{\mathcal R}_{\emph{in}})$ of (\ref{eq:atom-factorization}) in the case where
$R_i$ is an attribute, and where $\emph{Pa}(R_i)\cup {\mathcal R}_{\emph{in}} = \{ X_1,\ldots,X_l,\emph{edge}\}$.
Thus, a node classification GNN has the same semantics as a probability formula for a unary relation\footnote{The same
  correspondence holds between graph classification GNNs and probability formulas for 0-ary relations. The connection
  between GNNs for link prediction and binary probability formulas is a bit more complex, however. }. In the following
section we show that such a GNN can actually be directly encoded by a probability formula.
}

\subsection{GNN to RBN Encoding}
\label{sec:GNN2RBN}

The GNN update function (\ref{eq:gnn-update}) 
can be broken down to the computations at the level of individual vector components as follows: 
\begin{equation}
  \label{eq:scalar-mp}
    \mathbf{h}^{k+1}_i(v) = \sigma\left( \sum_{u \in \mathcal{N}_v} \sum_{j=1}^{d} W_{i,j} \cdot \mathbf{h}^{k}_j(u) \right),
\end{equation}
where subscripts $i\in \{1,\ldots, m\}$,   $j\in \{1,\ldots, d\}$ index individual vector and matrix components.

The key insight connecting GNNs with RBNs is that the scalar version (\ref{eq:scalar-mp}) of the message passing update 
can be written as a probability formula:

\begin{equation}
   \label{eq:probform-mp}
\begin{split}
    F_{k+1,i}(v) &:= \texttt{COMBINE } W_{i,1} \cdot F_{k,1}(u),\\
    &\quad\quad\quad\quad\quad \ldots \\
    &\quad\quad\quad\quad\quad W_{i,d} \cdot F_{k,d}(u)\\
    &\quad \texttt{WITH LOG-REG} \\ 
    &\quad \texttt{FORALL } u \\
    &\quad \texttt{WHERE } \emph{neighbor}(v,u).
\end{split}
\end{equation}
where  $F_{k,i}$ denotes the formula that computes $\mathbf{h}^{k}_i$. Note the dependence of $F_{k+1,\cdot}$ on $F_{k,\cdot}$, and that (\ref{eq:probform-mp})
thereby actually consists of a nested combination function construct.
Using  formulas $F_{k,i}$   as the main building blocks, a complete multi-layer
message passing operation can be
encoded as a probability formula. The same basic construct as shown in  (\ref{eq:probform-mp}) can also capture the additional
GNN elements mentioned above: a separate self-dependence of the update for $v$, and readout operations.
Finally, given the probability formulas computing the components of the final representation vectors
$\mathbf{h}^{K}(i)$, the application of the softmax in  (\ref{eq:gnncondprob}) is equivalent to the probability formula
\begin{equation}
  \label{eq:gnn-probform}
  \ccsoftmax  F_{K,1}(v),\ldots,F_{K,d}(v).
\end{equation}

This GNN-to-RBN compilation covers all GNN models in the ACR class of \cite{barcelo2020logical}.~\footnote{See Appendix \ref{sec:proof3.1} for a complete specification of the ACR class. Our current implementation only covers the case of sigmoid activation functions, but this is not a fundamental limitation.}

\new{
  \begin{restatable}{proposition}{reprequiv}
    \label{prop:reprequiv}
    Let $\mathcal{N}$ be any GNN in the class ACR whose input consists of node feature vectors $(X_1,\ldots,X_l)$ and an
    \emph{edge} relation, and whose output is the result of applying a \emph{softmax}
    function to the node representation computed by the final message passing layer. There exists a {\tt SOFTMAX}
    probability formula $F(v)$, such that for all graphs $(V,\boldr)$ over the signature
    $\mathcal{R}=\{X_1,\ldots,X_l,\emph{edge}\}$ and all nodes $i\in V$:
    \begin{equation}
      \label{eq:semequivalence}
      \emph{softmax}(\mathbf{h}^{K}(\boldr)(i)) = F[\boldr](i)
    \end{equation}
  \end{restatable}
  }

  The proof of this proposition is given in Appendix \ref{sec:proof3.1}.
  Proposition \ref{prop:reprequiv} establishes the semantic equivalence between a GNN and its probability formula
  encoding. 
The following proposition also shows that the learning objectives of the GNN and its RBN encoding are equivalent.

\begin{proposition}
  \label{theo:consistency}
  Let {\tt G} be a GNN for predicting a node attribute \emph{target} given node attributes $X_1,\ldots,X_l$ and edge relation
  \emph{edge}. Let $\boldsymbol{W}$ denote the weights of {\tt G}. Let {\tt R} be an RBN for a  signature
  $\mathcal{R}\supseteq \{X_1,\ldots,X_l,\emph{target},\emph{edge}\}$, in whose relational dependency graph
  $X_1,\ldots,X_l,\emph{edge}$ are the parents of
  $\emph{target}$, and in which the conditional distribution of \emph{target} is defined by (\ref{eq:gnn-probform}).
  Let $(\boldsymbol{U},\boldsymbol{W})$ denote the parameters of\ {\tt R}.

  Let $G_1,\ldots,G_N$ be a set of graphs over $\mathcal{R}$, and $\bar{G}_1,\ldots,\bar{G}_N$
  their reductions to $\{X_1,\ldots,X_l,\emph{target},\emph{edge}\}$.
  Then for a concrete setting $\boldsymbol{W}^*$ of $\boldsymbol{W}$ the following are equivalent: 
  \begin{description}
  \item[A.]  $\boldsymbol{W}^*$ minimizes cross-entropy loss of {\tt G} for training data  $\bar{G}_1,\ldots,\bar{G}_N$.
  \item[B.] There exists a setting $\boldsymbol{U}^*$ for $\boldsymbol{U}$ such that $(\boldsymbol{U}^*,\boldsymbol{W}^*)$
    maximizes the log-likelihood score of {\tt R} for training data  $G_1,\ldots,G_N$. 
  \end{description}
\end{proposition}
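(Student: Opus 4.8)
The plan is to combine two ingredients: the Bayesian-network factorization that underlies the RBN semantics, and the fact, established by the encoding (\ref{eq:gnn-probform}), that the conditional model for \emph{target} reproduces the GNN's softmax output exactly. The central observation is that on fully observed data the log-likelihood score of $r$ splits into one contribution per relation in the dependency graph, and that the GNN weights $\boldsymbol{W}$ occur in only one of these contributions.

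First I would write the log-likelihood of $r$ on the complete graphs $G_1,\ldots,G_N$ and apply the chain-rule factorization over the relational dependency DAG:
\begin{equation*}
  \sum_{n=1}^N \log P_{(\boldsymbol{U},\boldsymbol{W})}(G_n)
  = \underbrace{\sum_{n=1}^N \sum_{v\in G_n} \log P_{\boldsymbol{W}}\big(\emph{target}(v)\mid a_1,\ldots,a_r,e \text{ in } G_n\big)}_{=:\,L_{\emph{target}}(\boldsymbol{W})}
  + \underbrace{R(\boldsymbol{U})}_{\text{remaining factors}} .
\end{equation*}
Because the data is fully observed, every factor is evaluated at the observed atom values, so the sum genuinely splits; and because $\boldsymbol{W}$ parameterizes only the \emph{target} conditional while $\boldsymbol{U}$ parameterizes every other conditional, the two parameter groups appear in disjoint summands. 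Hence the joint maximization over $(\boldsymbol{U},\boldsymbol{W})$ separates: a pair $(\boldsymbol{U}^*,\boldsymbol{W}^*)$ is a global maximizer of the score if and only if $\boldsymbol{U}^*$ maximizes $R$ and $\boldsymbol{W}^*$ maximizes $L_{\emph{target}}$.

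Next I would identify $L_{\emph{target}}$ with the negated cross-entropy loss of $g$. Since the \emph{target} conditional depends only on \emph{target} and its parents $a_1,\ldots,a_r,e$, which are exactly the relations retained in the reduction $\bar{G}_n$, the factor $P_{\boldsymbol{W}}(\emph{target}(v)\mid\cdot)$ takes the same value on $G_n$ as on $\bar{G}_n$. By (\ref{eq:gnn-probform}) this factor is the softmax over the final-layer values $F_{K,1}(v),\ldots,F_{K,d}(v)$, and by the encoding those values equal the corresponding outputs of $g$ with weights $\boldsymbol{W}$. Summing $-\log$ of the softmax probability assigned to the observed class over all labelled nodes is by definition the cross-entropy loss of $g$, so $L_{\emph{target}}(\boldsymbol{W})$ equals $-\mathrm{CE}_g(\boldsymbol{W})$, up to the positive averaging constant optionally used in the loss. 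As minimizers are preserved under negation and positive scaling, maximizing $L_{\emph{target}}$ over $\boldsymbol{W}$ is the same optimization problem as minimizing the cross-entropy loss of $g$; this gives the equivalence between condition \textbf{A} and the statement that $\boldsymbol{W}^*$ maximizes $L_{\emph{target}}$.

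Chaining the two steps yields the claimed equivalence $\textbf{A}\Leftrightarrow\textbf{B}$. The one point that needs care — and the main obstacle — is the direction $\textbf{A}\Rightarrow\textbf{B}$: it requires that the remaining term $R(\boldsymbol{U})$ actually \emph{attains} its maximum, so that a witnessing $\boldsymbol{U}^*$ exists to pair with $\boldsymbol{W}^*$. I would discharge this by noting that a maximizing $\boldsymbol{U}^*$ for $R$ can be chosen completely independently of $\boldsymbol{W}^*$ (it is the maximum-likelihood fit of the non-\emph{target} conditionals to the observed data) and exists for the parameter families used in the RBN here. A secondary point to verify explicitly is that there is genuinely no parameter sharing between the GNN-encoded \emph{target} conditional and the other conditionals, i.e. that $\boldsymbol{U}$ and $\boldsymbol{W}$ are disjoint; this is precisely the hypothesis that only the distribution of \emph{target} is defined by (\ref{eq:gnn-probform}), and it is what makes the decoupling in the first step legitimate.
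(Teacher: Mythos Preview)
Your proposal is correct and follows essentially the same approach as the paper: decompose the log-likelihood via the chain rule into per-relation conditional terms, observe that $\boldsymbol{W}$ appears only in the \emph{target} term while $\boldsymbol{U}$ appears only in the others, and then identify maximizing the \emph{target} term with minimizing cross-entropy. If anything, you are more careful than the paper in flagging the need for a maximizing $\boldsymbol{U}^*$ to actually exist in the direction $\textbf{A}\Rightarrow\textbf{B}$, a point the paper's proof leaves implicit.
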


\begin{proof}
  The log-likelihood for {\tt R} given $G_1,\ldots,G_N$ decomposes according to the chain rule into a sum of
  conditional log-likelihoods for each relation in $\mathcal{R}$. Since the conditional distribution for $\emph{target}$ is defined
  by the encoding of {\tt G}, it only depends on the parameters $\boldsymbol{W}$, and the reductions $\bar{G}_h$ of the
  training graphs. Since the $\boldsymbol{W}$ does not occur in any other log-likelihood term than the one for
  $\emph{target}$, the local optimization of the conditional log-likelihood for $\emph{target}$  is part of the optimal solution
  for the full log-likelihood function. Finally, maximizing the log-likelihood for $\emph{target}$  is equivalent to
  minimizing the cross-entropy loss.
\end{proof}

We have formulated the encoding (\ref{eq:gnn-probform}) and Proposition \ref{theo:consistency} for the case of a node
classification GNN defining a unary relation in the RBN. The case for a graph classification GNN defining a relation of
arity 0 is completely analogous. In summary, we can integrate a node or graph classification GNN as the conditional
probability model for a relation of arity 1, respectively 0, such that the following consistency properties hold:

\begin{itemize}
\item Semantic equivalence: the output distribution computed by the GNN for the \emph{target} attribute based on input node features and given the graph \emph{edge} relation is equal to the
  conditional distribution of \emph{target} given the node features and  \emph{edge} relation in the generative model defined by the RBN.
\item Training equivalence: separate training of the GNN under the standard cross-entropy loss, and training the
  generative RBN under the maximum likelihood objective are equivalent (Proposition \ref{theo:consistency}).
\item Computational equivalence: computing output values for the \emph{target} attribute (``forward propagation''),
  or computing gradients for the loss/likelihood function (``backpropagation'') is performed by the same sequence of basic mathematical operations (addition, multiplication, exponentiation, \ldots) in the GNN and its RBN encoding.   
\end{itemize}

\subsection{RBN to GNN Interface}
\label{sec:interface}

GNN encodings in the native RBN language described in Section \ref{sec:GNN2RBN} lead to a very tight integration, but they come with two disadvantages: (1) the GNN-to-RBN compiler needs to be continuously extended to accommodate new GNN architectural elements  not yet covered (e.g., attention mechanisms, ReLu activation, \ldots). (2) While computationally equivalent in principle, the RBN encoding can be much slower in practice. The exact causes of this performance loss are difficult to identify, but two contributing factors are: the execution of vector/matrix operations at the scalar level, and the loss of GPU acceleration.

As an alternative to the compilation approach, we therefore also developed an interface version of the GNN-RBN integration.  Instead of model components of the form
\begin{equation}
  \label{eq:targetviacompile}
   \emph{target(v)}:= F(v)
 \end{equation}
 where $F(v)$ is a GNN encoding probability formula as provided by Proposition \ref{prop:reprequiv},
 the RBN now can  contain a declaration
\begin{equation}
  \label{eq:gnn-interface}
  \emph{target(v)}:= <\mbox{Interface to PyTorch GNN model}>;
\end{equation}
The $ <\mbox{Interface to PyTorch GNN model}>$ element is an extension of the RBN syntax. It contains a link to a PyTorch GNN model, and
the specification of a mapping between node attributes and relations as declared in the RBN model, and the node input features and edge
relations used for message passing at the GNN level. 
The training of the GNN and forward and backward propagations required at inference time are then executed in the original PyTorch implementation.
The interface supports PyTorch models for heterogeneous, multi-relational GNNs, which are often needed inside the rich
RBN modeling framework. 
\new{In order to obtain with (\ref{eq:gnn-interface}) the same functionality as with (\ref{eq:targetviacompile}), the interface has to implement the methods that are required for the abstract probability formula class in the \emph{Primula} RBN implementation. The most important of these methods just correspond 
to forward and backward propagation:}

\begin{quote}
\new{\textbf{\ttfamily evaluate($i$, \boldmath$\boldr$)}}
\vspace{2pt}

\new{Evaluates the ground atom \emph{target}($i$), under a given
instantiation of the input relations for the GNN.}
\new{
\begin{description}
\item[Parameters:] $i$, the node for which \ttfamily target \normalfont is queried ; $\boldr$,  values of the GNN input relations.
\item[Returns:] a value in $[0,1]$ if \ttfamily target \normalfont is a Boolean relation, or a probability vector in $[0,1]^l$
if it is categorical with $l$ values.
\end{description}}

\end{quote}

\begin{quote}

\new{\textbf{\ttfamily evaluateGradients($i$, \boldmath$\boldr$)}}
\vspace{2pt}

\new{Computes the gradient for the model parameters of the value that \texttt{evaluate} would return for the same arguments, obtained through the
wrapped model's own backpropagation mechanism.}

\new{\begin{description}
\item[Parameters:] as for \texttt{evaluate}.
\item[Returns:] the gradient of the model parameters $\boldsymbol{W},\boldsymbol{U}$, where, as in Proposition \ref{theo:consistency}, $\boldsymbol{W}$ and $\boldsymbol{U}$ denote the weights of the GNN, and the parameters of other RBN model components, respectively.
\end{description}}
\end{quote}

\new{The gradients returned by \texttt{evaluateGradients} are what allow a derivative computed inside the GNN to be
propagated further through the rest of the RBN's components.} More details of the implementation are described in \cite{primulatool}.

\section{Likelihood Graph and MAP Inference}
\label{sec:likgraph}

A key tool for learning and MAP inference in RBNs is the  \emph{likelihood graph}  \cite{JaegerICML07}: a computational graph for the likelihood of observed data given three types of unknown inputs: unknown model parameters (the main objects of interest in learning), query (MAP) atoms that are unobserved and whose most likely joint configuration one wants to infer, and other unobserved atoms that are not part of the query, but on whose values the observed data also depends.

In this section, we are focusing on the likelihood graph for MAP inference, and assume that all model parameters have already been learned. The MAP inference problem  is formalized as follows:  given observed data  ${\boldsymbol \Datoms}={\boldsymbol d}$, and a set of query MAP atoms ${\boldsymbol \MAPatoms}$, find the most probable joint configuration ${\boldsymbol m}$ for  ${\boldsymbol \MAPatoms}$:

\begin{equation}
  \label{eq:mapeq}
  \argmax_{\boldsymbol m}P({\boldsymbol \MAPatoms}={\boldsymbol m}|{\boldsymbol \Datoms}={\boldsymbol d})
  = \argmax_{\boldsymbol m}P({\boldsymbol \MAPatoms}={\boldsymbol m},{\boldsymbol \Datoms}={\boldsymbol d})
  =\argmax_{\boldsymbol m}\sum_{\boldsymbol o}P({\boldsymbol \MAPatoms}={\boldsymbol m},{\boldsymbol \Oatoms}={\boldsymbol o},{\boldsymbol \Datoms}={\boldsymbol d})
\end{equation}
where ${\boldsymbol \Oatoms}$ contains all atoms not in ${\boldsymbol \MAPatoms}$ or ${\boldsymbol \Datoms}$. In the special case
${\boldsymbol \Oatoms}=\emptyset$ this becomes MPE (most probable explanation) inference.

A schematic picture of a likelihood graph for MAP inference is shown in Figure~\ref{fig:lgraph} on the left:  the root node $\Pi$ represents the joint probability of observed data ${\boldsymbol \Datoms}={\boldsymbol d}$  and a candidate MAP configuration ${\boldsymbol \MAPatoms}={\boldsymbol m}$ of query atoms. If any of the atoms in ${\boldsymbol \Datoms}$ or
${\boldsymbol \MAPatoms}$ depend on the values of other unobserved atoms ${\boldsymbol \Oatoms}$, then
the root node represents the marginal likelihood
$\sum_{{\boldsymbol \Oatoms}}P({\boldsymbol \Datoms},{\boldsymbol \MAPatoms},{\boldsymbol \Oatoms})$. The root has one child for each ground atom in ${\boldsymbol \Datoms}\cup{\boldsymbol \MAPatoms}\cup{\boldsymbol \Oatoms}$.
Each of these children computes the conditional probability of its atom given the relational data it depends on. The final likelihood computed at $\Pi$ is simply the product of these probabilities.\footnote{It is important to note that this does \emph{not} imply any kind of independence or naive Bayes assumption; we just use the completely general factorization according to the chain rule.}
The likelihood graph has one input node for each MAP atom in  ${\boldsymbol \MAPatoms}$, and each unobserved atom in
${\boldsymbol \Oatoms}$.  An input for a ${\boldsymbol \MAPatoms}$ atom consists of a single candidate value (indicated in Figure \ref{fig:lgraph} by the node coloring of MAP atom input nodes), whereas an input for an ${\boldsymbol \Oatoms}$ atom consists of a sample of values (indicated by arrays of colorings in Figure \ref{fig:lgraph}).
Between the input nodes and the children of $\Pi$ lie intermediate computation nodes that
correspond to the functions defined by sub-formulas of the RBN.
There are no input nodes for the data atoms ${\boldsymbol \Datoms}$: their fixed values are hard-coded into the functions computed at the children of $\Pi$, and the intermediate nodes.
\new{Since all intermediate nodes of a likelihood graph correspond to groundings of the probability formulas and their constituent sub-formulas
  in the RBN model, one obtains $O(K |V|^f)$ as an upper bound on the number of likelihood graph nodes, where $K$ is the number of (sub-)formulas
in the model, $V$ is the given vertex set, and $f$ is the maximal number of free variables in the probability (sub-)formulas.\footnote{\new{The free variables of a probability formula are defined as in first-order logic: all variables except those that are bound by a \ccforall operator in a
  combination function \cite{jaeger2001complex}.}} A likelihood graph node representing a ground sub-formula $F(\boldi)$ is connected to the nodes
representing ground sub-formulas needed to evaluate  $F(\boldi)$. The worst case here is that $F$ is a combination function (\ref{eq:combfunc})
where the condition $f_0(\boldv)$ is satisfied by all $\boldi\in V^{|\boldv|}$. Then the number of required evaluated sub-formulas is
$m|V|^{|\boldv|}$. Letting $g$ be the maximal number $|\boldv|$ of variables quantified over in any \ccforall clause, and using
$m\leq K$ we obtain $O(K^2 |V|^{f+g})$ as an upper bound on the number of edges in the likelihood graph.}

The likelihood graph supports the following basic computations required for MAP inference:
\begin{itemize}
\item Computation of the likelihood value given a current configuration of the ${\boldsymbol \MAPatoms}$ atoms, and a sample of values used to approximate the intractable sum $\sum_{\boldsymbol o}$.
\item \new{Re-sampling of values ${\boldsymbol o}$ for  ${\boldsymbol \Oatoms}$ by Gibbs sampling conditional on the
  current values of the MAP atoms ${\boldsymbol \MAPatoms}$. Specifically, the likelihood graph maintains a given number \emph{nc} of parallel Gibbs sampling chains. A resampling operation generates \emph{ws} new samples in each chain. Both the number of chains and \emph{ws} are configurable parameters. Figure \ref{fig:lgraph} illustrates the case of \emph{nc}=2, \emph{ws}=4.}
\end{itemize}
\new{Both of these operations are linear in the number of edges of the likelihood graph. However, the likelihood graph also supports
\emph{sub-linear} local updates:} when the ${\boldsymbol \MAPatoms}$ configuration for which the likelihood is to be computed differs from a configuration for which the likelihood graph has already been evaluated only at a single atom $\MAPatoms$, then only a re-evaluation of the nodes that are ancestors of the $\MAPatoms$ input node in the graph is necessary. Similarly, re-sampling values for an atom $\Oatoms\in {\boldsymbol \Oatoms}$ only requires re-evaluations of ancestors of $\Oatoms$. 

\begin{figure}
  \centering
  \includegraphics[width=\textwidth]{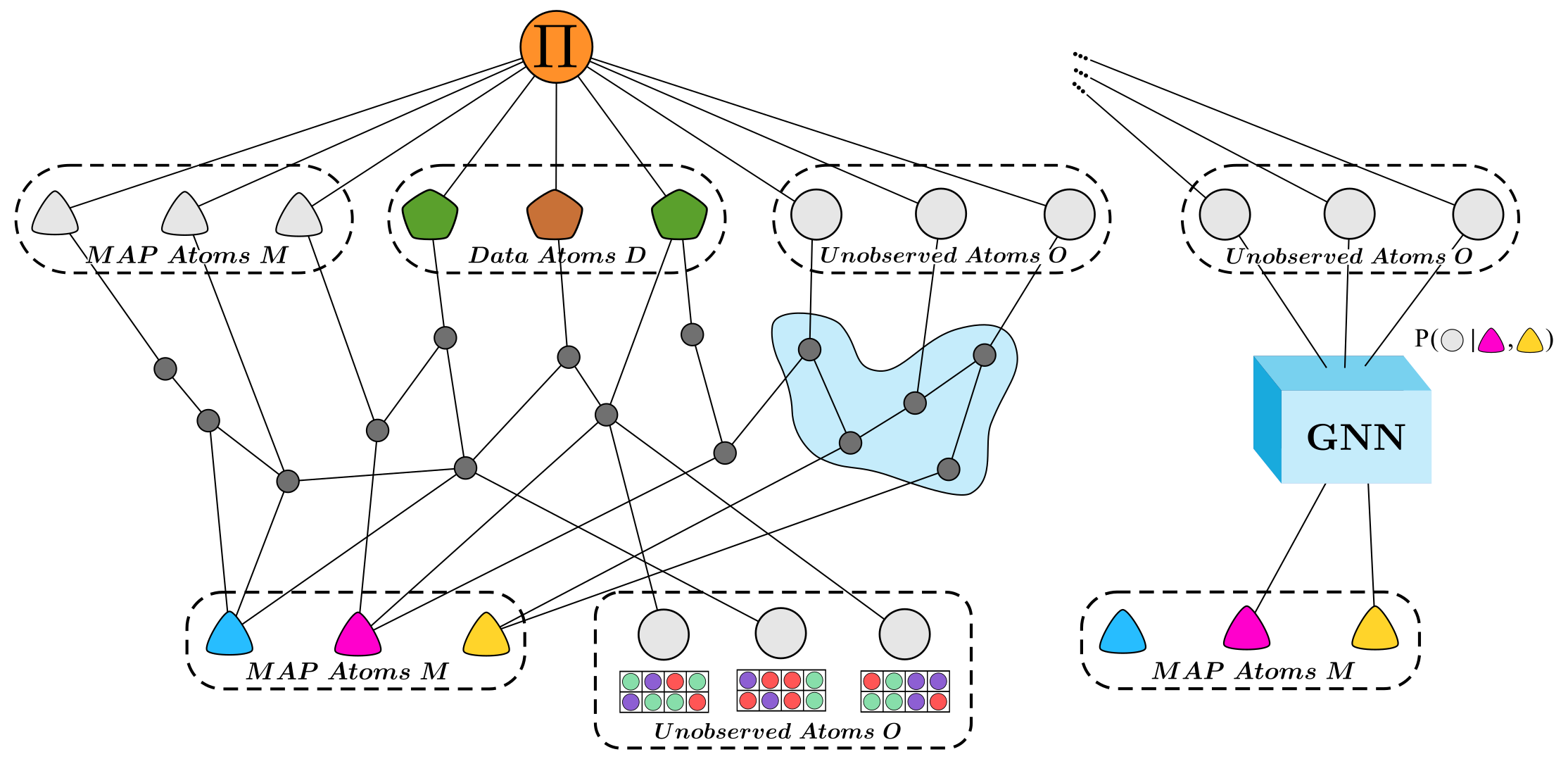}
  \Description{Schematic of the likelihood graph showing the standard RBN version and the version with a GNN module.}
  \caption{The likelihood graph. Left: standard RBN version; Right: replacement of (blue shaded) part of the computation graph by a GNN module. The coloring of atom nodes symbolizes their values: for MAP atoms, the values according to a current configuration; for unobserved atoms, the values given by a random sample. \label{fig:lgraph}}
\end{figure}

We solve the optimization problem (\ref{eq:mapeq}) using a simple greedy algorithm shown in Algorithm \ref{alg:map}. Starting with an initial random configuration ${\boldsymbol \MAPatoms}={\boldsymbol m}$, it determines the MAP atoms $\MAPatoms$ that will lead to the highest gain in likelihood when their current value $m$ is changed to a different value $m'$ (as determined by the {\sc Score} subroutine).
Simultaneously, a batch of at most $b$ atoms are flipped to their most probable value (l.8).
Given the new settings of the MAP atoms, unobserved atoms are resampled (l.9), and MAP atoms whose distribution may have been affected by the changes of l.8 are re-scored (l.10). 
When no single flips of MAP atoms lead to a likelihood improvement (l.12), then in a limited recursive lookahead step the atoms that incur the least loss of likelihood are tentatively flipped (l.14), and the greedy search is called recursively but only operating on those atoms whose likelihood can have been negatively affected by the tentative flips (l.15). For this purpose the algorithm takes a subset $\boldsymbol{\MAPatoms}'$ of MAP atoms as input (initially set to the set $\boldsymbol{\MAPatoms}$ of all MAP atoms).

\begin{algorithm}[tb]
\begin{algorithmic}[1]
\Procedure{MAP}{Likelihood graph $\mathcal{G}$, subset of MAP atoms $\boldsymbol{\MAPatoms}' \subseteq \boldsymbol{\MAPatoms}$, lookahead depth $d$, batchsize $b$, stopping criterion $t$}
\State Set initial random configuration ${\boldsymbol \MAPatoms}'={\boldsymbol m}$
\State Compute $\MAPatoms.\emph{score}= ${\sc Score}$(\MAPatoms)$ for all $\MAPatoms\in {\boldsymbol \MAPatoms}'$
\LComment{\color{darkgray}From now on maintain $\boldsymbol{\MAPatoms}'$ as a set ordered according score values; $\boldsymbol{\MAPatoms}'[:k]$ denotes the subset containing the $k$ highest scoring elements. }
\While{($\max\{\MAPatoms.\emph{score}|\MAPatoms\in{\boldsymbol \MAPatoms}'\}>0$ or $d>0$) and $t$ not satisfied}
\State $b'\gets \min\{b,|\{\MAPatoms|\MAPatoms.\emph{score}>0\}  |\}$
\If{$b'>0$}
\State Flip the $\MAPatoms\in\boldsymbol{\MAPatoms'}[:b']$ to their most probable value $\MAPatoms.\emph{maxval}$
\State Resample the unobserved ${\boldsymbol \Oatoms}$
\State Recompute {\sc Score}$(\MAPatoms')$ for all $\MAPatoms'\in\sibling{\boldsymbol{\MAPatoms'}[:b']}$
\LComment{\color{darkgray} $\sibling{\boldsymbol{\MAPatoms'}[:b']}$ denotes the $\MAPatoms'\in\boldsymbol{\MAPatoms}'$ for which there exists an output ground atom node that depends on both $\MAPatoms$ and $\MAPatoms'$.}
\Else 
\State Let $l,\boldsymbol{m}$ be the current likelihood and configuration of $\boldsymbol{\MAPatoms}$ 
\State Flip the $\MAPatoms\in\boldsymbol{\MAPatoms'}[:b']$ to their most probable value $\MAPatoms.\emph{maxval}$
\State $\boldsymbol{m}',l' = ${\sc MAP}($\mathcal{G},\sibling{\boldsymbol{\MAPatoms'}[:b']}\setminus\boldsymbol{\MAPatoms'}[:b'] ,d-1,b, t)$
\If{ $l'<l$} revert to $\boldsymbol{m}$
\EndIf
\EndIf
\EndWhile
    \State \Return current configuration ${\boldsymbol m}$ and its likelihood value $l$.
    \EndProcedure
    \medskip
\Procedure{Score}{MAP atom $\MAPatoms$  with current setting $\MAPatoms=m$}
\For{all possible values $m'$ of $\MAPatoms$}
\State compute the change in likelihood value when changing  $\MAPatoms=m$ to $\MAPatoms=m'$
\State Set $\MAPatoms.\emph{score}$ and $\MAPatoms.\emph{maxval}$ to the maximal increase in likelihood value and the corresponding value $m'$, respectively
\EndFor
\EndProcedure
\end{algorithmic}
\caption{MAP Inference \label{alg:map}}
\end{algorithm}

The description given so far assumed a pure RBN model, possibly containing compiled GNN components of the form (\ref{eq:targetviacompile}).
When a GNN is
integrated by an interface (\ref{eq:gnn-interface}), then a single node linked to the underlying GNN is added to the
likelihood graph. This node can be queried for the value the GNN returns for any specific node $i$.
Figure~\ref{fig:lgraph} illustrates this structural change for a scenario in which all unobserved atoms are
in the \emph{target} attribute modeled by the GNN. Instead of a sub-graph containing all ground probability formulas and their
sub-formulas for the atoms $\emph{target}(i)\in \boldsymbol{\Oatoms} $, the likelihood graph now contains a single GNN node.
Evaluations of the new GNN nodes require a forward propagation in the GNN, which is no longer a constant time operation
(independent of graph size). Thus, the linear worst-case time bound for the basic likelihood graph computations no longer applies.
More significantly, however, the GNN nodes also limit the possibilities to obtain sub-linear computation times in local updates.
This is because the GNN node acts as a bottleneck connecting many of the nodes below to many of the nodes above it, thereby
leading to  greatly increased ancestor-of relationships between the nodes.

The two different GNN-RBN integration methods described in Sections~\ref{sec:GNN2RBN} and \ref{sec:interface}, in conjunction with the likelihood graph for RBN parameter learning and MAP inference, establish an integrated framework in which GNN operations can be performed either entirely by native RBN operations or by calls to an external GNN implementation. Using Kautz's taxonomy of neural-symbolic systems~\cite{Kautz_2022}, we can position the compilation of a GNN into an RBN within the taxonomy as a middle ground between Neuro[Symbolic], where a neural network performs logical reasoning during execution, and Symbolic[Neuro], where a symbolic solver is the primary system and the neural component acts as a subroutine. With this new direct integration of external GNN models as RBN components, this work clearly falls into the Symbolic[Neuro] category.

We  defer a brief investigation into the computational tradeoffs of the two approaches to Section \ref{sec:runtime}. 
In \cite{pojer2024generalized} the GNN to RBN compilation approach was applied  to two different tasks:
one illustrating the ability to ``invert'' the inference direction of the GNN model by computing conditional probabilities for node
(input) attributes given observed class labels, and one illustrating the use of MAP inference to obtain model-level explanations
of GNN graph classifiers. 
In the following, we introduce two new applications of MAP inference, each of which also introduces new benchmark
tasks and datasets.

\section{Application: Collective Node Classification}
\label{sec:nodeclass}
A basic limitation of GNNs for node classification lies in the independence of the predictions for different nodes, which does not directly support modeling homophilic or heterophilic structures in the label distribution \cite{qu2019gmnn}. It is commonly assumed that GNNs perform better in homophilic than heterophilic scenarios \cite{yan2022two,loveland2024performance,luan2024heterophilic}, although several authors have also cautioned against an over-interpretation of the existing theoretical or empirical evidence for this observation, for example because differences between global and local homophily levels are not sufficiently taken into account \cite{loveland2024performance}, or empirical studies use heterophilic datasets where node features alone provide a strong basis for classification, thus obscuring the role played by heterophilic label distributions \cite{luan2024heterophilic}. Therefore, the use of synthetic datasets in which homo/heterophilic properties can be cleanly isolated from other confounding factors that impact classification performance has been advocated \cite{loveland2024performance,luan2024heterophilic}.

In this section we show how MAP inference in an integrated GNN-RBN model can be used to modulate the purely feature-based and independent GNN predictions in order to take homo/heterophilic distribution patterns into account.

\subsection{MAP Inference for Collective Node Classification}
\label{sec:mapfornode}

Consider a GNN $\mathcal{N}$ trained for predicting a node label $Y$ given node attributes $\boldsymbol{A}$.
The predicted label for node $v$ will then depend on attribute values of $v$ and other nodes $v'$, but not
on the predictions at other nodes, and thus cannot directly incorporate  homo-/heterophily objectives.
We therefore embed a standard node classification GNN into a GNN-RBN model that contains additional predicates
expressing homo-/heterophilic properties, such
that we can combine the independent predictions provided by the GNN with constraints on the overall homo/heterophilic structure
of the solution. These constraints will be expressed by conditioning the node label distribution on the additional predicates. 

We aim to capture \emph{local} homophily structures that can vary across the graph. Any labeling $\boldsymbol{y}$ of
the nodes defines the local homophily  $\emph{LH}_{\boldsymbol{y}}(v)\in [0,1]$ at node $v$
as the proportion of edges connecting $v$ to nodes with the same label as $v$. Let, now, $\boldsymbol{y}^*$ denote the true
(but for test nodes unknown) node labeling, and $\hat{\boldsymbol{y}}$ a predicted labeling. If $\hat{\boldsymbol{y}}$ is
an accurate prediction for  $\boldsymbol{y}^*$, then, in particular, the local homophily values must match, i.e.
$|\emph{LH}_{\boldsymbol{y}^*}(v)-\emph{LH}_{\hat{\boldsymbol{y}}}(v)|$ should be small for all $v$. To enforce this homophily matching
objective, we introduce a Boolean node attribute $\overline{LH}(v)$ and define the probability of
$\overline{LH}(v)$ to be true as the function of $\emph{LH}_{\boldsymbol{y}^*}(v)-\emph{LH}_{\hat{\boldsymbol{y}}}(v)$
shown in Figure~\ref{fig:diff_hom}. This function is the product of two logistic regression functions and can be
represented by an RBN probability formula. It is designed to yield a
smooth increase of probability from the extreme values $\emph{LH}_{\boldsymbol{y}^*}(v)-\emph{LH}_{\hat{\boldsymbol{y}}}(v) \in\pm 1$
to the optimal value $\emph{LH}_{\boldsymbol{y}^*}(v)-\emph{LH}_{\hat{\boldsymbol{y}}}(v) = 0$.

Let $\boldsymbol{LH_{\boldsymbol{y}^*}},\boldsymbol{A},\boldsymbol{\overline{LH}} $ and $\hat{\boldsymbol{Y}}$
denote the true local homophily values, input attributes, $\overline{LH}(v)$ values, and predicted labels, respectively,
for all nodes. Then $\mathcal{N}$ and our model for $\overline{LH}$
define a conditional distribution
\begin{equation}
  \label{eq:isingmap}
  P(\hat{\boldsymbol{Y}},\boldsymbol{\overline{LH}}|\boldsymbol{LH}_{\boldsymbol{y}^*} ,\boldsymbol{A})=
  P_{\mathcal N}(\hat{\boldsymbol{Y}}|\boldsymbol{A})P(\boldsymbol{\overline{LH}}|\hat{\boldsymbol{Y}},\boldsymbol{LH}_{\boldsymbol{y}^*} )
\end{equation}
where $ P_{\mathcal N}$ is the label prediction distribution 
defined by the GNN. Having trained ${\mathcal N}$ in a conventional way (i.e., using cross-entropy loss for the label prediction on the training nodes), we condition (\ref{eq:isingmap}) on $\boldsymbol{\overline{LH}}=\boldsymbol{true}$,
and perform MAP inference for $\hat{\boldsymbol{Y}}$.

\begin{figure}[h]
    \centering
    \includegraphics[width=0.3\linewidth]{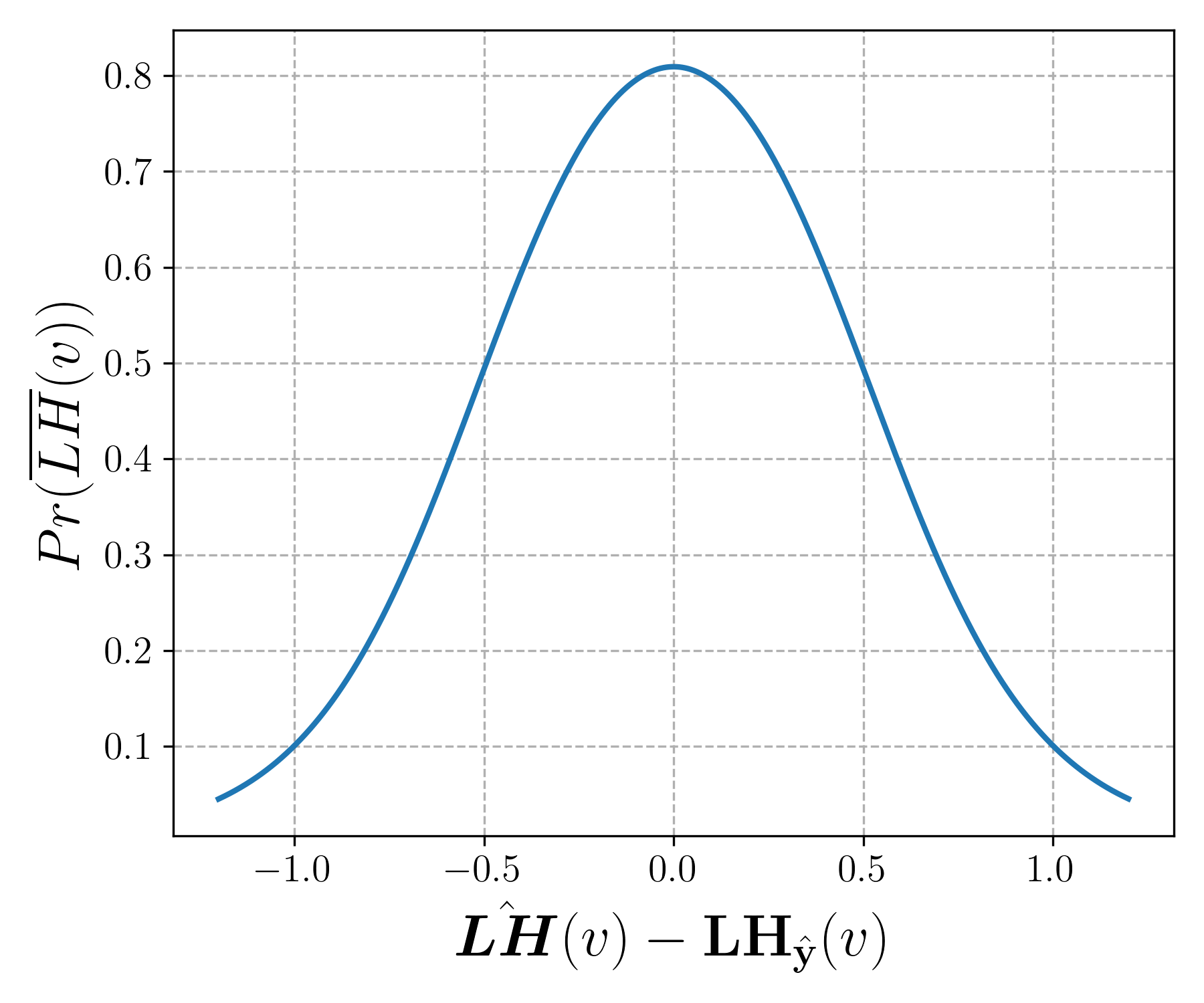}
    \caption{Probability of $\overline{LH}=\emph{true}$ as a function of the difference between target and current local homophily level. }
    \label{fig:diff_hom}
\end{figure}

Note that even though in $ P_{\mathcal N}(\hat{\boldsymbol{Y}})$ the variables $\hat{Y}(v)$ for the different nodes are independent,
conditioning on  $\boldsymbol{\overline{LH}}=\boldsymbol{true}$ induces dependencies, thereby turning this into a collective classification approach.
Thus, our objective in MAP inference for $\hat{\boldsymbol{Y}}$ is to both maximize the probability that the GNN assigns
to $\hat{\boldsymbol{Y}}$ based on  observed attribute values $\boldsymbol{A}=\boldsymbol{a}$, and the probability of the constraints
$\overline{LH}(v)=\emph{true}$. The form of our model for $\overline{LH}$ shown in Figure \ref{fig:diff_hom} leads to a
specific tradeoff between the potentially conflicting goals of maximizing
$ P_{\mathcal N}(\hat{\boldsymbol{Y}}|\boldsymbol{A})$ and $P(\boldsymbol{\overline{LH}}|\hat{\boldsymbol{Y}},\boldsymbol{LH}_{\boldsymbol{y}^*} )$.
The parameters that determine the shape of this function can
be seen as hyperparameters of our approach, and could be optimized by standard hyperparameter optimization routines. We do not pursue this optimization in the current paper, and report results only for the default model of Figure \ref{fig:diff_hom}.

Evaluating and maximizing (\ref{eq:isingmap}) exactly would require access to the true values $\boldsymbol{LH}_{\boldsymbol{y}^*} $, which,
in reality, are unknown.  We therefore approximate the true values by estimates $\hat{\boldsymbol{LH}}$. We obtain our estimates by a \emph{homophily propagation} method that is inspired by standard \emph{label propagation}: seeded by the (partly) known local homophily values of labeled nodes, we iteratively update estimated \emph{LH} values by averaging
estimates of neighboring nodes. The process not only updates the local homophily values for unlabeled nodes, but also those for labeled nodes with some unlabeled neighbors. A detailed description of the algorithm is contained in the appendix.

\subsection{Ising: Data} 
\label{sec:isingdata}

We propose a special class of synthetic benchmark datasets that allows us to finely tune and analyze the three relevant factors: the  homophily/heterophily level in the node label distribution, the  informativeness of node attributes, and 
the informativeness of node neighborhood data.

We generate random node labelings according to the Ising model from statistical physics. In this model, the (undirected) graph structure consists of a (fixed) regular square $n\times n$ grid of nodes, where a node is connected to its (at most 4) upper, lower, left, and right neighbors. Nodes have a binary class label $+1$ or $-1$ (representing positive or negative electromagnetic spin). The probability of  
a node labeling $\boldsymbol{y}\in \{-1,+1\}^{n \times n}$ is given by
$
    P(\boldsymbol{y}) = \frac{1}{Z} e^{\phi(\boldsymbol{y})}
$
with
\begin{equation}
\label{eq:phi}
    \phi(\boldsymbol{y})=\sum_{v} \boldsymbol{y}(v)\left(F\cdot f(v) + H\cdot \sum_{u\in {\mathcal{N}}_v} \boldsymbol{y}(u)\right)
\end{equation}
where $f(v)\in \Rset$ is a scalar node feature (the "external magnetic field" value at $v$), $F,H$ are parameters, and $Z$ is a normalizing constant. The $H$ parameter controls the bias towards more homophilic ($H >0$) or heterophilic ($H < 0$) node labelings. The $F$ parameter controls the importance of the node feature $f(v)$. With $F=0$ label probabilities only depend on the number of neighbors with the same label.
For generating labelings $\boldsymbol{y}$, we sample from the model (\ref{eq:phi}) with
$f(v)$ a linear function  that increases from the top left to the bottom right corner of the grid.

For a given sampled $\boldsymbol{y}$, we create two versions of train/test data: in the first version, nodes are
equipped as an input attribute $A$ with the actual value $A(v)=F\cdot f(v)$ used in sampling the node labels.
In this case, any linear aggregation of the $A(u)$ values of $v$'s neighbors $u$ can also be obtained as a linear
function of $A(v)$ alone. This means that GNNs here can not exploit their ability to aggregate node
neighborhood data. We therefore create a second version in which nodes receive as an attribute
randomly perturbed values $\tilde{A}(v):=A(v)+N(0,F\cdot\sigma)$ of the first version. The variance of the
noise here is scaled with $F$ in order to calibrate the amount of noise to the value range of $A(v)$.
A GNN can now benefit from averaging $\tilde{A}(u)$ over the neighbors $u$ of $v$,
which will be a better approximation of the underlying $F\cdot f(v)$ than $\tilde{A}(v)$ alone
(and hence a better predictor for $\boldsymbol{y}$). 

In summary, the data generation model allows us to calibrate:
\begin{itemize}
\item Homo/Heterophily of the label distribution: by the parameter $H$;
\item Influence of the node feature $f(v)$ on the label distribution: by the parameter $F$;
\item Informativeness of node neighbor data: by the construction of node input attributes with/without noise\footnote{We here only consider the two distinct with/without noise options; finer variations obtained by varying the noise model and/or the underlying magnetic field function $f(v)$ can obviously be
    considered.}.  
\end{itemize}

Table \ref{tab:ising_data_setup} shows  data generated for a $32\times 32$ grid under different settings of the $H,F$ parameters, \new{each column $i_{1\dots5}$ represents a different setup}. 
Row (A) shows the sampled graph with node labels +1 (yellow) and -1 (purple). 
The values for the positive vs. negative class ratio for both train and test nodes, as well as the global homophily values 
(defined as the proportion of edges connecting nodes of the same class) for the graphs, are listed above the plots. 
Row (B) visualizes the value of the unperturbed feature value $F\cdot f(v)$ of the nodes. 
When $F>0$, then the feature $F\cdot f(v)$ is just a scaled version of the same function, and, in principle, equally informative. 
For $F=0$, the feature becomes the uninformative constant 0 (green). Row (C) illustrates the noisy version $F\cdot\tilde{f}$ of the node feature. 

\begin{table}
    \centering
    \caption{Ising data: parameters and visualizations.
    \label{tab:ising_data_setup}}
    \resizebox{\textwidth}{!}{%
    \begin{tabular}{llccccc}
      \toprule
      & Setup & $i_1$ & $i_2$ & $i_3$ & $i_4$ & $i_5$ \\
      \midrule
      & $H$  
                 & 0.5
                 & -0.5
                 & -0.4
                 & -0.7
                 & 0.9 \\
      & $F$
                 & 0.0
                 & 0.0   
                 & 0.1
                 & 0.3
                 & 0.05 \\
      & \texttt{Class Ratio (train)}
                 & 0.55    
                 & 0.49
                 & 0.49     
                 & 0.48
                 & 0.45 \\
      & \texttt{Class Ratio (test)}
                 & 0.56    
                 & 0.53   
                 & 0.49     
                 & 0.47
                 & 0.48 \\
      & \texttt{Global Homophily (h)}
                 & 0.92  
                 & 0.07 
                 & 0.39
                 & 0.56
                 & 0.97 \\
    
    \raisebox{3.3\height}{(A)} & \raisebox{3.7\height}{\texttt{Node Labels}}
                & \includegraphics[width=0.14\textwidth]{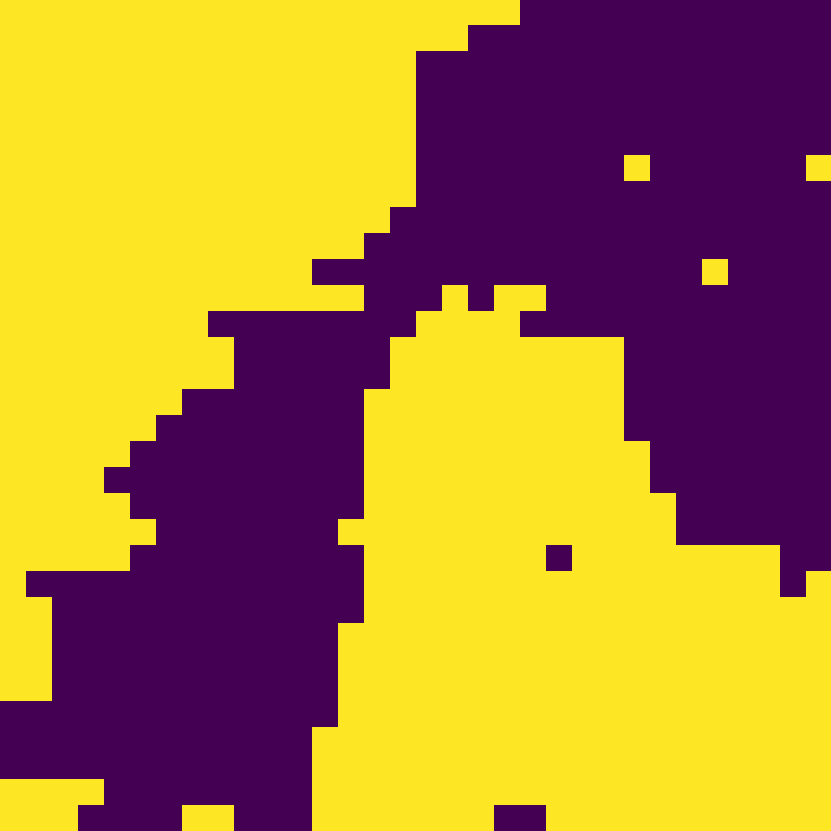}
                & \includegraphics[width=0.14\textwidth]{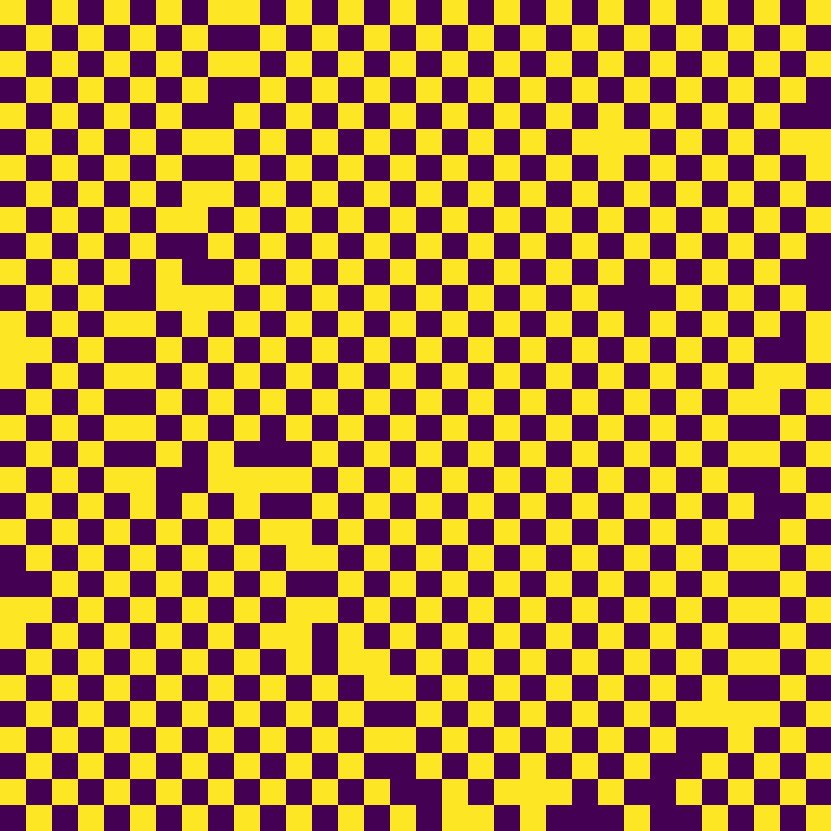}
                & \includegraphics[width=0.14\textwidth]{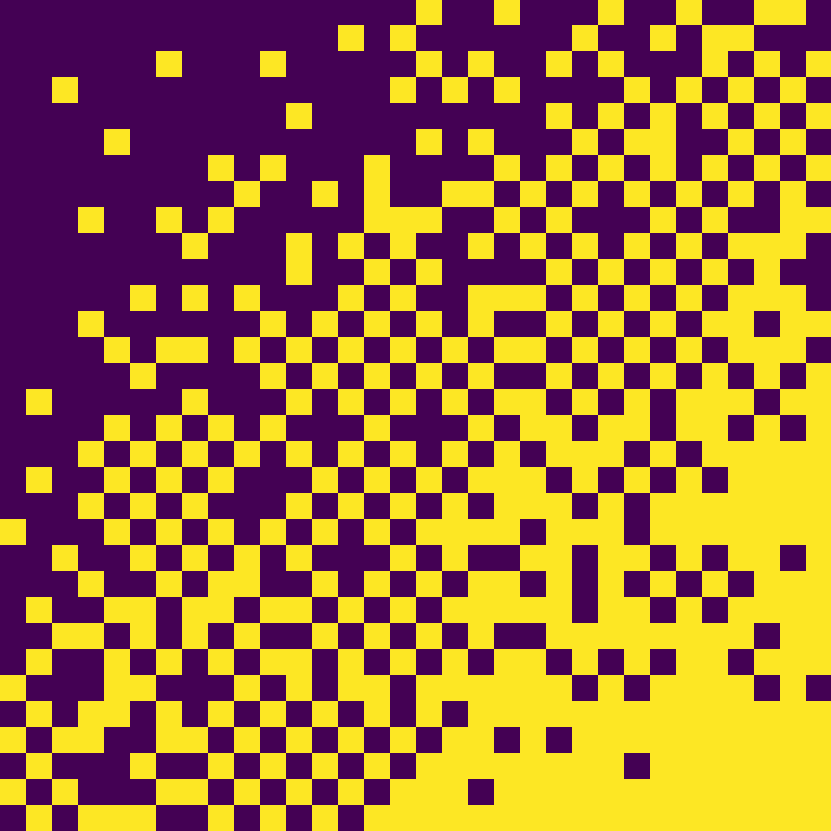}
                & \includegraphics[width=0.14\textwidth]{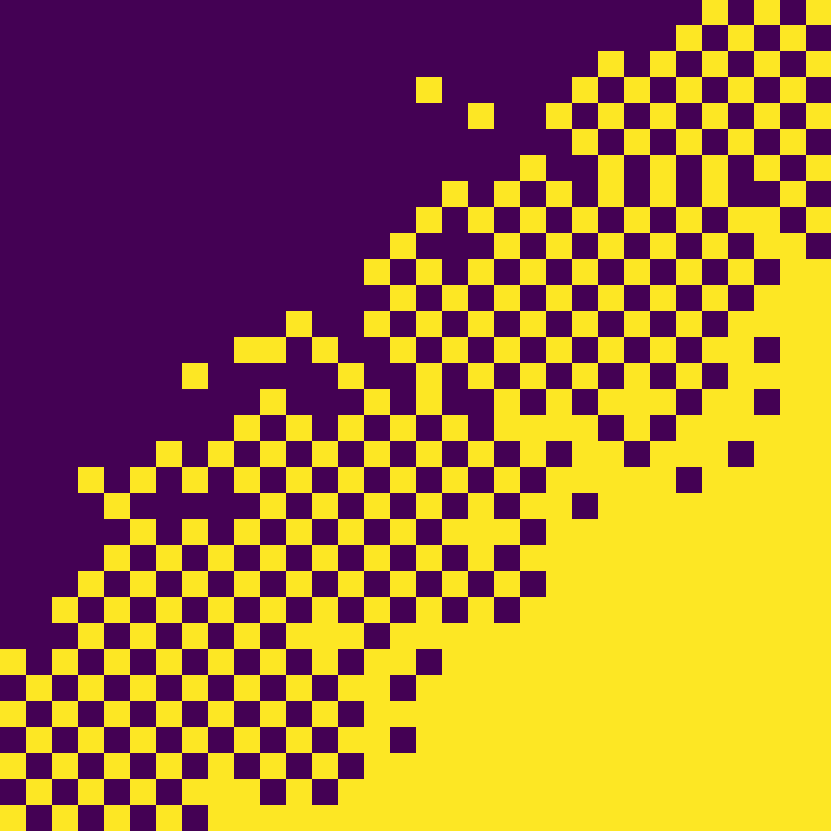}
                & \includegraphics[width=0.14\textwidth]{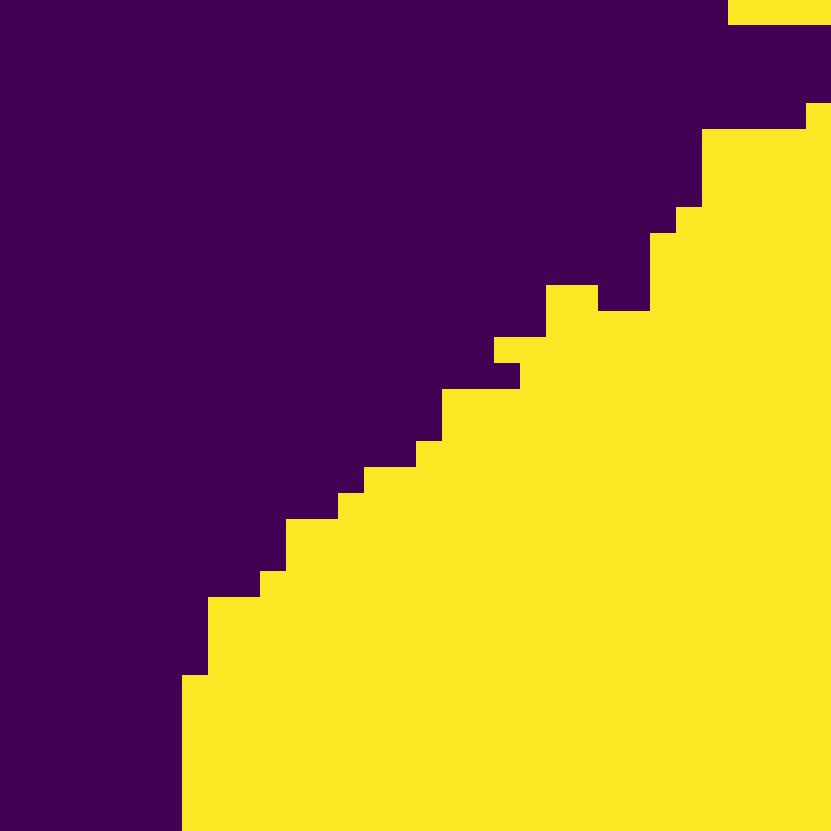}\\
    \raisebox{3.3\height}{(B)} & \raisebox{3.7\height}{\texttt{Attributes}}
                & \includegraphics[width=0.14\textwidth]{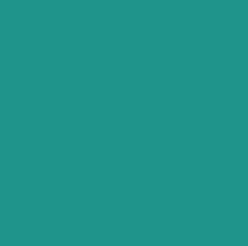}
                & \includegraphics[width=0.14\textwidth]{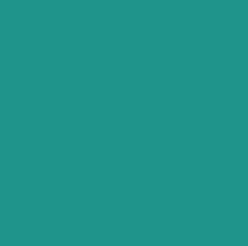}
                & \includegraphics[width=0.14\textwidth]{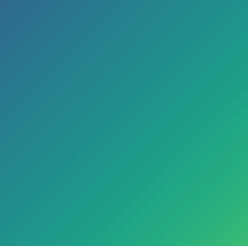}
                & \includegraphics[width=0.14\textwidth]{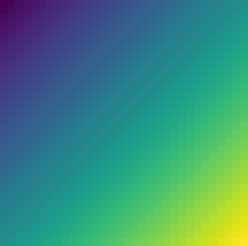}
                & \includegraphics[width=0.14\textwidth]{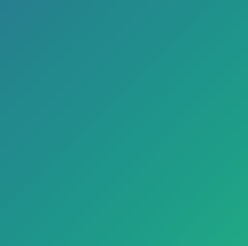}\\
    \raisebox{3.3\height}{(C)} & \raisebox{3.7\height}{\texttt{Attributes with noise}}
                & \includegraphics[width=0.14\textwidth]{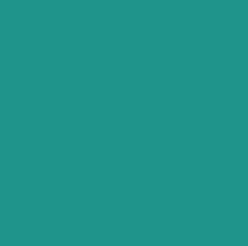}
                & \includegraphics[width=0.14\textwidth]{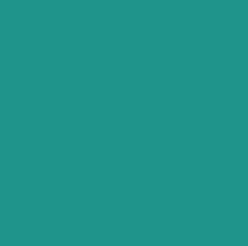}
                & \includegraphics[width=0.14\textwidth]{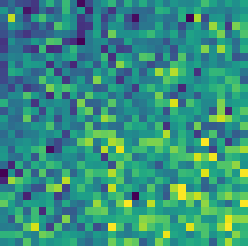}
                & \includegraphics[width=0.14\textwidth]{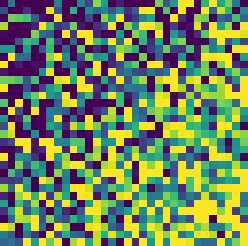}
                & \includegraphics[width=0.14\textwidth]{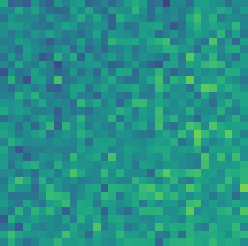}\\
    
    \raisebox{3.3\height}{(D)} & \raisebox{3.7\height}{\texttt{Estimated Homophily}}
                & \includegraphics[width=0.14\textwidth]{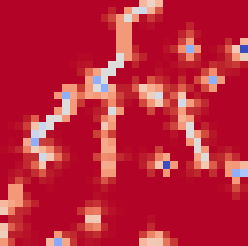}
                & \includegraphics[width=0.14\textwidth]{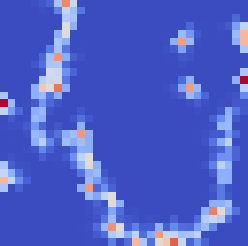}
                & \includegraphics[width=0.14\textwidth]{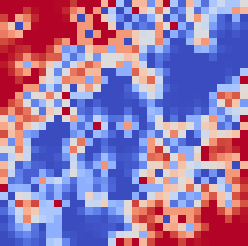}
                & \includegraphics[width=0.14\textwidth]{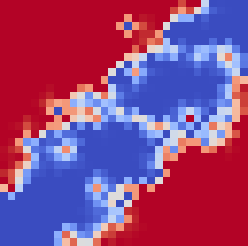}
                & \includegraphics[width=0.14\textwidth]{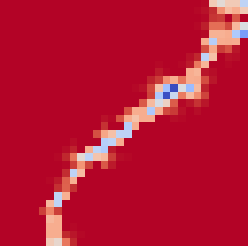}\\
    
    \midrule
    \multicolumn{7}{c}{
      \begin{minipage}{\textwidth}
        \centering
        \includegraphics[width=0.12\textwidth]{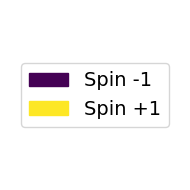}
        \hspace{1.cm}
        \includegraphics[width=0.37\textwidth]{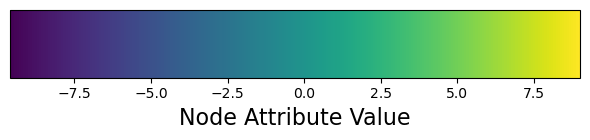}
        \hspace{0.8cm}
        \includegraphics[width=0.37\textwidth]{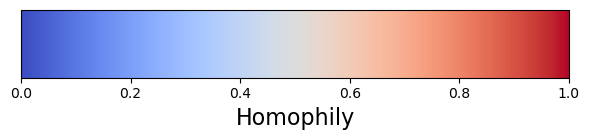}
      \end{minipage}
    } \\
    \bottomrule
  \end{tabular}}
\end{table}

\begin{table}
    \centering
    \caption{\new{Ising classification accuracies (\%), mean $\pm$ std over 5 runs, for setups $i_1$--$i_5$ (see Table~\ref{tab:ising_data_setup}). Bold fonts indicate the best-performing method for each column within its respective noise condition, while gray cells highlight the better result between MAP and non-MAP variants.}
    \label{tab:ising_data_results}}
    \resizebox{\textwidth}{!}{%
    \begin{tabular}{llccccc}
      \toprule
      & Setup & $i_1$ & $i_2$ & $i_3$ & $i_4$ & $i_5$ \\
      \midrule
    \multirow{11}{*}{\rotatebox{90}{\textbf{No noise}}}
    & \eqmakebox[model][l]{\textbf{GGCN}} & 52.20{\scriptsize$\pm$0.00} & 50.63{\scriptsize$\pm$0.95} &  61.17{\scriptsize$\pm$0.66} & 76.68{\scriptsize$\pm$1.17} & 94.63{\scriptsize$\pm$0.00}  \\
    & \eqmakebox[model][l]{\textbf{GGCN}} + \textbf{MAP} & {\cellcolor{gray}91.01{\scriptsize$\pm$0.49}}& {\cellcolor{gray}95.56{\scriptsize$\pm$0.47}}& {\cellcolor{gray}79.52{\scriptsize$\pm$0.39}}& {\cellcolor{gray}92.27{\scriptsize$\pm$0.53}}& {\cellcolor{gray}96.23{\scriptsize$\pm$0.71}}   \\

    & \eqmakebox[model][l]{\textbf{GCN}}  & 52.20{\scriptsize$\pm$0.00} & 48.78{\scriptsize$\pm$0.00} &  61.56{\scriptsize$\pm$0.37} & 71.90{\scriptsize$\pm$0.39} & 94.63{\scriptsize$\pm$0.00}  \\
    & \eqmakebox[model][l]{\textbf{GCN}} + \textbf{MAP} & {\cellcolor{gray}90.82{\scriptsize$\pm$0.61}}& {\cellcolor{gray}96.04{\scriptsize$\pm$0.47}}& {\cellcolor{gray}80.39{\scriptsize$\pm$0.39}}& \textbf{{\cellcolor{gray}93.33{\scriptsize$\pm$0.64}}}& {\cellcolor{gray}95.65{\scriptsize$\pm$0.00}} \\

    & \eqmakebox[model][l]{\textbf{AC}}  & 52.20{\scriptsize$\pm$0.00} & 49.07{\scriptsize$\pm$0.96} &  61.46{\scriptsize$\pm$0.53} & 72.00{\scriptsize$\pm$0.59} & 94.63{\scriptsize$\pm$0.00}  \\
    & \eqmakebox[model][l]{\textbf{AC}} + \textbf{MAP} & {\cellcolor{gray}91.30{\scriptsize$\pm$0.75}} & {\cellcolor{gray}96.23{\scriptsize$\pm$1.12}}& {\cellcolor{gray}80.39{\scriptsize$\pm$0.72}}& {\cellcolor{gray}92.56{\scriptsize$\pm$0.39}}& \textbf{{\cellcolor{gray}97.10{\scriptsize$\pm$0.00}}} \\

    & \eqmakebox[model][l]{\textbf{MLP}}  & 52.20{\scriptsize$\pm$0.00} & 51.22{\scriptsize$\pm$0.00} &  62.05{\scriptsize$\pm$0.57} & 77.07{\scriptsize$\pm$0.62} & 94.63{\scriptsize$\pm$0.00}  \\
    & \eqmakebox[model][l]{\textbf{MLP}} + \textbf{MAP} & {\cellcolor{gray}90.43{\scriptsize$\pm$0.99}}& \textbf{{\cellcolor{gray}96.33{\scriptsize$\pm$0.72}}}& \textbf{{\cellcolor{gray}80.48{\scriptsize$\pm$0.95}}}& {\cellcolor{gray}92.75{\scriptsize$\pm$0.43}}& {\cellcolor{gray}96.91{\scriptsize$\pm$0.39}} \\
    \cmidrule{2-7}
    & \eqmakebox[model][l]{\textbf{GMNN}}&  84.00{\scriptsize$\pm$2.37} & 57.22{\scriptsize$\pm$2.67} & 60.59{\scriptsize$\pm$0.89} & 76.93{\scriptsize$\pm$1.46} & 92.78{\scriptsize$\pm$2.61} \\
    & \eqmakebox[model][l]{\textbf{NeuPSL}$_{\textbf{LP}}$}&  \textbf{93.17{\scriptsize$\pm$0.00}} & 95.12{\scriptsize$\pm$0.00} & 70.83{\scriptsize$\pm$1.44} & 61.27{\scriptsize$\pm$1.79} & 96.68{\scriptsize$\pm$0.20} \\
    & \eqmakebox[model][l]{\textbf{NeuPSL}$_{\textbf{LP+FP}}$}&  - & - & 71.02{\scriptsize$\pm$1.69} & 61.12{\scriptsize$\pm$0.98} & 96.68{\scriptsize$\pm$0.20} \\
    \midrule
    \multirow{11}{*}{\rotatebox{90}{\textbf{With noise}}}
    & \eqmakebox[model][l]{\textbf{GGCN}}  & - & - &  60.39{\scriptsize$\pm$4.10} & 62.15{\scriptsize$\pm$7.15} & 76.10{\scriptsize$\pm$0.44}  \\
    & \eqmakebox[model][l]{\textbf{GGCN}} + \textbf{MAP} & - & - & {\cellcolor{gray}78.26{\scriptsize$\pm$1.10}}& \textbf{{\cellcolor{gray}90.24{\scriptsize$\pm$2.76}}}& {\cellcolor{gray}94.01{\scriptsize$\pm$0.24}} \\
    & \eqmakebox[model][l]{\textbf{GCN}}  & - & - &  62.34{\scriptsize$\pm$0.57} & 71.41{\scriptsize$\pm$0.79} & 76.59{\scriptsize$\pm$0.69}  \\
    & \eqmakebox[model][l]{\textbf{GCN}} + \textbf{MAP}  & - & - & {\cellcolor{gray}79.13{\scriptsize$\pm$0.19}}& {\cellcolor{gray}89.08{\scriptsize$\pm$0.58}}& {\cellcolor{gray}93.53{\scriptsize$\pm$0.58}} \\
    & \eqmakebox[model][l]{\textbf{AC}}  & - & - &  61.46{\scriptsize$\pm$1.88} & 71.61{\scriptsize$\pm$1.17} & 76.68{\scriptsize$\pm$0.78}  \\
    & \eqmakebox[model][l]{\textbf{AC}} + \textbf{MAP}  & - & - & {\cellcolor{gray}77.78{\scriptsize$\pm$0.92}}& {\cellcolor{gray}89.95{\scriptsize$\pm$0.83}}& {\cellcolor{gray}94.01{\scriptsize$\pm$0.24}} \\
    & \eqmakebox[model][l]{\textbf{MLP}}  & - & - &  54.34{\scriptsize$\pm$0.90} & 52.68{\scriptsize$\pm$0.82} & 59.32{\scriptsize$\pm$0.90}  \\
    & \eqmakebox[model][l]{\textbf{MLP}} + \textbf{MAP}  & - & - & \textbf{{\cellcolor{gray}80.00{\scriptsize$\pm$0.66}}}& {\cellcolor{gray}87.15{\scriptsize$\pm$0.39}}& \textbf{{\cellcolor{gray}97.87{\scriptsize$\pm$0.39}}} \\
    \cmidrule{2-7}
    & \eqmakebox[model][l]{\textbf{GMNN}}  &  - & - & 60.93{\scriptsize$\pm$1.56} & 75.66{\scriptsize$\pm$2.04} & 90.44{\scriptsize$\pm$1.24} \\
    & \eqmakebox[model][l]{\textbf{NeuPSL}$_{\textbf{LP}}$}  &  - & - & 70.05{\scriptsize$\pm$1.38} & 57.56{\scriptsize$\pm$0.00} & 97.07{\scriptsize$\pm$0.00} \\
    & \eqmakebox[model][l]{\textbf{NeuPSL}$_{\textbf{LP+FP}}$}  &  - & - & 71.41{\scriptsize$\pm$0.32} & 59.32{\scriptsize$\pm$0.73} & 97.07{\scriptsize$\pm$0.00} \\
    \bottomrule
  \end{tabular}}
\end{table}

\subsection{Ising: Results}
\label{sec:ising-res}
Row (D) in Table \ref{tab:ising_data_setup} illustrates the estimated local homophily values $\hat{\emph{LH}}$ as a heatmap (red: $\hat{\emph{LH}}(v)\approx 1$, blue: $\hat{\emph{LH}}(v)\approx 0$).
The estimates here are quite close to the ground truth values. We note that our method here benefits from the fact that the local homophily values in the Ising graphs vary rather smoothly over the graph. A  limitation of our approach for estimating $\hat{\emph{LH}}$ is that it will not work well on graphs where neighboring nodes have very different local homophily values.

Table \ref{tab:ising_data_results} shows classification accuracies obtained by embedding different GNN architectures in our framework. 
Bold font identifies the best performances for each data setting (columns). We use the state-of-the-art GGCN \cite{yan2022two}, 
which is specifically designed to handle both homophilic and heterophilic data, as well as standard graph convolutional networks (GCN), 
\new{the aggregate-combine GNN (AC) of \cite{barcelo2020logical}}, and multi-layer perceptrons (MLP). 
The latter serve as a baseline to evaluate what can be learned from node attributes alone, without exploiting the graph structure. 
All models were configured with a common architecture of 2 message passing layers and hidden dimension of 16 (32 for MLP). 
GGCN was run with the hyperparameter setting in the source code provided by the authors. In all cases, we train the GNN ${\mathcal N}$ on the node classification task, 
and then compare the accuracies obtained by ${\mathcal N}$ alone, and the accuracy obtained by performing MAP inference with ${\mathcal N}$ as described in Section \ref{sec:mapfornode}.
MAP inference is executed with random restarts, where the inference procedure is run multiple times from different random initializations. 
Specifically, we perform 3 restarts of Algorithm \ref{alg:map}, and use the solution that yields the highest likelihood value. 
The results in the table are averages and standard deviations over 5  executions of the experiments. 
The division into train and test nodes is fixed throughout all the experiments, and the train/validation/test splits are (48/32/20).
For comparison, we also include the GMNN model \cite{qu2019gmnn} \new{and NeuPSL~\cite{pryor-nepsl}} on the same set of training and test nodes as the other models. 
\new{Node classification for citation networks with a homophilic label distribution
is already included as an 
application of NeuPSL~\cite{pryor-nepsl}. 
There, a neural network node classifier based on only the target node's own features is combined with a label-propagation rule designed for homophilic graphs. Apart from using the original bag-of-word node feature vectors, \cite{pryor-nepsl} also apply their approach to modified data where in a pre-processing step the original node features are replaced by an average of the features of a node and its neighbors (thus implementing a basic message passing update). The two versions are denoted  $LP$ (label propagation) and
$LP+FP$ (label and feature propagation)
respectively. 
For graphs with homophily $h > 0.5$, we use the  logical homophily rule as given in \cite{pryor-nepsl}. For graphs with $h<0.5$  we employ a similar rule expressing a heterophily assumption instead. 
Both GMNN and NeuPSL already perform a form of collective node classification aimed at exploiting mutual label dependencies. They are therefore included in the experiments as stand-alone alternatives to our MAP approach, not as possible base models that can expect significant improvement by embedding them into our framework. 
}

Not surprisingly, when the nodes have no informative attributes ($F=0$), then ${\mathcal N}$ alone cannot do much better than predicting the majority class among the training nodes. 
Adding MAP inference, here enables a very significant jump in performance. In Table \ref{tab:ising_data_results},
the gray cells indicate the better performance between the MAP inference variant and the non-MAP instance of each model. 
In the cases $F>0$ and noise-free data settings, as expected, the MLP is performing almost as well as the proper GNN models, and AC performs comparably to GCN and GGCN.
Adding the MAP inference still gives a marked improvement in all cases, with AC+MAP achieving the best noise-free result on two of the five settings ($i_1$ and $i_5$).
The graph with the most fragmented distribution of local homophily values ($i_3$) poses the biggest challenge.
With noisy node attributes, the performance drops for all models.
The comparison between MLP and the proper GNN models shows that here the GNNs have learned to exploit information from neighboring nodes.
Notably, here the basic GCN \new{and AC models slightly outperform GGCN as base predictors for the setup $i_5$,
and MAP inference improves over the corresponding base model in \emph{every} setting, with no exceptions.
AC tracks GCN and GGCN closely in this noisy regime, and the best overall result at each setting is achieved by different models:
GGCN+MAP attains the highest accuracy at $i_4$ ($90.24\%$), while MLP+MAP attains the highest accuracy at both $i_3$ ($80.00\%$) and $i_5$ ($97.87\%$),
the latter despite the MLP's much weaker base accuracy at $i_5$ ($59.32\%$, compared to $76.10$--$76.68\%$ for the GNN models).
This suggests that the homophily-based MAP inference step can substantially compensate for the MLP's inability to directly exploit noisy neighborhood information,
by instead correcting its independent predictions using the inferred local homophily structure alone.}

\new{The stand-alone competitors achieve their strongest results for strongly homophilic graphs $i_1,i_5$, and in the case of NeuPSL with the heterophily rule, also for the extremely heterophilic $i_2$. Neither approach copes well with graphs with local variations in the homophily structure ($i_3,i_4$)}

{\em Overall, GNN-RBN models consistently boost the performance of their embedded GNN models, and outperform alternative approaches to directly exploit homophilic/heterophilic structures in most cases, especially when the homophily of the label distribution shows local variations.}

\section{Real-World Data}
\new{
Having demonstrated the benefits of MAP inference on the fully controlled Ising benchmarks, we now turn to
real-world graphs. We evaluate on the nine benchmark datasets used in~\cite{yan2022two}, which together span a wide range of homophily levels, from strongly heterophilic
(Texas, Wisconsin) to strongly homophilic (Cora, Citeseer, Pubmed). This lets us examine whether the gains
from MAP inference observed in Section~\ref{sec:isingdata} generalize beyond synthetic data. All datasets are loaded via PyTorch
Geometric~\cite{fey2019pyg}, using the dataset classes that reproduce the original splits introduced by
Geom-GCN~\cite{pei2020geomgcn}. For the \texttt{WikipediaNetwork} datasets (Chameleon and Squirrel), however,
we move from the original Geom-GCN splits to the filtered versions described in~\cite{platonov2024critical}, which eliminates some data duplications that compromise a clean train/test split.  We use the same experimental protocols and competitors as for the Ising data experiments.
}

\new{
Table~\ref{tab:realworld} reports the resulting accuracies, with datasets ordered by increasing homophily $h$.
The results show that for both GGCN and GCN,
adding the MAP inference step improves accuracy on eight of the nine datasets; PubMed, both with GCN and GGCN, is the only
case where MAP inference degrades performance, suggesting that when the base model is already close to optimal, then imposing the homophily-matching constraint can override
correct predictions rather than correct them. The improvements are generally largest on
the more heterophilic datasets: Squirrel, Chameleon, Actor, Cornell, Texas, and Wisconsin all show clear gains
for both base models, with the effect being especially pronounced for the weaker GCN baseline.
On the strongly homophilic citation networks (CiteSeer, PubMed, and Cora), the effect of MAP inference is
comparatively modest, due to the already high accuracy obtained by the base models. 
}

\new{Among the competitors, NeuPSL in the LP version performs best
overall, but still ranks below GGCN+MAP on all datasets except Pubmed. These results generally confirm the observations made based on the Ising experiments, except that here we cannot easily trace performance differences to (controlled) data properties like local homophily variations.}

\begin{table}[t]
\centering
\caption{\new{Node classification accuracy (\%) on real-world benchmarks, ordered by
increasing homophily ($h$), reported as mean\,$\pm$\,std. Gray cells highlight the
better result within each base/+MAP pair; \textbf{bold} indicates the best result
overall across all three models.}}
\label{tab:realworld}
\resizebox{\columnwidth}{!}{%
\begin{tabular}{llccccccccc}
\toprule
& & \rotatebox{60}{\textbf{Texas}} & \rotatebox{60}{\textbf{Wisconsin}}
& \rotatebox{60}{\textbf{Actor}} & \rotatebox{60}{\textbf{Squirrel}}
& \rotatebox{60}{\textbf{Chameleon}} & \rotatebox{60}{\textbf{Cornell}}
& \rotatebox{60}{\textbf{Citeseer}} & \rotatebox{60}{\textbf{Pubmed}}
& \rotatebox{60}{\textbf{Cora}} \\
\midrule
\multicolumn{2}{l}{\textit{Homophily} ($h$)}
& .13 & .20 & .22 & .21 & .24 & .31 & .74 & .80 & .81 \\
\multicolumn{2}{l}{\# Nodes}
& 183 & 251 & 7,600 & 2,223 & 890 & 183 & 3,327 & 19,717 & 2,708 \\
\multicolumn{2}{l}{\# Edges}
& 325 & 515 & 26,752 & 93,996 & 17,708 & 298 & 4,676 & 44,327 & 5,278 \\
\multicolumn{2}{l}{\# Classes}
& 5 & 5 & 5 & 5 & 5 & 5 & 6 & 3 & 7 \\
\midrule
\multirow{2}{*}{\textbf{GGCN}}
& Base
& $82.97${\scriptsize$\pm$4.99} & $86.67${\scriptsize$\pm$3.70} & $34.97${\scriptsize$\pm$0.94}
& $31.71${\scriptsize$\pm$2.19} & $33.96${\scriptsize$\pm$2.74} & $75.95${\scriptsize$\pm$4.09}
& $76.87${\scriptsize$\pm$1.46} & \cellcolor{lightgray}$\mathbf{89.13}${\scriptsize$\pm$0.33} & $86.02${\scriptsize$\pm$1.53} \\
& +MAP
& \cellcolor{lightgray}$\mathbf{85.95}${\scriptsize$\pm$5.64} & \cellcolor{lightgray}$\mathbf{87.65}${\scriptsize$\pm$4.39}
& \cellcolor{lightgray}$\mathbf{43.32}${\scriptsize$\pm$3.78} & \cellcolor{lightgray}$36.99${\scriptsize$\pm$2.57}
& \cellcolor{lightgray}$42.17${\scriptsize$\pm$7.44} & \cellcolor{lightgray}$\mathbf{80.27}${\scriptsize$\pm$3.21}
& \cellcolor{lightgray}$\mathbf{81.80}${\scriptsize$\pm$2.62} & $80.83${\scriptsize$\pm$3.71}
& \cellcolor{lightgray}$\mathbf{89.40}${\scriptsize$\pm$1.69} \\
\midrule
\multirow{2}{*}{\textbf{GCN}}
& Base
& $57.03${\scriptsize$\pm$4.90} & $48.24${\scriptsize$\pm$6.86} & $28.14${\scriptsize$\pm$0.65}
& $35.35${\scriptsize$\pm$1.39} & $39.55${\scriptsize$\pm$2.94} & $40.27${\scriptsize$\pm$4.90}
& $74.85${\scriptsize$\pm$1.79} & \cellcolor{lightgray}$86.75${\scriptsize$\pm$0.50} & $86.52${\scriptsize$\pm$1.05} \\
& +MAP
& \cellcolor{lightgray}$62.43${\scriptsize$\pm$8.50} & \cellcolor{lightgray}$68.24${\scriptsize$\pm$11.05}
& \cellcolor{lightgray}$42.23${\scriptsize$\pm$4.60} & \cellcolor{lightgray}$\mathbf{59.28}${\scriptsize$\pm$8.06}
& \cellcolor{lightgray}$\mathbf{45.52}${\scriptsize$\pm$3.67} & \cellcolor{lightgray}$55.41${\scriptsize$\pm$9.15}
& \cellcolor{lightgray}$79.60${\scriptsize$\pm$3.22} & $82.19${\scriptsize$\pm$3.07}
& \cellcolor{lightgray}$89.22${\scriptsize$\pm$1.81} \\
\midrule
\textbf{GMNN}
& 
& $58.11${\scriptsize$\pm$4.23} & $58.43${\scriptsize$\pm$9.44} & $28.74${\scriptsize$\pm$1.51}
& $34.74${\scriptsize$\pm$1.30} & $36.80${\scriptsize$\pm$2.91} & $44.32${\scriptsize$\pm$5.82}
& $74.65${\scriptsize$\pm$1.38} & $84.91${\scriptsize$\pm$0.51} & $81.37${\scriptsize$\pm$1.60} \\
\midrule
\multirow{2}{*}{\textbf{NeuPSL}}
& LP
& $77.57${\scriptsize$\pm$6.29} & $73.53${\scriptsize$\pm$5.49} & $28.59${\scriptsize$\pm$1.03}
& $23.79${\scriptsize$\pm$2.68} & $20.50${\scriptsize$\pm$1.91} & $65.41${\scriptsize$\pm$7.72}
& $74.08${\scriptsize$\pm$2.04} & $84.54${\scriptsize$\pm$0.33} & $85.67${\scriptsize$\pm$1.48} \\
& LP+FP
& $64.32${\scriptsize$\pm$5.77} & $58.04${\scriptsize$\pm$7.50} & $24.59${\scriptsize$\pm$1.56}
& $31.19${\scriptsize$\pm$2.07} & $26.36${\scriptsize$\pm$2.67} & $47.57${\scriptsize$\pm$5.30}
& $73.54${\scriptsize$\pm$1.78} & $83.01${\scriptsize$\pm$0.29} & $84.97${\scriptsize$\pm$1.51} \\
\bottomrule
\end{tabular}}
\end{table}

\section{Application: MAP for Multi-Objective Decision Making}
\label{sec:optimization}

In this section, we explore a fundamentally different type of task than is usually considered in the context of graph learning. We consider multi-objective optimization problems under uncertainty in network domains and show how they can be cast as MAP inference problems.

\subsection{Maximizing Expectations by MAP}
\label{sec:maxmap}

In this sub-section, we cast the problem of maximizing expected values of one or several objective functions as a
MAP optimization problem. This problem transformation is general, and the material in this sub-section is not
specific to relational domains. We assume a problem domain described by variables
partitioned into \emph{control} variables whose values can be freely set, and \emph{random} variables \new{that depend on
  the control variables, and will usually also be subject to additional random noise. Thus, when $X$ is a random and $C$ a control variable, then conditional probabilities $P(X|C)$ are defined, but probabilities
$P(C)$ or $P(C|X)$ are not, because no prior distribution is given for $C$.} 
We are interested in maximizing the expected values of one or several random variables by optimal settings of the control
variables.
In the application we will consider in the following section, the random variables whose expectation is to be
maximized represent conflicting
environmental and economic interests (water quality vs. profit from agriculture), and the control variables represent
decisions on land use. The following shows how such problems can be reduced to MAP inference by essentially
the same trick already employed in our collective node classification application: define the probability of  auxiliary
Boolean random variables as functions of continuous quantities of interest, and then condition the auxiliary variables to \emph{true}
(cf. Section \ref{sec:mapfornode}). 

\begin{definition}
  \label{def:etaX}
  Let $X\in[\emph{min}_X,\emph{max}_X]$ be a bounded, real-valued random variable. We define the min-max normalization of $X$:
  \begin{equation}
    \label{eq:LX}
    L(X):=\frac{X-\emph{min}_X}{\emph{max}_X-\emph{min}_X},
  \end{equation}
  and then a Boolean random variable $\eta_X$ conditional on $X$ by 
  \begin{equation}
    \label{eq:etaX}
     P(\eta_X=\emph{true}|X) = L(X).
  \end{equation}
\end{definition}

The following proposition makes the connection between maximizing $E[X]$ and MAP inference for $\eta_X$.

\begin{proposition}
  Let $X$ and $\eta_X$ as in Definition \ref{def:etaX}.
  Assume that $X$ has a conditional distribution
  $P(X|{\boldsymbol C})$ depending on
  control variables ${\boldsymbol C}$. Then for every configuration ${\boldsymbol c}$ of the control variables
  \begin{equation}
    \label{eq:conddist}
     E[L(X)|{\boldsymbol C}={\boldsymbol c}]= P(\eta_X=\emph{true}|{\boldsymbol C}={\boldsymbol c}),
   \end{equation}
   and 
  \begin{equation}
    \label{eq:argmaxc}
    \emph{argmax}_{\boldsymbol c} E[X|{\boldsymbol C}={\boldsymbol c}]=
    \emph{argmax}_{\boldsymbol c} P(\eta_X=\emph{true}|{\boldsymbol C}={\boldsymbol c}).
  \end{equation}
\end{proposition}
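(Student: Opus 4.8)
The plan is to prove the two claims in sequence, since the second follows almost immediately from the first. For equation~(\ref{eq:conddist}), I would start from the definition of conditional expectation and exploit the linearity that the min-max normalization $L(X)$ inherits. The key observation is that $\eta_X$ is defined so that $P(\eta_X=\emph{true}\mid X)=L(X)$ pointwise; this is precisely a statement that $L(X)$ is the conditional probability of the Boolean event given $X$. I would then compute $P(\eta_X=\emph{true}\mid{\boldsymbol C}={\boldsymbol c})$ by marginalizing over $X$, using the tower property (law of total expectation):
\begin{equation}
  P(\eta_X=\emph{true}\mid{\boldsymbol C}={\boldsymbol c})
  = E\bigl[\,P(\eta_X=\emph{true}\mid X)\,\big|\,{\boldsymbol C}={\boldsymbol c}\,\bigr]
  = E\bigl[\,L(X)\,\big|\,{\boldsymbol C}={\boldsymbol c}\,\bigr].
\end{equation}
This chain of equalities \emph{is} the content of~(\ref{eq:conddist}), so the first part reduces to correctly invoking the definition of $\eta_X$ together with iterated expectation.

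For equation~(\ref{eq:argmaxc}), the plan is to show that $\argmax_{\boldsymbol c} E[X\mid{\boldsymbol C}={\boldsymbol c}]$ and $\argmax_{\boldsymbol c} E[L(X)\mid{\boldsymbol C}={\boldsymbol c}]$ coincide, and then rewrite the latter as $\argmax_{\boldsymbol c} P(\eta_X=\emph{true}\mid{\boldsymbol C}={\boldsymbol c})$ using~(\ref{eq:conddist}). The crucial point is that $L$ is a strictly increasing affine transformation of $X$: by linearity of expectation,
\begin{equation}
  E[L(X)\mid{\boldsymbol C}={\boldsymbol c}]
  = \frac{E[X\mid{\boldsymbol C}={\boldsymbol c}]-\emph{min}_X}{\emph{max}_X-\emph{min}_X}.
\end{equation}
Since $\emph{max}_X-\emph{min}_X>0$ is a positive constant independent of ${\boldsymbol c}$, and subtracting the constant $\emph{min}_X$ likewise does not change the location of the maximum, the function ${\boldsymbol c}\mapsto E[L(X)\mid{\boldsymbol C}={\boldsymbol c}]$ is a positive affine rescaling of ${\boldsymbol c}\mapsto E[X\mid{\boldsymbol C}={\boldsymbol c}]$. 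A monotone increasing transformation preserves the set of maximizers, which gives the first equality; combining with~(\ref{eq:conddist}) gives the identity claimed in~(\ref{eq:argmaxc}).

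There is no genuinely hard step here; the argument is essentially a bookkeeping exercise in conditional expectation plus the elementary fact that affine rescaling by a positive factor preserves $\argmax$. The only place where I would take care is in justifying the boundedness assumption $X\in[\emph{min}_X,\emph{max}_X]$: it guarantees both that $L(X)\in[0,1]$ so that~(\ref{eq:etaX}) defines a valid Bernoulli probability, and that the denominator $\emph{max}_X-\emph{min}_X$ is strictly positive (assuming $X$ is non-degenerate). I would state these two points explicitly as the reason the construction is well-defined, and otherwise keep the proof to the few lines above.
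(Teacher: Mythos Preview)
Your proposal is correct and follows essentially the same approach as the paper: marginalize over $X$ (the paper writes out the integral, you phrase it as the tower property) to obtain~(\ref{eq:conddist}), then use that $L$ is affine with positive slope so that $E[L(X)\mid{\boldsymbol C}={\boldsymbol c}]=L(E[X\mid{\boldsymbol C}={\boldsymbol c}])$ and argmax is preserved. The paper makes one point explicit that you leave implicit: the tower step requires $\eta_X\perp{\boldsymbol C}\mid X$, i.e.\ that $P(\eta_X=\emph{true}\mid X,{\boldsymbol C})=P(\eta_X=\emph{true}\mid X)$, which holds by the construction in Definition~\ref{def:etaX}; it would be worth stating this in your write-up.
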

\begin{proof}
  We obtain (\ref{eq:conddist}) by marginalizing over $X$ and using that $\eta_X$ is independent of
  ${\boldsymbol C}$ given $X$:
  \begin{multline*}
    P(\eta_X=\emph{true}|{\boldsymbol C}={\boldsymbol c})=
    \int P(\eta_X=\emph{true}|X=x)P(X=x|{\boldsymbol C}={\boldsymbol c})dx =\\
     \int L(x) P(X=x|{\boldsymbol C}={\boldsymbol c})dx =
     E[L(X)|{\boldsymbol C}={\boldsymbol c}].
  \end{multline*}
  (\ref{eq:argmaxc}) then follows from the linearity and monotonicity of $L$:
  \begin{displaymath}
    \begin{array}{c}
      E[L(X)|{\boldsymbol C}={\boldsymbol c}]=L( E[X|{\boldsymbol C}={\boldsymbol c}])\\
      E[X|{\boldsymbol C}={\boldsymbol c}]>E[X|{\boldsymbol C}={\boldsymbol c'}]
      \ \Leftrightarrow\ 
      L(E[X|{\boldsymbol C}={\boldsymbol c}])>L(E[X|{\boldsymbol C}={\boldsymbol c'}]).
    \end{array}
  \end{displaymath}
\end{proof}
\new{
  MAP inference is usually defined over a joint distribution of random variables, without a distinction of control and
  random variables. In order to cast the right-hand side of (\ref{eq:argmaxc}) as an instance of MAP inference in
  this strict sense, we can equip ${\boldsymbol C}$ with any prior distribution $P({\boldsymbol C})$. Then we can
  apply Bayes rule:
  \begin{displaymath}
    P(\eta_X=\emph{true}|{\boldsymbol C}={\boldsymbol c})= P({\boldsymbol C}={\boldsymbol c}|\eta_X=\emph{true})
    \frac{P(\eta_X=\emph{true})}{{P(\boldsymbol C}={\boldsymbol c})}.
  \end{displaymath}
  If, in particular, we let $P({\boldsymbol C})$ be the uniform distribution, then
  \begin{displaymath}
    \emph{argmax}_{\boldsymbol c} P(\eta_X=\emph{true}|{\boldsymbol C}={\boldsymbol c})=
    \emph{argmax}_{\boldsymbol c} P({\boldsymbol C}={\boldsymbol c}|\eta_X=\emph{true}),
  \end{displaymath}
  where the right-hand side now matches the generic MAP inference expression (\ref{eq:mapeq}) with
  ${\boldsymbol M}\sim {\boldsymbol C}$ and ${\boldsymbol D}\sim \eta_X$.
  }

Now, suppose the objective is to simultaneously maximize the expected value of two random variables
$X_1,X_2$ depending on control variables ${\boldsymbol C}$.
A first way to handle this scenario is to just apply the above approach to the random variable $X:=\lambda X_1+(1-\lambda)X_2$ with
$\lambda\in[0,1]$ representing a tradeoff between the objectives $X_1,X_2$.
Then MAP inference for ${\boldsymbol C}$ given $\eta_X=\emph{true}$ will maximize $E[\lambda X_1+(1-\lambda)X_2]$.
In this case, if the numerical range of $X_2$ is much
larger than the numerical range of $X_1$ (e.g. $\emph{min}_{X_1}=\emph{min}_{X_2}=0; \emph{max}_{X_2}\gg \emph{max}_{X_1}$), 
then extreme $\lambda$ values may be needed for  $X_1$ to play a major role in the optimization. Alternatively, one
can perform min-max normalizations  $L_1(X_1), L_2(X_2)$ for $X_1,X_2$ individually first, and define
. Then
$X$ already is min-max normalized, and MAP inference for  ${\boldsymbol C}$ given $\eta_X=\emph{true}$ will maximize $\lambda E[L_1(X_1)]+(1-\lambda)E[L_2(X_2)]$.
While essentially equivalent (modulo a re-scaling of the tradeoff parameter $\lambda$), the second approach can be more
intuitive as here $\lambda$ represents directly a tradeoff between the normalized objectives
$L_1(X_1),L_2(X_2)$. It is the approach we will adopt in our following application, where $X_1$ will be just a Boolean variable, and
$X_2$ a numerical variable representing a financial objective.

\subsection{Environmental Planning: Data and Tasks}
\label{sec:hawqs-combinatorial}

We demonstrate our approach using an environmental planning scenario described by real-world watershed data and
simulations performed using  advanced simulation tools.

\paragraph{Simulation data.}
We generate data using the 
\emph{Soil and Water Assessment Tool (SWAT)} \cite{SWAT} and the \emph{Hydrologic and Water Quality System (HAWQS)} \cite{hawqs}.
SWAT is a river basin-scale hydrologic model developed to simulate the effects of land use, land management practices, and climate on water, sediment, and agricultural chemical yields in large, complex watersheds.
HAWQS is a web-based platform that provides user-friendly, cloud-based access to SWAT simulations, integrated with nationally consistent datasets for topography, land use, soil, and weather across the United States. In this study, we utilize SWAT simulation data from HAWQS for the Honey Creek watershed in Iowa, U.S.A. (HUC12: 102802010407), which is predominantly agricultural. Figure \ref{fig:watershed} is a screenshot from HAWQS, showing the watershed and the main water flow directions.

\begin{figure}[t]
    \centering
    \includegraphics[width=0.7\linewidth]{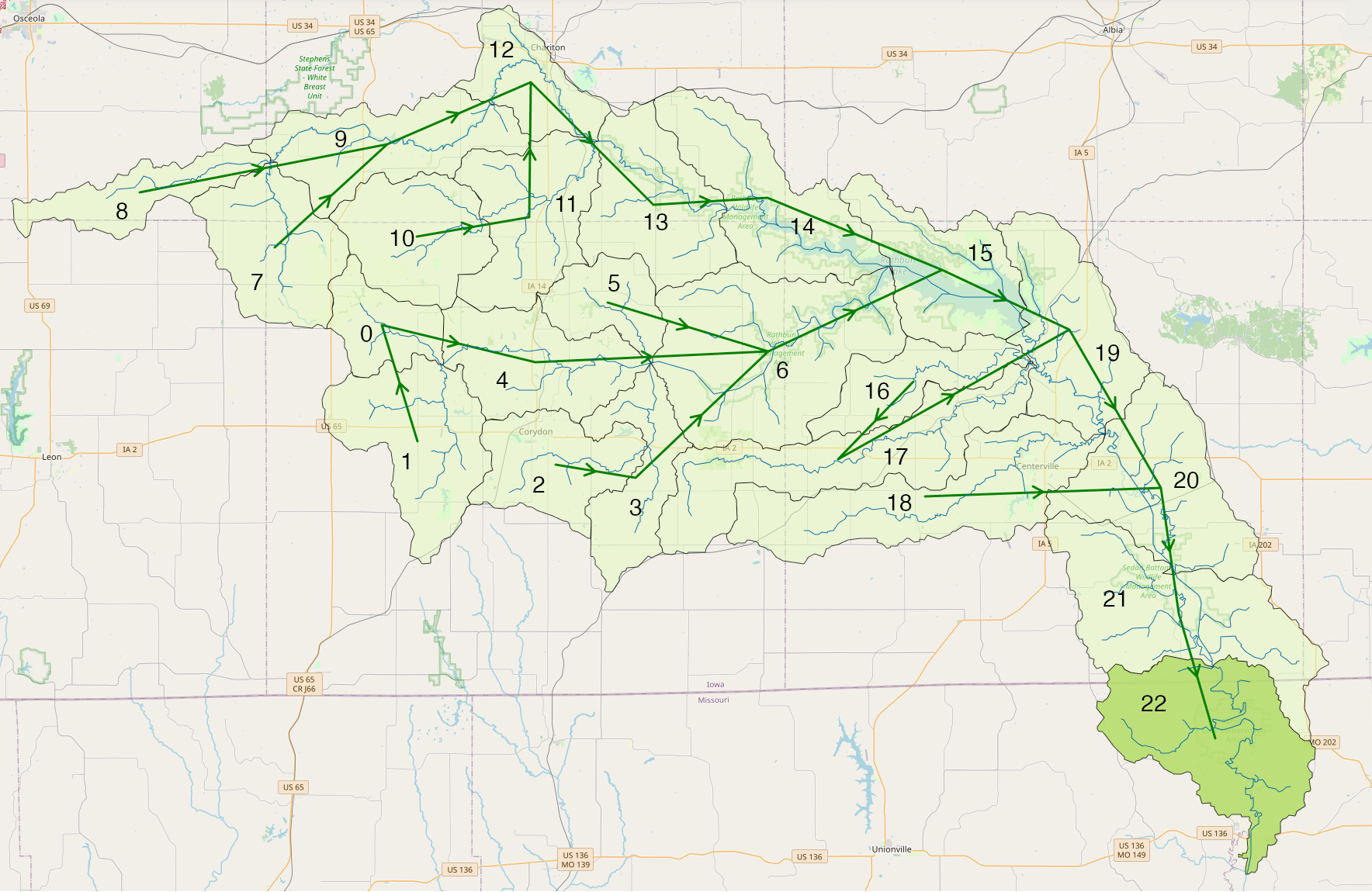}
    \caption{Screenshot from HAWQS of the watershed. The lines with the direction represent the water flow dynamics used by the simulator. The numbers indicate the different subbasins.}
    \label{fig:watershed}
\end{figure}

The basin is subdivided into subbasins. Each subbasin consists of one main water channel and multiple units of land characterized by common geo-physical properties and land use. Subbasins are connected by a downstream relation, and land units are connected to the unique water channel in their subbasin. For our example, land units are divided into types
\emph{agriculture} and \emph{other}, where the latter represent units such as forests or urban areas that are not
subject to annual land use decisions. The basin can be represented  in a graph structure as depicted on the left of
Figure \ref{fig:basingraphs}. 

The SWAT simulations integrate weather data, seasonal variations, and water availability over time.
For our experiments, we simulate an 11-year period (2010–2020) under different crop scenarios. A scenario consists of a specification of crop compositions for each of the subbasins.  We focus on four primary crops (corn, soybean, corn/soy rotation (abbreviated "cosy"), and pasture), and run simulations for 14 manually defined crop scenarios.
We consider nitrogen concentration as the water quality indicator of interest.
The simulation generates time series at a daily resolution of nitrogen concentration values at all subbasins. 

\begin{figure}[t]
    \centering
    \includegraphics[width=\linewidth]{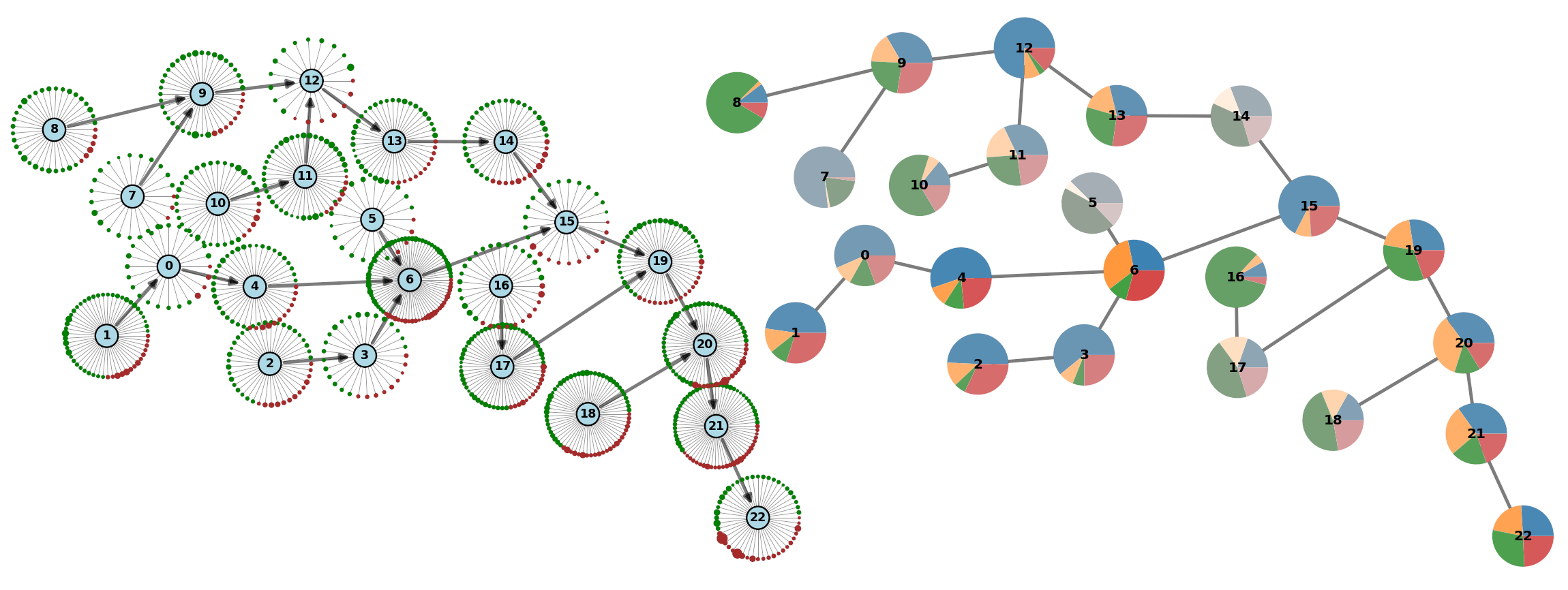}
    \caption{Left: Watershed graph. Blue: water, green: agricultural land, brown: other land. Right: inferred optimal crop composition for $\lambda=0.7$ (blue: corn, green: pasture, red: soy, orange: cosy).}
    \label{fig:basingraphs}
\end{figure}

  We encode the data as heterogeneous graphs with three types of nodes (23 \emph{water},
  676 \emph{land\_agr}, 290 \emph{land\_other} nodes),
  and two types of edges (\emph{land2water}, \emph{water2water}).
   All land nodes are equipped with the attribute  
   $\emph{Area}\in\Rset$ (indicated by node size in Figure \ref{fig:basingraphs}). The agricultural land nodes have the attribute $\emph{Crop}\in \{\emph{pasture},\emph{soy},\emph{cosy},\emph{corn}\}$, which represents our control variable of interest.
 The water nodes are equipped with an attribute
 $\emph{Pollution}\in\{\emph{high},\emph{medium},\emph{low}\}$.  This attribute contains the annual average nitrogen concentration, discretized into three levels defined by equal frequency binning. Every 11-year simulation under a given crop scenario then gives us 11 graphs that only differ for the \emph{Pollution} values at the water nodes.
 Our data, thus, consists of a total of 154 watershed graphs (14 crop scenarios x 11 years). Figure \ref{fig:all-data} in Appendix \ref{sec:envplanning} visualizes the data.

\paragraph{GNN design and training.}

We divide the data into 84 graphs for training, 23 for validation, and 47 for testing. Using the modules for heterogeneous convolution layers of Pytorch Geometric, we build and train a model with two message passing layers and hidden dimension of 20 on the task of predicting the \emph{Pollution} values at the water nodes. The resulting model achieves a 61.15\% accuracy on the test data. We note that maximizing this accuracy is not the purpose of our work. Here, we only need to ascertain that the trained model has sufficient predictive capabilities to be used in our subsequent tasks.
We also note that our learning scenario here is different from the standard inductive or transductive settings: as in transductive settings, test nodes are already seen during training, and, indeed, all nodes are both train and test nodes.
What changes between the train and the test phase are the \emph{Crop} attribute values of the neighbors of the train/test nodes. 

\paragraph{Integrated Model}

The Graph Neural Network defines the conditional probabilities $ P(\boldsymbol{p}|\boldsymbol{c})$ for pollution values $\boldsymbol{p}\in \{\emph{high},\emph{medium},\emph{low}\}^{23} $ given crop assignments
$\boldsymbol{c}\in $ $\{\emph{pasture},$ $\emph{soy},\emph{cosy},\emph{corn}\}^{676} $.
We integrate this GNN prediction model with manually (``expert'') defined RBN components related to the optimization objectives.
For each water node $v$, we define two random variables of interest:
\begin{equation}
  \begin{array}{l}
  X_1(v) = \indicator{\emph{Pollution}(v)=\emph{low}}\hspace{3mm} \\
  {\displaystyle X_2(v) = \sum_{w:land2water(w,v) } \hspace{-3mm}\emph{Area}(w)\sum_{c\in \emph{crops}}  \hspace{-3mm}\beta_c \indicator{\emph{Crop}(w)=c}}
  \end{array}
  \label{eq:map_random_vars}
\end{equation}
where $\indicator$ stands for the indicator function.
$X_1$ represents the objective of low pollution. $\expected{X_1(v)}$ is equal to the probability of low pollution at $v$. $X_2$ is the objective of maximizing the profit from the crops grown in the subbasin of $v$.
The parameters $\beta_c$ represent user defined values for the expected profit per area from growing crop $c$. We assume values
$\beta_c = 1,4,2,5$ for $c=\emph{pasture},\emph{soy},\emph{cosy},\emph{corn}$, respectively. $X_1(v)$ is lower/upper bounded by 0 and 1 for all $v$, and $X_2(v)$ by $\emph{ta}(v)$ and $5\emph{ta}(v)$, where $\emph{ta}(v)$ denotes
the total area of agricultural land in the subbasin of $v$. Thus, the assumptions of Definition \ref{def:etaX} are satisfied, and for $X_1,X_2$ and a
tradeoff parameter $\lambda>0$ we can define the Boolean variable  $\eta_X(v)$ for each water node $v$ as described in Section \ref{sec:maxmap}.
The full RBN encoding of the integrated model is given in Appendix \ref{sec:app_hawqs}.

\paragraph{Results.}

We perform MAP inference for MAP query atoms $\boldsymbol{C}$ consisting of the 676 $\emph{Crop}(v)$ variables,
conditioned on the 23 $\eta_X(v)$ variables set to \emph{true}.
We let $\lambda$ used in the definition of $\eta_X(v)$ vary between 0 and 1. Given a solution $\boldsymbol{C}=\boldsymbol{c}$ obtained for a 
setting of $\lambda$, we compute 
the expected number of water nodes with low pollution, and
the expected total profit in the river basin. Figure \ref{fig:pareto} on the left shows the different tradeoffs for these objectives depending on the $\lambda$ value. For each $\lambda$ value we performed 5 random restarts of the MAP optimization, which then gives 5 distinct points in the scatterplot, here indicating fairly stable results of the MAP inference at each $\lambda$. Figure \ref{fig:pareto} on the right shows the inferred optimal crop compositions at all $\lambda$, illustrating how the crop that has been learned to be the least polluting one (\emph{pasture}, green) gets replaced by the most profitable one (\emph{corn}, blue) as the objective moves towards profitability.

\begin{figure}
    \centering
    \includegraphics[width=0.49\linewidth]{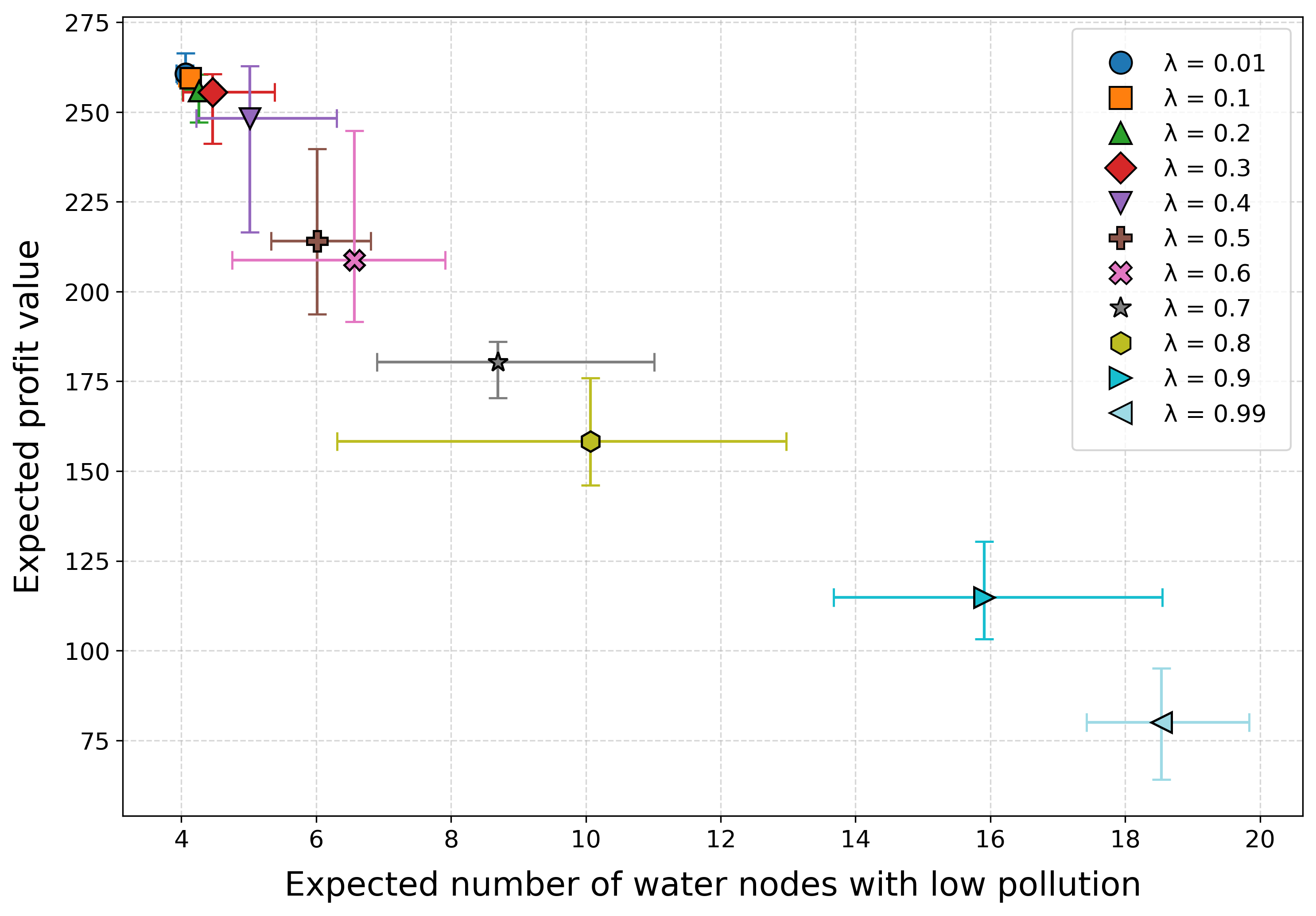}
    \includegraphics[width=0.49\linewidth]{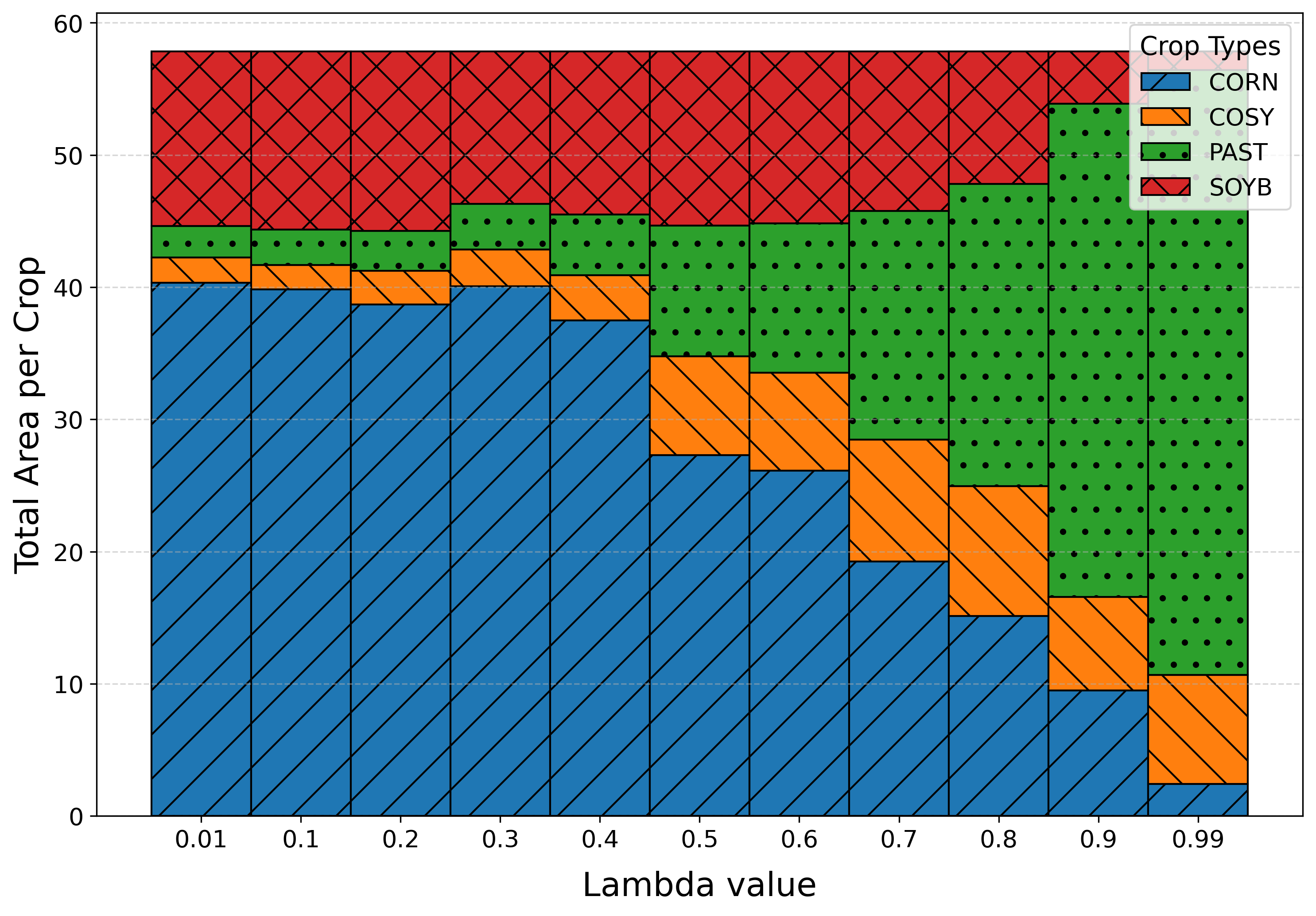}
    \caption{Left: achievable tradeoffs between clean water and profit, the bars indicate the min/max value range that multiple restarts have produced. Right: mean across runs of the optimal crop compositions for different tradeoff coefficients}
    \label{fig:pareto}
\end{figure}

A more fine-grained result is shown on the right of Figure \ref{fig:basingraphs}. It shows the optimized crop composition
at the subbasin level for $\lambda=0.7$. The coloring indicates the optimal composition, averaged over the 5 MAP restarts. The saturation of the coloring represents the variance of the compositions in the restarts. Apart from few subbasins, the results were very stable, and clearly identify those subbasins that are less susceptible to pollution, and hence allow for a high percentage of the profitable \emph{corn} crop, and those where the low pollution objective leads to a high proportion of \emph{pasture}. 

\new{We note at this point that the problem we tackle here is very different from more standard machine learning tasks: there is no ground truth labeling that we are trying to predict, and therefore there also is no simple success metric in terms of accuracy or squared error. Nevertheless, our task here is designed to resemble very relevant practical problems: to provide decision makers with model based analyses of possible (near-)optimal solutions under different tradeoff scenarios. The quality of the advice our approach can provide depends on two main factors: the predictive accuracy of the GNN component, and the optimality of the solution found in the heuristic MAP search. 
}
\new{
The predictive accuracy of the GNN is not our concern at this point. We may well assume that the $\sim 60\%$ accuracy we have obtained is the best one can obtain in light of inherently noisy data, and a target that is affected by many unobserved factors.  For assessing the quality of our MAP solutions no simple comparisons with baseline solutions are available: we are trying to solve a combinatorial optimization problem where the value function is partly defined by a non-linear neural network model. This is outside the scope of classical integer linear programming formulations, for example. One can, of course, consider many alternative heuristic optimization techniques, such as beam search or genetic algorithms. Our claim here is not that our current MAP search routine already is the best solution, but that our application study demonstrates  {\em the utility of a general MAP inference engine in our expressive graph NeSy integration for planning and decision support problems
in complex, network-structured domains.}}   

\subsection{Continuous Relaxation}
\label{sec:gradopt}

\new{
  So far we have modeled our planning problem as a combinatorial optimization problem. This matches reality, where land use
  decisions will have to be made for given units of land. However, the problem allows for a continuous relaxation in which we
  summarize the land usage by percentages of 
  crop allocations, as already used for illustration in Figure \ref{fig:basingraphs} on the right.
  In this section we show how our framework also supports this
  alternative problem formulation, that this formulation allows for faster optimization, and that the results we obtain
  for the relaxed version confirm the validity of our results for the combinatorial version.
  }

\new{
  For this experiment, the heterogeneous watershed graph is collapsed into a homogeneous graph of 23 nodes,
  each representing one subbasin.
Each node $v$ is characterized an attribute $\emph{Area}(v)$ 
that contains the total agricultural area of $v$, and attributes
$\emph{PercCrop}_c(v)$ containing the perecentage of the area allocated to each agricultural crop type
$c\in\{\emph{corn}, \emph{cosy}, \emph{pasture}, \emph{soy}\}$.
Each node also carries the \emph{Pollution}$(v)$ attribute that originally was associated with the unique water node of the subbasin.
}

\new{ We train, in PyTorch Geometric, a two-layer Graph Convolutional Network (GCN) with 32 hidden units on the
  \emph{Pollution} prediction task given the node areas and crop percentages as input features. 
  The resulting model reaches 60.92\% test accuracy, closely 
matching the 61.15\% obtained with the heterogeneous model of Section~\ref{sec:hawqs-combinatorial}.}

\new{
  In this new data model, the profit objective $X_2$  now is expressed as
\begin{equation}
  \label{eq:newX2}
  X_2(v) = \emph{Area}(v)\sum_{c\in \emph{crops}}  \beta_c \emph{PercCrop}_c(v),
\end{equation}
whereas the low pollution objective $X_1$ remains unchanged from (\ref{eq:map_random_vars}). As before, we define
the joint weighted objective $X:=\lambda L_1(X_1)+(1-\lambda)L_2(X_2)$ and its associated Boolean variable $\eta_X$.
Since RBNs support the numeric relations $\emph{PercCrop}_c$ only as fixed inputs without a prior probability distribution,
we cannot solve the problem of which  $\emph{PercCrop}_c$ settings maximize the
probability of $\eta_X=\emph{true}$  as a MAP inference problem. However, as shown in \cite{jiang2015numericinputrelationsrelational},
maximizing the likelihood of observed data by optimizing value settings of numerical input relations reduces to standard
RBN parameter optimization, as already supported by the likelihood graph data structure (the likelihood graph will now look a little
different from the one depicted in Figure \ref{fig:lgraph}: it has no MAP atom nodes, but instead contains at the input level
atom nodes for the numeric input relations on whose value the single observed data atom $\eta_X=\emph{true}$ depends).
In this case, now, in order to compute the gradient of the likelihood function with respect to the numeric input relations, we
actually have to use the {\tt evaluateGradients} function of the GNN interface in order to access GNN gradients inside the
RBN level gradient-based optimization procedure.\footnote{\new{The likelihood graph computes gradients that then can be fed into
  any optimization algorithm. The \emph{Primula} software implements several algorithms, including ADAM which is used in our experiments.}}

As in the combinatorial MAP version, the likelihood function requires marginalization over the unobserved \emph{Pollution} atoms.
Thus, here too, we approximate the full sum by a samples of  \emph{Pollution} values that are periodically re-sampled during
the gradient descent optimization process. 
}

\new{\paragraph{Results.}
  Figure~\ref{fig:mapvsadam} shows the pollution-profit tradeoffs obtained in the continuous relaxation version for different
  $\lambda$ values, and compares these with what was obtained in the combinatorial MAP setting (Figure \ref{fig:pareto}). We observe
  that the continuous relaxation provides slightly better tradeoff options, but that the results obtained from both versions are
  quite consistent. The slight improvement can be due to two reasons: a) the relaxation of the problem formulation allows for
  solutions with higher values for the objectives; b) gradient-based optimization is more effective than heuristic combinatorial
  optimization. An analysis of the difference in runtime between the two versions is included in the following section. 
}

\begin{figure}[h]
    \centering
    \includegraphics[width=0.6\linewidth]{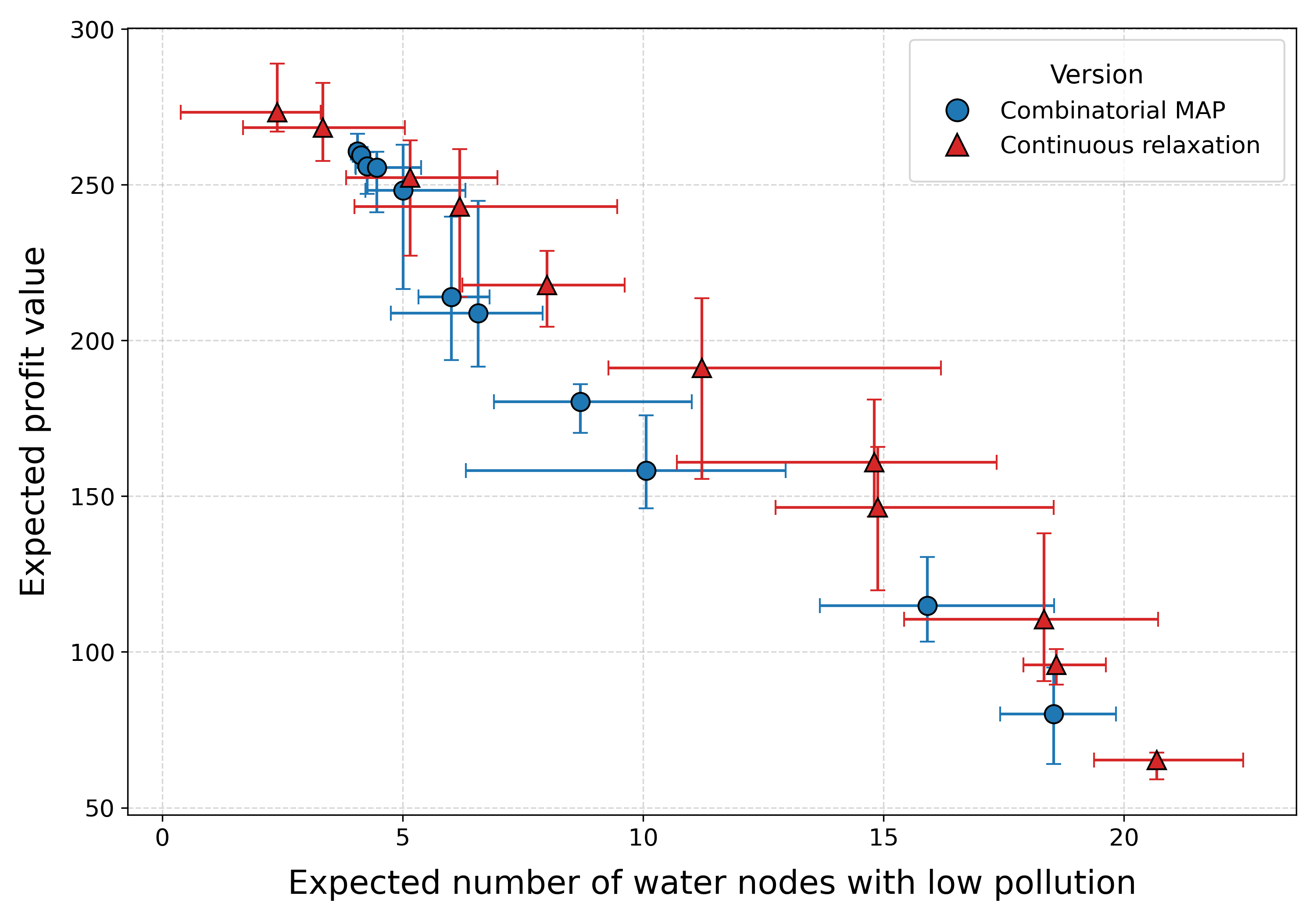}
    \caption{\new{Pollution/Profit tradeoff using the two different approaches: MAP inference and with numeric optimization. Both methods have to perform Gibbs sampling on the unobserved water pollution nodes.}}
    \label{fig:mapvsadam}
\end{figure}

\section{Runtime Behavior}
\label{sec:runtime}

\new{In this section, we investigate some of the key factors that determine the time complexity and scalability of our framework. 
First, we consider the tradeoffs involved between the two integration methods introduced in Sections \ref{sec:GNN2RBN} and \ref{sec:interface}.
Although the compiled and interface versions are semantically equivalent, they differ in
computational aspects. 
To investigate some of these aspects,
we performed a first experiment with the Ising dataset ($32\times32$ grid). 
We trained the same AC GNN architecture, with a hidden dimension of 32
under both settings: first, as an external PyTorch component interfaced to the RBN
(Section~\ref{sec:interface}), trained with ADAM in PyTorch; and second, as a compiled
RBN representation (Section~\ref{sec:GNN2RBN}), trained with the ADAM implementation in 
\textit{Primula}.
The results obtained from the trainings are equivalent, but as the Ising Model rows of Table~\ref{tab:merged_results} show at column \textbf{Time/epoch}, 
training in PyTorch is much faster than training the compiled RBN version (to make the comparison more meaningful
the PyTorch model is trained only on CPU). Part of the explanation for the huge performance gap we observe can be the
difference between all operations performed at the scalar level in the compiled version, vs. matrix/vector operations
in PyTorch.}
\new{The columns \textbf{LG constr.} and  \textbf{Time/restart} contain the times for the construction of the likelihood
  graph, and the time for one restart of MAP inference, respectively, in the collective node classification task
  based on the trained models.
The largest difference between the two versions is evident in the likelihood graph
construction time, which is to be expected as the likelihood graph in the compiled version must re-construct the
full computation graph of the GNN model. 
The MAP search itself is very fast for this task, with both versions completing in a fraction
of a second. Still, the interface version is noticeably slower than the compiled version (0.16
seconds vs.\ 0.02 seconds). This difference can be explained by the overhead of the Java-PyTorch interface,
and some losses in local update possibilities, as described in Section \ref{sec:likgraph}.}

\new{Next, we consider the  runtime performance of
inference in the applications of Sections \ref{sec:hawqs-combinatorial} and \ref{sec:gradopt}.
Both  were handled with the interface method, and in both applications, inference posed greater computational challenges than model training.
Table~\ref{tab:merged_results} reports in the rows ``Multi-objective MAP'' and ``Multi-objective Numeric'' execution times, along with the
the numbers of the different types of atoms that define the problem scenario. We observe that the continuous relaxation version of
the problem is significantly faster to solve. This can be partly attributed to the more compact solution space (92 numerical variables, vs. 676 categorical ones), but is probably mostly due to faster convergence of gradient-based optimization than combinatorial search.
}

\new{We next try to analyze the root causes of the very long computation time of $\sim$80 min.  for inference in the multi-objective
  MAP problem. In particular, we aim to assess the contribution to this time that stems from the repeated re-sampling of the
  23 unobserved atoms. For this we design a "Simplified MAP" version of the
  problem in which the observations $\eta_X(v)=\emph{true}$, where $\eta_X(v)$ then depends on the unobserved
  \emph{Pollution}$(v)$ atoms, are replaced with the direct conditions \emph{Pollution}$(v)$=\emph{low}. The solution space defined
  by the 676 MAP atoms is the same in both versions, as is the functional role that the integrated GNN model plays
  in the optimization task. 
We see that in the simplified MAP problem the time per restart is reduced to $\sim$21 sec., giving strong indication that sampling is the main computational bottleneck in the original problem.  }


\begin{table}[h]
    \centering
    \small
    \renewcommand{\arraystretch}{1.3}
    \resizebox{\linewidth}{!}{%
    \begin{tabular}{@{}llcccccc@{}}
    \toprule
    \textbf{Experiment} & \textbf{Integration} & \textbf{MAP atoms} &
    \makecell{\textbf{Numerical}\\\textbf{input atoms}} & \textbf{Unobs. atoms} & \textbf{Time/epoch} &
    \textbf{LG constr.} & \textbf{Time/restart} \\
    \midrule
    \multirow{2}{*}{Ising Model}
        & Interface & 1,024 & 0 & 0 & $\sim$0.0023 s & $\sim$3.4\,s   & $\sim$0.16\,s \\
        & Compiled  & 1,024 & 0 & 0 & $\sim$84.287 s & $\sim$78.4\,s  & $\sim$0.02\,s \\
    \midrule
    Multi-objective MAP
        & Interface & 676 & 0 & 23 & - & $\sim$0.23\,s & $\sim$80\,min \\
    \midrule
    Simplified MAP
        & Interface & 676 & 0 & 0  & - & $\sim$0.20\,s & $\sim$21.31\,s \\
    \midrule
    Multi-objective Numeric
        & Interface & 0 & 92 & 23 & - & $\sim$0.07\,s & $\sim$96.57\,s \\
    \bottomrule
    \end{tabular}%
    }
    \caption{\new{Execution time and construction cost across GNN--RBN integration strategies and experiments. Time/restart is averaged over 5 restarts. 
    The GNNs used in the multi-objective experiments are not supported by our current GNN-to-RBN compiler.} 
    }
    \label{tab:merged_results}
\end{table}
\new{
To complement the synthetic benchmarks above, we also evaluated our framework on a real-world dataset. 
Using the CiteSeer citation network in the node classification setting of Section~\ref{sec:nodeclass}, 
we measured the average wall-clock time required to complete MAP inference for a single train/test split. 
Our framework required approximately 70 minutes per split, compared to approximately 45 seconds per split 
for NeuPSL on the same data. 
Our approach performs MAP inference over a Likelihood Graph, 
whereas NeuPSL relies on a continuous relaxation solved via convex optimization, 
which scales considerably better  with the size of the underlying graphical model. 
This difference highlights an important tradeoff: 
our framework offers greater flexibility and a more expressive relational representation (with accuracy improvements ranging from $4\%$ to almost $30\%$ on real-world datasets, see Table~\ref{tab:realworld}), 
at the cost of substantially higher inference time on larger, real-world graphs.}

\new{Finally, we study how inference time and the Likelihood Graph scale with the size of the input graph, using the Ising model on square grids of increasing dimension $n$, from $16\times16$ (256 nodes) up to $128\times128$ (16384 nodes). 
It is worth noting here that, for this grid topology, the number of edges grows linearly with the number of nodes 
(each node being connected to at most its four grid neighbors), so that increasing $n$ does not introduce any change in graph density, only in overall size.
Table~\ref{tab:ising_runtime} reports, for each grid size, the likelihood graph (L.G.) internal nodes, the construction time, and the average MAP search time per restart; All quantities grow with $n$, but the growth remains close to linear in the number of nodes, 
demonstrating much better actual scaling behavior than the worst case described in Section~\ref{sec:likgraph} was described.}

\begin{table}[h]
    \centering
    \begin{tabular}{lcccc}
        \toprule
        \textbf{Grid} ($n\times n$) & \textbf{\# nodes} & \textbf{\# L.G. nodes} & \textbf{L.G. construction time} & \textbf{MAP time/restart} \\
        \midrule
        16  & 256    & 1,729    & 0.401 s  & 0.081 s \\
        32  & 1,024   & 7,368    & 1.243 s  & 0.242 s \\
        64  & 4,096   & 29,162   & 4.583 s  & 0.975 s \\
        128 & 16,384  & 117,328  & 20.670 s & 5.795 s \\
        \bottomrule
    \end{tabular}
    \caption{\new{Likelihood graph (L.G) construction time and average MAP search time per restart, as a function of the Ising grid dimension $n$ (grid size $n \times n$), including the number of nodes in the gradient graph.}}
    \label{tab:ising_runtime}
\end{table}


\section{Conclusion}

We have introduced a neuro-symbolic system that combines the high predictive power of graph neural networks with the logical
expressivity and general reasoning capabilities of relational Bayesian networks. The obtained integration is seamless and
faithful in that
the original semantics of a GNN model exactly correspond to the semantics of RBN building blocks, and in
that one or several GNN components can be combined with symbolic representations in an arbitrary order, and with arbitrary dependencies.
Thus, there is no requirement for a model structure consisting of a low-level neural, and a high-level symbolic layer.
The resulting generative probabilistic models provide general probabilistic inference methods for conditional probability
queries. In this paper we have focused on the use of maximum a-posteriori probability queries, and have demonstrated how this
type of query can be used to solve in a uniform algorithmic
framework two very different types of problems: collective node classification under
homo- and heterophilic conditions, and multi-objective planning and decision making in relational domains.
The results demonstrate the usefulness and versatility of our approach. Future work will be directed at refining the current
MAP inference algorithm further towards increased robustness and scalability, and by applying our approach to real-world
diagnostic and decision support problems.

\bibliographystyle{plain}
\bibliography{bib}

\appendix

\section{Proof of Proposition \ref{prop:reprequiv}}
\label{sec:proof3.1}
\new{
We begin by reviewing the exact definition of the class of ACR networks from \cite{barcelo2020logical}. ACR networks
generalize the rudimentary message passing update described by Equation (\ref{eq:gnn-update}) as follows:
\begin{equation}
  \label{eq:acr-1}
  \mathbf{h}^{k+1}(v) = f\left(W_1^{k+1} \mathbf{h}^{k}(v) + W_2^{k+1} \mbox{AGG}^{k+1}(\leftms  \mathbf{h}^{k}(u)| u \in \mathcal{N}_v \rightms)+
   W_3^{k+1}\mbox{READ}^{k+1}(\leftms  \mathbf{h}^{k}(u)| u \in V \rightms) + \mathbf{b}^{k+1}\right)
\end{equation}
where
\begin{itemize}
\item $W_1^{k+1},W_2^{k+1},W_3^{k+1}$ are $m\times d$ dimensional weight matrices; $\mathbf{b}^{k+1}$ an $m$-dimensional bias vector
  ($m,d$ as in Section~\ref{sec:gnn}),
\item $\mbox{AGG}^{k+1}$ and $\mbox{READ}^{k+1}$ are aggregation functions. In the original definition of \cite{barcelo2020logical} no specific assumptions are made for AGG or READ. We shall require that both are standard aggregation functions such as SUM, MEAN or MAX that are defined component-wise, i.e.,
the aggregation of a multiset of vectors is defined by individual aggregation of each of the vectors' components.
\item $f$ is a non-linear activation function.  
\end{itemize}
}
\new{
Thus,  (\ref{eq:gnn-update}) is the special case where only $W_2$ is nonzero, AGG is SUM, and $f=\sigma$. 
}
\new{
\reprequiv*
}
\new{
The proof of
Proposition \ref{prop:reprequiv} now is just an elaboration of the RBN encoding (\ref{eq:probform-mp}) of the scalar version
(\ref{eq:scalar-mp}) of (\ref{eq:gnn-update}).
}

\begin{proof}
\new{
  Let $\mathcal{N}$ be a $K$-layer GNN as stated in theorem. Denote with $d_k$ the dimension of the
  embedding vectors $\mathbf{h}^k$. 
  We first show inductively for $k=0,\ldots,K$ that the components $ \mathbf{h}^{k}_i(v)\ (i=0,\ldots,d_k-1)$
  of the embedding
  vectors can be computed by probability formulas $F^k_i(v)$.
}

\new{
  $k=0$: here $d_0=l$ and $\mathbf{h}^{0}_i(v)=X_i(v)$. In the RBN setting we can view each $X_i$ as a unary numeric
  input relation belonging to ${\mathcal R}_{\emph{in}}$, and simply put $F^0_i(v)=X_i(v)$.
}

\new{
  $k\mapsto k+1$: we introduce as abbreviations for components of (\ref{eq:acr-1}):
  \begin{displaymath}
    \begin{array}{l}
      \mathbf{g}^{k}_1(v):=\mathbf{h}^{k}(v)\\
      \mathbf{g}^{k}_2(v):= \mbox{AGG}^{k+1}(\leftms  \mathbf{h}^{k}(u)| u \in \mathcal{N}_v \rightms)\\
      \mathbf{g}^{k}_3(v):= \mbox{READ}^{k+1}(\leftms  \mathbf{h}^{k}(u)| u \in V \rightms)
    \end{array}
  \end{displaymath}
  Since the activation function $f$ in  (\ref{eq:acr-1}) is applied component-wise, we have that
  \begin{displaymath}
    \mathbf{h}^{k+1}_i(v) = f\left( (W_1^{k+1}\mathbf{g}^{k}_{1}(v))_i+(W_2^{k+1}\mathbf{g}^{k}_{2}(v))_i+
    (W_3^{k+1}\mathbf{g}^{k}_{3}(v))_i+ \mathbf{b}^{k+1}_i \right)\ \ (i=0,\ldots,d_{k+1}-1)
  \end{displaymath}
  where for $l=1,2,3$ the $W^{k+1}_l$ are $d_{k+1}\times d_k$-dimensional matrices, and the $ \mathbf{g}^{k}_l$ are
  $d_k$-dimensional vectors. Assume, now, that for $l=1,2,3$ and $j=0,\ldots,d_k-1$ we have probability formulas
  $G_{l,j}(v)$ that compute $\mathbf{g}^{k}_{l,j}(v)$. Then
   \begin{alignat*}{3}
    H_{l,i}(v) :=\ & \texttt{COMBINE } && W^{k+1}_{l,i,0} \cdot G_{l,0}(v),\\
    & &&\ldots \\
    & && W^{k+1}_{l,i,d_k-1} \cdot G_{l,d_k-1}(v) \\
    & \texttt{WITH SUM} 
\end{alignat*}
computes $ (W_l^{k+1}\mathbf{g}^{k}_{l}(v))_i$. The construction of the required $G_{l,j}(v)$ differs slightly for 
different $l$. For $l=1$ we simply have $ G_{1,j}(v) = F^k_j(v)$ as per induction hypothesis. For
$l=2$ we set
\begin{equation}
   \label{eq:local1010}
\begin{split}
    G_{2,j}(v) :=& \texttt{COMBINE } F^k_j(u)\\
    & \texttt{WITH AGG} \\ 
    & \texttt{FORALL } u \\
    & \texttt{WHERE } \emph{edge}(v,u),
\end{split}
\end{equation}
whereas for $l=3$ we use
\begin{equation}
   \label{eq:local1020}
\begin{split}
    G_{3,j}(v) :=& \texttt{COMBINE } F^k_j(u)\\
    & \texttt{WITH READ} \\ 
    & \texttt{FORALL } u,
\end{split}
\end{equation}
where the only difference to (\ref{eq:local1010}) is that the aggregation is over all nodes, not only neighbors of $v$.
We finally need to define a probability formula $F^{k+1}_i(v)$ that computes
\begin{equation}
  \label{eq:local1030}
  f(H_{1,i}(v)+H_{2,i}(v)+H_{3,i}(v)+\mathbf{b}_i^{k+1}).
\end{equation}
In the language of probability formulas, both aggregation and activation are handled jointly by a combination function formula.
If $f=\sigma$ in (\ref{eq:local1030}), then (\ref{eq:local1030}) is computed by the logistic regression combination function:
\begin{equation}
   \label{eq:local1040}
\begin{split}
    F^{k+1}_i(v) :=& \texttt{COMBINE } H_{1,i}(v),H_{2,i}(v),H_{3,i}(v),\mathbf{b}_i^{k+1}\\
    & \texttt{WITH LOG-REG}. 
\end{split}
\end{equation}
To support other activation functions, new customized combination functions can be defined (it is worth noting that at the implementation
level all that is needed for a new activation function $f$ is a module that computes values and derivatives of $f$).
}

\new{
As the result of our inductive construction we obtain probability formulas
\begin{displaymath}
  F^K_0(v),\ldots, F^K_{d_K-1}(v)
\end{displaymath}
computing the final node representations. The application of the \emph{softmax} in the GNN is then equivalent to
the probability formula \ccsoftmax$ F^K_0(v),\ldots, F^K_{d_K-1}(v)$.}
\end{proof}

\section{Propagate Homophily Algorithm}
\label{sec:propalg}

\begin{algorithm}[h]
\caption{Propagate Homophily}
\label{alg:propagate_homophily}
\begin{algorithmic}[1]
\Procedure{PropagateHomophily}{graph($V,E$), iterations, tolerance}
    \State Convert \textbf{graph} to an undirected graph
    \State  $W_{train} \gets 2.0, \quad W_{test} \gets 1.0$
    \State Initialize: \texttt{stabilized[$v$]} $\gets$ \texttt{false}, \quad \texttt{train\_hom[$v$]} $\gets 0$ for all $v \in V$
    \State $sumTrainHom \gets 0$
    \State $countValidTrain \gets 0$
    \medskip

    \For{each training node $v \in V$ with non-empty $trainNbrs$}
    \State \texttt{train\_hom[$v$]} $\gets$ \Call{ComputeLocalHomophily}{$trainNbrs$}
    \State \texttt{hom[$v$]} $\gets$ \texttt{train\_hom[$v$]}
    \EndFor
    \State $initValue \gets$ \Call{Average}{\{\texttt{train\_hom[$v$]} $\mid v \in V_{\text{train}}$ and $trainNbrs \neq \emptyset$\}}
    \medskip
    \For{each test node $v$, or training node $v$ with empty $trainNbrs$}
        \State \texttt{hom[$v$]} $\gets initValue$
    \EndFor
            
    \medskip
    \State $converged \gets$ \texttt{false}
    \For{$iter \gets 1$ \textbf{to} iterations \textbf{while not} converged}
        \State $converged \gets$ \texttt{true}
        \For{each node $v \in V$}
            \State $nbrs \gets$ \Call{getAllNeighbors}{$v$, graph}
            \State $trainNbrs \gets$ \Call{getTrainNeighbors}{$v$, graph}
            \State $testNbrs \gets$ \Call{getTestNeighbors}{$v$, graph}
            \medskip
            \If{$v$ is a test node \textbf{or} ($v$ is training and $trainNbrs$ is empty)}
                \If{$nbrs$ is empty}
                    \State \texttt{hom[$v$]} $\gets initValue$
                \Else
                    \State \texttt{hom[$v$]} $\gets$ \Call{WeightedAverage}{$trainNbrs, testNbrs, W_{train}, W_{test}$}
                \EndIf
            \ElsIf{$v$ is a training node and $testNbrs$ is not empty}
                \State $avgTest \gets$ \Call{Average}{\texttt{[hom[$u$] for each $u \in testNbrs$]}}
                \State \texttt{hom[$v$]} $\gets$ \Call{WeightedAverage}{$\text{\texttt{train\_hom[$v$]}}, avgTest, W_{train}, W_{test}$}
            \EndIf
            \medskip
            \State \textbf{if} change in \texttt{hom[$v$]} $>$ tolerance \textbf{then} $converged \gets$ \texttt{false}
        \EndFor
    \EndFor
    \State \Return \texttt{hom}
\EndProcedure
\end{algorithmic}
\end{algorithm}

The Propagate Homophily function (Algorithm \ref{alg:propagate_homophily}) iteratively refines each node's homophily value in a graph by leveraging the connectivity among nodes. Initially, the graph is transformed into an undirected structure to ensure symmetric information flow. Each node is assigned a baseline homophily value derived from training data. For training nodes, a local homophily score is computed using only their training neighbors. During each iteration, the algorithm updates each node's score by averaging the homophily values of its neighbors. For test nodes, or training nodes whose neighbors are exclusively test nodes, the update is a simple average. For training nodes with both training and test neighbors, a weighted average is computed that balances the local training homophily with the average test neighbor score. The process repeats until the change in any node's score is below a predefined tolerance or a maximum number of iterations is reached, yielding a refined homophily vector that encapsulates both the initial training signals and the graph's structure.




\section{Robustness analysis}
\new{In this section, we briefly examine the robustness of our method with respect to the
hyperparameters that most affect the results. As in the previous sections, we present
experiments separately for the Ising and Water planning tasks.}

\new{In the first experiment, we investigate how the number of restarts affects the final
likelihood of the solution found by MAP search. We use the $i_3$ dataset with the AC
GNN and interface, running $N$ runs for each of $K$ different restart counts, keeping
only the best result out of the $K$ restarts at each run. Figure~\ref{fig:rest_k} shows
the results across 50 such runs for each value of $K$. The plot indicates that
increasing the number of restarts makes better solutions more likely, though for this
problem the improvement is modest, suggesting that good solutions can still be found
with relatively few restarts.}

\begin{figure}[h]
    \centering
    \includegraphics[width=0.6\textwidth]{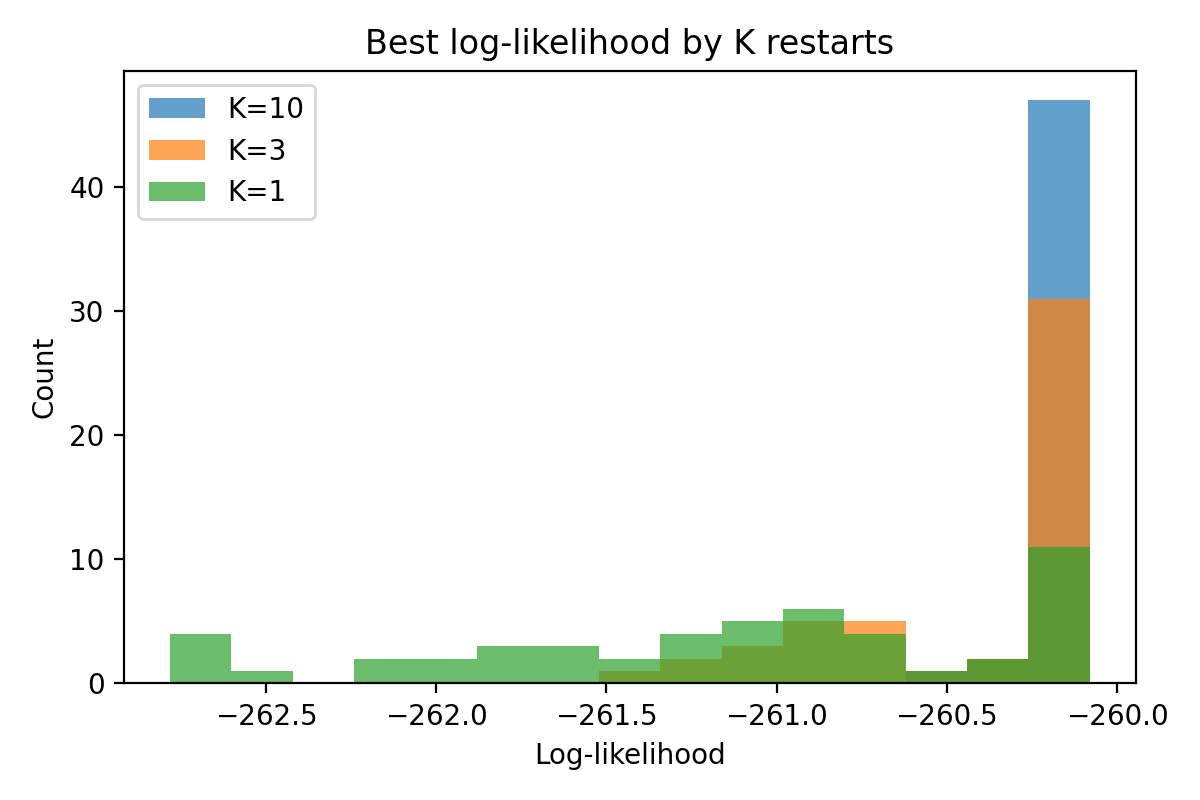}
    \caption{Final log-likelihood of the solutions in the Ising data, using different restarts.}
    \label{fig:rest_k}
\end{figure}

\new{We now investigate how different values of the parameters described in~\ref{sec:likgraph}, \textit{batch size}, \textit{number of chains}, 
and \textit{window size}, affect the results in terms of quality and time. These experiments are conducted on the water planning task described 
in Section~\ref{sec:optimization}, using a tradeoff parameter of $\lambda=0.5$. In this case, we run MAP search with 5 runs and 1 restart each, 
comparing the results in terms of the final log-likelihood of the solution and the total execution time. For each execution, we performed a parameter 
search by keeping two of the three parameters fixed while varying the third, testing batch sizes of 1, 5, 20, and 50, number of chains of 
5, 10, 20, and 30, and window sizes of 2, 5, 10, and 100.}

\new{Figure~\ref{fig:robustness_map} clearly shows that the batch size (the number of atoms
flipped at each iteration) is inversely proportional to the total time: increasing it
reduces execution time while still yielding results very similar to those obtained
with smaller batch sizes. For sampling, on the other hand, increasing the number of
resamples, whether through window size or number of chains, leads to a substantial
increase in total time spent. Regarding the final log-likelihood, it is similarly
clear that spending more time does not substantially improve the results, confirming
that comparable results can still be obtained with fewer samples.}

\begin{figure}
    \centering
    \includegraphics[width=0.9\linewidth]{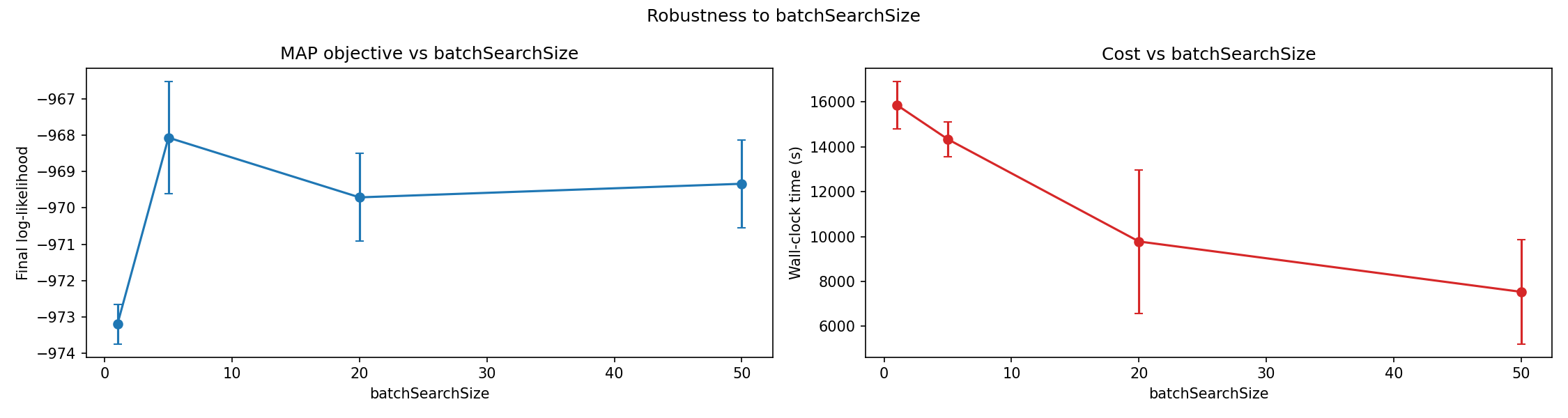}
    \includegraphics[width=0.9\linewidth]{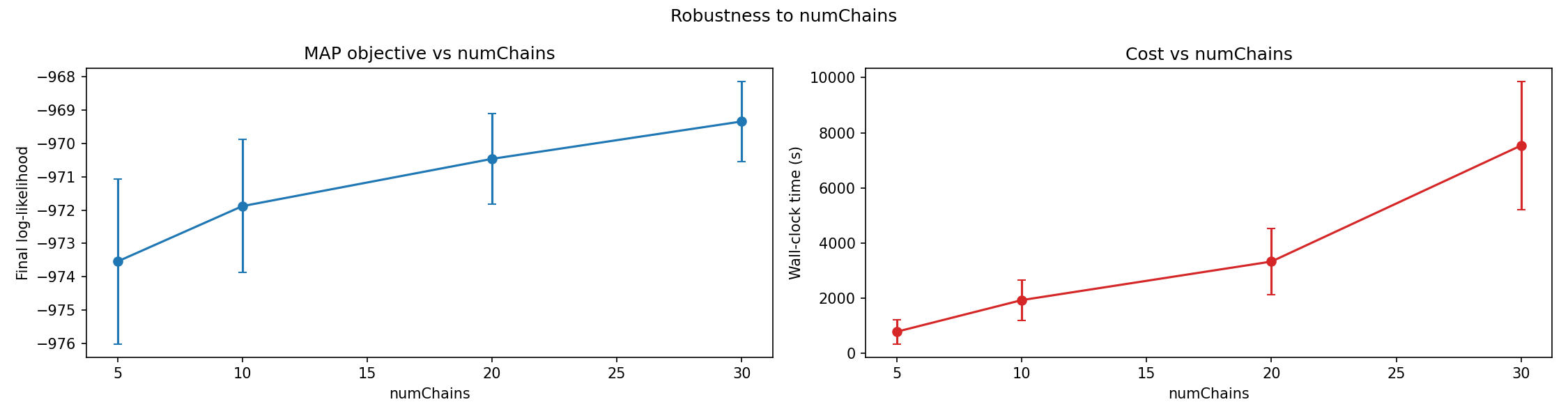}
    \includegraphics[width=0.9\linewidth]{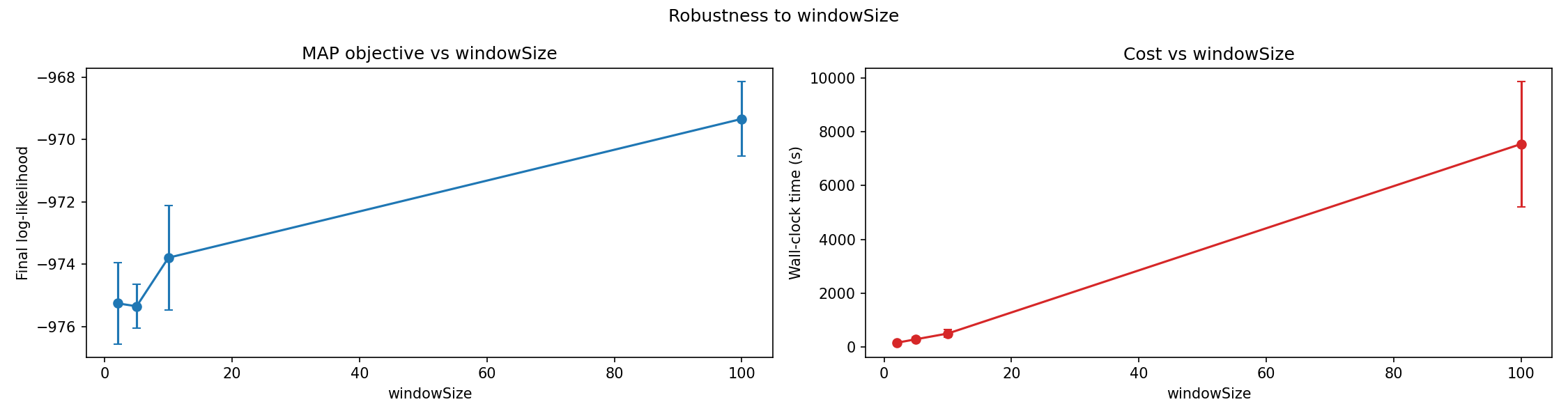}
    \caption{Left: final log-likelihood using different parameters of MAP search, right: total time (in seconds). The parameters used (from top to bottom): batch size, number of chains, and window size.}
    \label{fig:robustness_map}
\end{figure}

\section{Environmental planning}
\label{sec:envplanning}
\subsection{HAWQS Data}
As described in Section 5.1 of the article, the data generated from the SWAT simulation is aggregated and binned annually, resulting in a total of 154 graphs (Fig. \ref{fig:all-data}) from a specific watershed (Fig. \ref{fig:watershed}). The nodes representing subbasins (referred to as water nodes) contain the simulation data, including water flow, temperature, nitrogen concentration, and many other parameters. Additionally, the dataset provides various characteristics of each subbasin. In our case, we encode a boolean feature for each water node to indicate the presence of a reservoir. The non-agricultural land nodes have 12 attributes, representing different land cover types. In the SWAT simulation, they are referred to as BERM (Bermudagrass), FESC (Fescue), FRSD (Deciduous Forest), FRST (Mixed Forest), RIWF (Riparian Forested Wetlands), RIWN (Riparian Non-Forested Wetlands), UPWF (Upland Forested Wetlands), UPWN (Upland Non-Forested Wetlands), and WATR (Open Water Bodies).

\begin{figure}
    \centering
    \includegraphics[width=\linewidth]{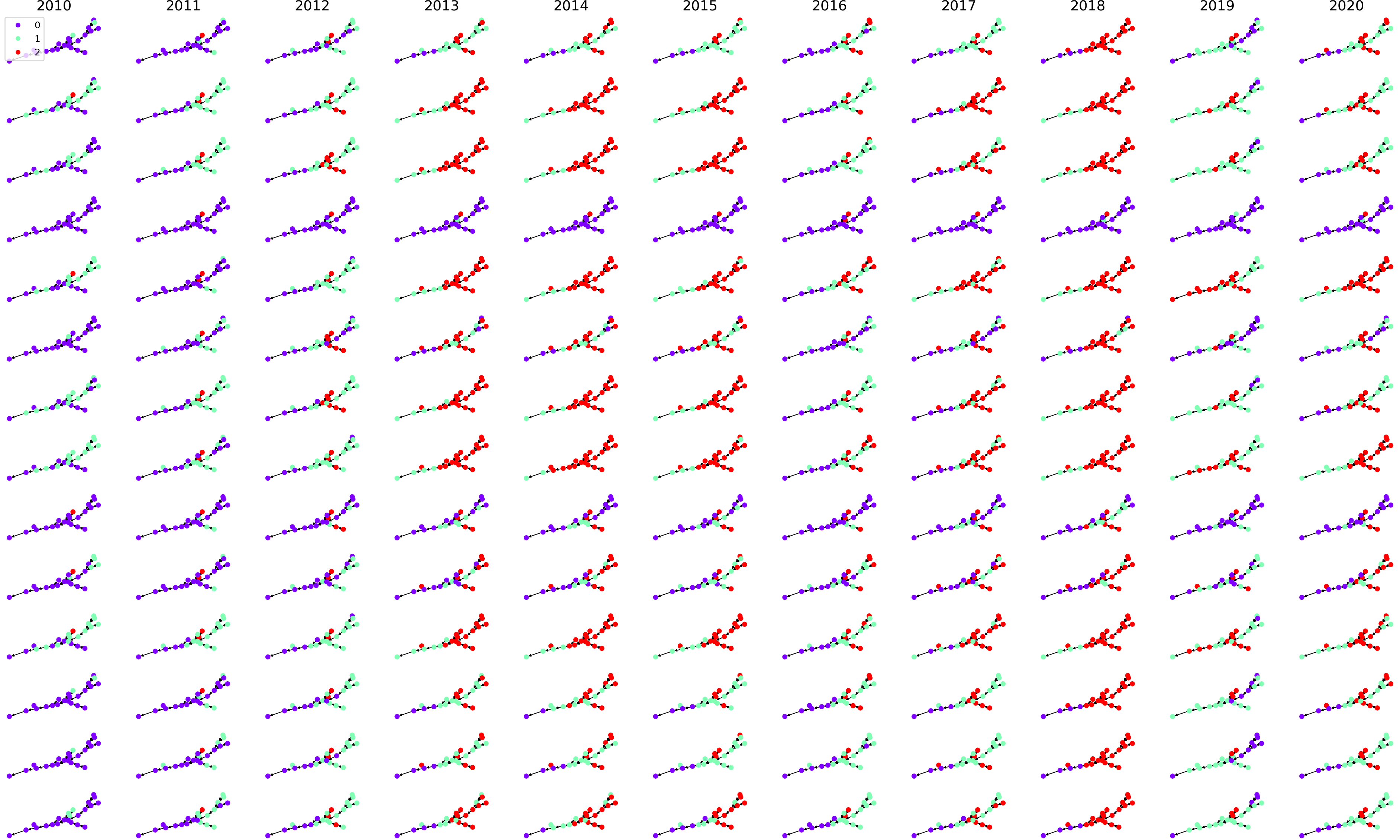}
    \caption{All the graphs generated from the HAWQS simulations. The rows represent one crop scenario (14) while the columns the different years (from 2010 to 2020). The color of the nodes represents the corresponding value of pollution for each subbasin (purple: low, green: medium, red: high). The land nodes are not shown here.}
    \label{fig:all-data}
\end{figure}

\newpage

\subsection{RBN for Node classification under homophilic and heterophilic distribution}

The {\tt Label} attribute is modeled by a GNN. The {\tt @} symbol indicates identifiers for sub-formulas of the model, to be distinguished from actual attributes and relations in the heterogeneous graph (analogous to the $F_{\emph{triangle}}$ element in Figure \ref{fig:rbnillus}).

{\tt hom\_hat} corresponds to $\hat{\emph{LH}}$ in the article, {\tt @predict\_hom} to $\emph{LH}_{\boldsymbol{y}}$, and
{\tt overline\_LH} to $\overline{LH}$.

\begin{lstlisting}[basicstyle=\ttfamily\footnotesize, frame=single]

Label(i) = COMPUTEWITHTORCH config_model [/path/to/model/]            
            COMBINE attr(n) USINGTORCH FORALL n WHERE edge(n, i),
            % second-layer attributes
            COMBINE attr(m) USINGTORCH FORALL m WHERE (edge(m, n) & edge(m, i));

@predict_hom([node]i) = 
        COMBINE 
            Label(i) = Label(j)
        WITH mean
        FORALL j
        WHERE (edge(i,j)|edge(j,i));

@diff(i) = (hom_hat(i) + (-1*@predict_hom(i)));

@lowerbound([node]i) = 
        COMBINE 
	   (4.39 * @diff(i)),
	2.2
	WITH l-reg
	FORALL;
	
@upperbound([node]i) = 
        COMBINE 
	   (-4.39* @diff(i)),
	2.2
	WITH l-reg
	FORALL;

overline_LH([node]i) =  (@upperbound(i) * @lowerbound(i))
        
\end{lstlisting}

\newpage
\subsection{RBN for the environmental optimization experiments}
\label{sec:app_hawqs}

\begin{lstlisting}[basicstyle=\ttfamily\footnotesize, frame=single]
LandUse([hru]l) = SOFTMAX 1,1,1,1;

Pollution(i) = COMPUTEWITHTORCH config_model [/path/to/model/]				
              % first-layer attributes
              COMBINE LandUse(la),AreaAgr(la) USINGTORCH 
                FORALL la WHERE downstream_agr(la, v),
              % second-layer attributes
              COMBINE LandUse(la),AreaAgr(la) USINGTORCH 
                FORALL la, lb WHERE (downstream(lb, v) & downstream_agr(la, lb)),

              COMBINE LandUseUrb(lu),AreaUrb(lu) USINGTORCH 
                FORALL lu WHERE downstream_urb(lu, v),
              COMBINE LandUseUrb(lu),AreaUrb(lu) USINGTORCH 
                FORALL lu WHERE (downstream(lb, v) & downstream_urb(lu, lb)),

              COMBINE SubType(v) USINGTORCH,
              COMBINE SubType(vb) USINGTORCH 
                FORALL vb WHERE downstream(vb, v);

@profit_land([hru]l) =  
    COMBINE 
        WIF LandUse(l)=CORN THEN (5.0*AreaAgr(l)) ELSE 0.0,
        WIF LandUse(l)=COSY THEN (2.0*AreaAgr(l)) ELSE 0.0,
        WIF LandUse(l)=PAST THEN (1.0*AreaAgr(l)) ELSE 0.0,
        WIF LandUse(l)=SOYB THEN (4.0*AreaAgr(l)) ELSE 0.0
    WITH sum;

@profit_sub([sub]s) =   
    COMBINE 
        @profit_land(l)
    WITH sum
    FORALL l
    WHERE hru_agr_to_sub(l,s);

@max_profit_sub([sub]s) =   
    COMBINE (5.0*AreaAgr(l))
    WITH sum
    FORALL l
    WHERE hru_agr_to_sub(l,s);

@min_profit_sub([sub]s) =   
    COMBINE (1.0*AreaAgr(l))
    WITH sum
    FORALL l
    WHERE hru_agr_to_sub(l,s);

@inv_maxmin_sub([sub]s) =
    COMBINE 
        (-1*@min_profit_sub(s)),
        @max_profit_sub(s)
    WITH invsum
    FORALL 
    WHERE true;

@target_s([sub]s) =  
    ((@profit_sub(s) + (-1*@min_profit_sub(s)))
        * @inv_maxmin_sub(s));

all_const([sub]s) = WIF 0.1
                    THEN Pollution(s)=LOW
                    ELSE @target_s(s);
\end{lstlisting}

This section describes the Relational Bayesian Network (RBN) designed to address a multi-objective optimization problem that balances economic returns with environmental quality. The Pollution is modeled by a GNN for predicting pollution probabilities alongside expert-defined constraints for profit optimization.

For each agricultural land unit $l$, the profit is computed based on its land use type, $\mathrm{LandUse}(l)$, and its agricultural area, $\mathrm{AreaAgr}(l)$. The profit contribution is defined as:
\begin{equation}
\text{@profit\_land}(l) =
\begin{cases}
5.0 \times \mathrm{AreaAgr}(l) & \text{if } \mathrm{LandUse}(l)=\mathrm{CORN},\\[1mm]
2.0 \times \mathrm{AreaAgr}(l) & \text{if } \mathrm{LandUse}(l)=\mathrm{COSY},\\[1mm]
1.0 \times \mathrm{AreaAgr}(l) & \text{if } \mathrm{LandUse}(l)=\mathrm{PAST},\\[1mm]
4.0 \times \mathrm{AreaAgr}(l) & \text{if } \mathrm{LandUse}(l)=\mathrm{SOYB}.
\end{cases}
\end{equation}
These contributions are aggregated by summing over the relevant lands. \\
The profit constraint is modeled by the \texttt{@target\_(s)} formula, which returns the min-max normalized profit for each subbasin $\in [0,1]$, based on the minimum and maximum possible profit values for that subbasin.\\
The overall constraint of the model is given by:
\[
\text{all\_const}(s) = \lambda * \mathrm{Pollution}(s) + (1-\lambda)*(@target(s)), \\
\]
Here, the constant \(0.1\) in the \texttt{WIF} represents the \(\lambda\) parameter from the article.
\texttt{Pollution(s)} is the probability formula represented by the GNN ($X_1$) and \texttt{@target\_s(s)} represents the second objective for maximizing the profit ($X_2$). $\texttt{all\_const(s)}$ corresponds to $\eta_X$ in the main text.

\end{document}